\documentclass[twoside,11pt]{article}

\usepackage{blindtext}

\usepackage{amsmath}
\usepackage{anyfontsize}
\usepackage{tikz}
\usepackage{xcolor}
\usepackage{colortbl}
\usepackage{titletoc}
\usepackage{float}
\usepackage{tcolorbox} 
\usepackage{listings}

\lstset{
    basicstyle=\ttfamily\small,
    breaklines=true,
    frame=single,
    numbers=left,
    numberstyle=\tiny,
    showstringspaces=false,
    keywordstyle=\color{blue},
    commentstyle=\color{green!60!black},
    stringstyle=\color{red},
    language=Python  }

\definecolor{customredlight}{HTML}{F05039}
\definecolor{customred}{HTML}{F5977D}
\definecolor{custombluelight}{HTML}{3D65A5}
\definecolor{customblue}{HTML}{7A92C9}

\newcommand{\ltwo}[1]{\lVert #1 \rVert_2}

\newcommand{\R}{\mathbb{R}}
\newcommand{\C}{\mathbb{C}}

\definecolor{ocra}{RGB}{193, 143, 50}
\definecolor{boldcolor}{gray}{0}
\newcommand{\lightbold}[1]{\textbf{\textcolor{boldcolor}{#1}}}

\usepackage[normalem]{ulem}

\usepackage{jmlr2e}
\usepackage{lastpage}
\usepackage{subcaption}

\ShortHeadings{A Neural Kernel Theory of Symmetry Learning}{A. Perin and S. Deny}
\firstpageno{1}

\begin{document}

\jmlrheading{26}{2025}{1-\pageref{LastPage}}{12/24; Revised
5/25}{6/25}{24-2175}{Andrea Perin and St\'ephane Deny}

\title{On the Ability of Deep Networks to Learn Symmetries from Data -- A Neural Kernel Theory}

\author{\name Andrea Perin \email andrea.perin@aalto.fi \\
       \addr Department of Computer Science\\
       Aalto University\\
       Espoo, Finland
       \AND
       \name St\'ephane Deny \email stephane.deny.pro@gmail.com \\
       \addr Department of Computer Science\\
       Department of Neuroscience and Biomedical Engineering\\
       Aalto University\\
       Espoo, Finland}

\editor{Aapo Hyvärinen}

\maketitle

\begin{abstract}Symmetries (transformations by group actions) are present in many datasets, and leveraging them holds considerable promise for improving predictions in machine learning. In this work, we aim to understand when and how deep networks---with standard architectures trained in a standard, supervised way---\emph{learn} symmetries from data. Inspired by real-world scenarios, we study a classification paradigm where data symmetries are only \emph{partially observed} during training: some classes include all transformations of a cyclic group, while others---only a subset. We ask: under which conditions will deep networks correctly classify the partially sampled classes?\smallskip 

In the infinite-width limit, where neural networks behave like kernel machines, we derive a \emph{neural kernel theory of symmetry learning}. The group-cyclic nature of the dataset allows us to analyze the Gram matrix of neural kernels in the Fourier domain; here we find a simple characterization of the generalization error as a function of class separation (signal) and class-orbit density (noise). This characterization reveals that generalization can only be successful when the local structure of the data prevails over its non-local, symmetry-induced structure, in the kernel space defined by the architecture. This occurs when (1) classes are sufficiently distinct and (2) class orbits are sufficiently dense.\smallskip 

We extend our theoretical treatment to any finite group, including non-abelian groups. Our framework also applies to equivariant architectures (e.g., CNNs), and recovers their success in the special case where the architecture matches the inherent symmetry of the data. Empirically, our theory reproduces the generalization failure of finite-width networks (MLP, CNN, ViT) trained on partially observed versions of rotated-MNIST. We conclude that conventional deep networks lack a mechanism to learn symmetries that have not been explicitly embedded in their architecture \emph{a priori}. In the future, our framework could be extended to guide the design of architectures and training procedures able to learn symmetries from data. \smallskip 

All code is available at \href{https://github.com/Andrea-Perin/gpsymm}{\texttt{https://github.com/Andrea-Perin/gpsymm}}.
\end{abstract}

\begin{keywords}
deep learning, symmetry, neural kernel methods, gaussian processes, spectral methods
\end{keywords}

\section{Introduction}
The ability to make accurate predictions depends on how well one understands the structure of a problem. Physics, in particular, has achieved remarkable success in predicting natural phenomena by capturing their structure in compact mathematical equations. A key aspect of this structure lies in the concept of \emph{symmetry} \citep{Noether1918,gross1996role}. Symmetries describe transformations by group actions that do not affect the identity of objects. For example, a chair remains a chair whether it is presented upright or upside down (a transformation in $SO(3)$). In the context of deep learning, nowadays widely used for prediction tasks, a crucial question is whether deep networks have mechanisms to recognize and exploit data symmetries for effective predictions. For example, can deep networks learn to predict the identity of objects independently of their viewpoint? And importantly, do they need to be exposed to \emph{all object classes in all possible poses} during training, or is exposure to some classes in some poses sufficient for them to \emph{capture the concept} of pose invariance?

\medskip

The field of geometric deep learning develops theories and methods that enable neural networks to take advantage of problem symmetries \citep{bronstein2021geometricdeeplearninggrids}.  In particular, equivariant neural networks \citep{cohenICML2016} can ensure representation equivariance and classification invariance to prespecified symmetries. However, equivariant architectures require one to know the symmetry of the problem in advance. Here, we aim to understand whether conventional deep networks---which have not been explicitly designed to capture a prespecified symmetry---can \emph{learn symmetries directly from data}.

\medskip

We focus on a supervised classification paradigm where the symmetries of the data are only partially observed during training : for some classes, all possible transformations of a cyclic group are observed during training, while for other classes only a subset of transformations is observed. This scenario replicates realistic real-world learning problems. For example, a child during their development sees a few objects in all possible 3D poses (e.g., the toys they can manipulate), and many objects in only some poses (e.g., heavy furniture). \emph{Should we expect a deep network trained on such a data diet to generalize to the partially sampled classes (e.g., recognize a piece of furniture seen from an unusual viewpoint at test time)?} In this work, we study the conditions under which deep networks correctly extrapolate the symmetry invariance to the partially sampled classes.

\medskip

To characterize the generalization capabilities of deep networks in the presence of data symmetries, we rely on the equivalence between infinitely wide neural networks and kernel machines \citep{Neal1996, lee2018deep, jacotNIPS2018}. Typical neural kernels (MLP, CNN) greatly simplify when computed over a dataset generated by a cyclic group action (they become circulant), allowing an interpretable analysis in the Fourier domain of when and how symmetries are correctly learned. \emph{We find that the generalization behavior of networks is predicted by a simple ratio of inverse kernel frequency powers computed over orbits of the cyclic group.}  
Our analysis of this formula makes clear that deep networks (as described by their kernel equivalent) are \emph{a priori} unable to leverage data symmetries for generalization. Successful generalization is yet possible, in cases where the local structure of the data prevails over its non-local, symmetry-induced structure. This happens in particular (1) when classes are sufficiently well separated in kernel space, and (2) when the symmetric structure of the data is sufficiently local in kernel space. Importantly, while there is no guaranteed equivalence between finite-width networks and their infinite-width counterpart, our spectral kernel theory captures well the behavior of normally trained finite-width networks in all our experiments on rotated-MNIST, a version of the well-known handwritten character dataset \citep{mnist} augmented with rotations.
\medskip

\paragraph{Outline.} In Section~\ref{sec:symmetry_learning}, we briefly review prior theoretical work and practical methods developed for deep learning in relation to symmetries. In Section~\ref{sec:illustration}, we illustrate a simple symmetry learning problem, where deep networks with common architectures (MLP, CNN, ViT) are trained and evaluated on partial views of rotated-MNIST. In Section~\ref{sec:theory}, we analyze theoretically how kernel machines in general---and neural networks in particular---behave on datasets presenting symmetries. We build our theory through scenarios of increasing complexity, from a simple Gaussian kernel applied to a circular dataset, to deep neural kernels with or without equivariant architectures (MLPs and CNNs) applied to an affine group transformation such as rotated-MNIST. In all cases, we show both theoretically and empirically the inadequacy of conventional neural architectures trained with supervision to learn symmetries that have not been embedded in their kernel design \emph{a priori}. In Section~\ref{sec:discussion}, we discuss how our work provides theoretical tools which could be helpful in identifying and discovering architectures and training procedures able to learn symmetries from data. In Appendix \ref{app:nonabelian}, we extend our theory to any finite group, including non-abelian groups.

\section{Prior Work on Deep Learning, Symmetries, and Neural Kernels}\label{sec:symmetry_learning}

The interplay between deep learning and symmetries has been studied extensively, both empirically and theoretically, for finite-width networks and in the infinite-width limit (where kernel analogies apply). Here we attempt a condensed review of these efforts. In the following, our operational definition of \emph{dataset symmetries} is a set of transformations by group action that do not change the class identity of objects present in a dataset. These group transformations may act directly in the native space of the dataset (e.g., image translations) or in a latent space affecting the dataset (e.g., images of 3D-rotating objects).\medskip

\lightbold{Empirically, the inability of conventional neural networks to capture symmetries present in datasets has been observed in many different contexts}. Conventional networks have been shown to fail to extrapolate a simple periodic function \citep{ziyin2020neural}. In vision, studies have investigated the generalization capabilities of deep networks to recognize objects undergoing changes in pose \citep{strikewithapose,madan2022ood, Abbas_Deny_2023, siddiqui2023investigatingnature3dgeneralization}, size \citep{ibrahim2023robustness}, mirror symmetry \citep{sundaram2022symmetry}, lighting conditions \citep{madan2021small}, and shown a substantial degradation of network performance in these conditions. Recently, \cite{ollikka2025humans} also showed that humans beat state-of-the-art deep networks and most vision-language models at recognizing objects in unusual poses. In language, compositional generalization has also been framed as capturing permutation symmetries, which traditional language models have been shown to fail at. \emph{In our work, we study from a theoretical point of view why networks fail to generalize on data symmetries despite being exposed to them partially during training. To do so, we characterize network behavior on symmetric datasets in the kernel limit.}\medskip

\lightbold{Network architectures have been designed to be equivariant to prespecified symmetries} \citep{gensNIPS2014,cohenICML2016, Worrall2017, cohen2019gauge, cohen2020generaltheoryequivariantcnns, bekkers2021bsplinecnnsliegroups,weiler2021coordinateindependentconvolutionalnetworks}, or conserve certain physical quantities \citep{greydanus2019hamiltonian,cranmer2020lagrangian,finzi2020generalizing, vanderouderaa2024noethersrazorlearningconserved}. Network architectures have also been designed to respect the topology of a problem \citep[for a review, see][]{hajij2023topologicaldeeplearninggoing}. Relaxed and adaptive equivariance schemes have also been proposed \citep{elsayed2020revisiting,zhou2021metalearning,d2021convit,wang2022approximately,yehPMLR2022,kaba2023symmetry,vanderouderaa2023learninglayerwiseequivariancesautomatically}. The failure of traditional convolutional architectures to be fully equivariant has been observed \citep[e.g.,][]{azulay2019deep}, and some measure of approximate equivariance proposed \citep{gruver2023the}. Others have investigated how equivariant representations affect the capacity of group-invariant linear readouts \citep{Farrell2021}. Beyond equivariance, symmetries inherent to the architecture of deep networks have also been studied, and shown to affect their learning dynamics and solutions \citep{tanakaNIPS2021, pmlr-v139-simsek21a,ainsworth2023git}. \emph{In our work, we study the ability of neural networks to generalize on datasets presenting symmetries to which they have \emph{not} been designed to be equivariant.}\medskip

\lightbold{The effects of training networks on symmetry-augmented versions of a dataset have also been scrutinized}. Recently, the efficiency of training networks on augmented data vs. enforcing equivariance in the architecture has been carefully characterized at scale \citep{brehmer2024doesequivariancematterscale} and on a variety of datasets \citep{vadgama2025probingequivariancesymmetrybreaking}. \cite{moskalevPMLR2023a} clarify that training on augmented datasets does not produce genuine equivariance, in the sense that the trained networks may not be equivariant outside the training distribution. Recent theoretical results show, however, that training on a perfectly augmented dataset produces emergent equivariant representations in \emph{ensembles} of networks, in and outside the training set, both in the finite \citep{nordenfors2024ensemblesprovablylearnequivariance} and infinite-width limit \citep{gerkenPMLR2024a}. \emph{In our work we ask a related but different question: we study the generalization capabilities of networks trained on a dataset containing unknown symmetries---where no systematic augmentation scheme or equivariant architecture has been enforced. The symmetries in the training dataset may be well represented for some classes, but not others. The question we ask is then: how well will networks generalize the symmetry invariance to the partially sampled classes?}\medskip

\lightbold{Beyond equivariant and augmentation methods to deal with prespecified symmetries, approaches to learning the symmetries present in data have been proposed} \citep{culpepper2009learning, jaderberg2015spatial, sohl2010unsupervised, dupont2020equivariant,benton2020learnedaugmentations, connor2020representing, zhou2021metalearning, kellerNEURIPS2021, rey2023equivariantrepresentationlearningpresence, sanborn2023bispectral, connor2024learning,   vanderlinden2024learningsymmetriesweightsharingdoubly,yang2024latentspacesymmetrydiscovery} \lightbold{and studied theoretically} \citep{Anselmi2019,pfauNEURIPS2020, anselmi2023data,marchetti2024harmonicslearninguniversalfourier}.  Self-supervised learning approaches build invariance \citep{ zemelNIPS1990,chen2020simple,zbontar2021barlow,ibrahim2022robustselfsupervisedlearninglie} or equivariance \citep{garrido2024learningleveragingworldmodels} to predefined symmetries by feeding augmentations of the dataset to two identical versions of the network and matching their latent representations. Disentanglement has also been framed as the problem of learning symmetries from data \citep{higgins2018definitiondisentangledrepresentations,mercatali2022symmetry}. Subsequent work has shown, however, that topological defects result from attempting to learn disentangled representations of even the simplest symmetries such as affine transformations \citep{bouchacourt2021addressingtopologicaldefectsdisentanglement,esmaeili2023topologicalobstructionsavoid}. \emph{In our work, we focus on characterizing the symmetry learning ability of conventional architectures (MLP, CNN) trained with traditional supervised learning. However, our theoretical framework could potentially be extended to characterize the ability of these other methods to capture the symmetries of data.}\medskip

\lightbold{Beyond symmetries, the interplay between machine learning and data geometry has been studied for many types of data structure: compositional} \citep{sabour2017dynamic,schott2022visual,liang2024how,wiedemer2024compositional,lippl2024doescompositionalstructureyield,yang2025swingbydynamicsconceptlearning}\lightbold{, hierarchical} \citep{saxe2019mathematical, mel2021theory}\lightbold{, data lying on a manifold} \citep{goldt2020modeling,gerace2022probing} \lightbold{or on separate manifolds} \citep{chung2018classification,cohen2020separability,sorscher2022neural}. \emph{Our work focuses on datasets presenting symmetries (i.e., group-action induced structures), an important type of structure found in the physical world and present in many datasets.} \medskip

\lightbold{Two main kernel theories of deep learning exist, which identify deep networks with kernel machines} \citep{Neal1996, jacotNIPS2018}.\footnote{We refer the reader unfamiliar with neural kernel theories to the excellent lecture notes of Adityanarayanan Radhakrishnan available online on this topic (\url{https://aditradha.com/lecture-notes/}). We also provide an intuitive explanation of neural kernel theories in Appendix \ref{app:intro_ntk}.} Both theories apply in the limit where the networks have infinite width (infinite number of neurons within a layer or infinite number of channels in convolutional networks), and both identify training a network with doing simple Gaussian Process regression over a fixed kernel determined by the network architecture.
The Neural Network Gaussian Process (NNGP) theory \citep{Neal1996} characterizes the distribution of functions produced by random draws of the network parameters (its weights and biases), assuming a distribution from which the parameters are drawn (typically i.i.d. Normal). Conditioned on a training set, the NNGP characterizes the distribution of solutions (i.e., functions passing through the training input-label pairs) under the assumed distribution of parameters, and can be used to make predictions on a testing set.
The Neural Tangent Kernel theory (NTK) \citep{jacotNIPS2018} describes the distribution of solutions obtained by training all layers with gradient descent, from random initial conditions. It is allegedly the most realistic theory (although see \cite{avidan2025connectingntknngpunified} for a more nuanced account) and the one we use in this work. Nonetheless, we repeated our analyses using the NNGP kernel instead of the NTK and it did not change our conclusions (not shown). Neural kernel theories assume networks to have infinite width, which is a big approximation in practice. For example, \cite{fortNEURIPS2020} show that the kernel of finite-width networks changes rapidly during the first few epochs of training to a more favourable kernel for the task, breaking the frozen kernel assumption which is only provable in the infinite-width limit. \emph{However, in all our experiments, we find that kernel theories adequately capture the generalization behavior of finite-width, normally trained networks on datasets presenting symmetries.} 
\medskip

\lightbold{Finally, in an insightful line of work,} \cite{bordelonPMLR2020a,canatar2021spectral} \lightbold{relate generalization properties of infinite-width deep networks to spectral properties of their kernel}. They do not, however, explore the specific interplay between spectral kernel properties and data symmetries. \emph{In our work, we find that the symmetric nature of the dataset considerably simplifies the spectral description of neural kernels, allowing better interpretability of the factors on which generalization depends.} \\

We next propose a simple empirical study to illustrate the problem of symmetry learning and highlight the failure of a range of common architectures on this problem.

\section{Illustration of a Symmetry Learning Problem}\label{sec:illustration}

\begin{figure}[!b]
  \centering
  \includegraphics[width=1\columnwidth]{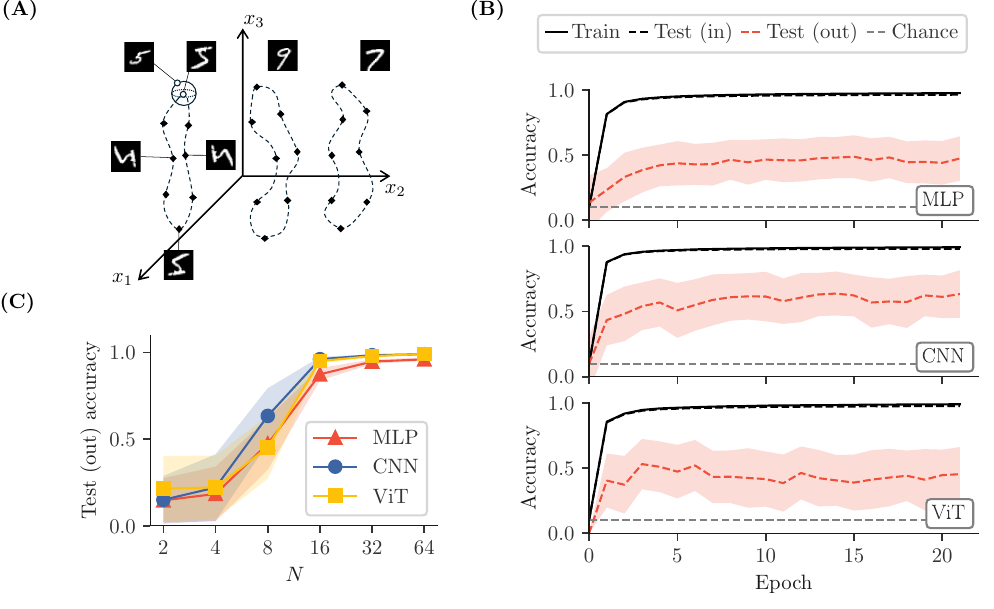}
  \caption{\textbf{Common deep network architectures fail to extrapolate symmetries from a partially observed version of rotated-MNIST.} \textbf{A}: A conceptual sketch of the learning task. For samples in the leave-out class (digit ``5'' in this example), the upright pose is not included in the training set (empty circles). For samples in the remaining classes, all poses are included in the training set (full circles). A model that can generalize rotations should classify the missing upright samples correctly. \textbf{B}: The accuracies of the three models tested respectively on the training set, \emph{in-test} set (normal i.i.d. test set) and \emph{out-test} set (i.e., left-out pose of the leave-out class). Chance levels (10\%) are also reported as a baseline. Error shades represent 95\% confidence intervals computed over all tested leave-out classes (n = 10). \textbf{C}: Out-test accuracy for the three models as a function of the number of angles in the rotation orbit. Error shades as in \emph{B}.} 
  \label{fig:symm_learning}
\end{figure}
To illustrate the problem of symmetry learning, we train a range of networks with different architectures (MLP, ConvNet, ViT-S \citep{dosovitskiy2020image}) on partially observed versions of rotated-MNIST (details of the architectures in App.~\ref{app:arch}).

We start by constructing the rotated-MNIST dataset: for all samples in MNIST, we generate $N$ rotated samples, where the rotation angles are multiples of $2\pi/N$.
We choose a digit class, which we refer to as the \emph{leave-out class} (Fig.~\ref{fig:symm_learning}A).
We define the ``training'' and the ``in-test'' splits as a 90/10 random split of the whole rotated MNIST dataset, \emph{except} the upright samples of the leave-out class.
These samples, instead, constitute what we call the ``out-test'' split.\footnote{Note that predicting the labels of the out-test split is an out-of-distribution (OOD) generalization task. In this work, we study whether networks are able to perform this OOD task in virtue of the symmetric nature of the dataset.}

We report the accuracies of the three models in the case of 8 rotation angles in Fig. \ref{fig:symm_learning}B.
The in-test accuracy closely tracks the training set accuracy; however, the out-test accuracy is considerably lower for all the considered models. Training for very long times ($>$1000 epochs) did not improve out-test accuracy (no grokking happens). 

We find that increasing the number of angles leads to better out-test performance across models (Fig. \ref{fig:symm_learning}C).
Successful generalization is achieved when the number of angles sampled is sufficiently large. \\

In the following, we propose a theory to understand the generalization abilities of deep networks on problems involving learning from partially observed symmetries. Fully developed, our theory gives a precise account of the generalization behavior of conventionally trained networks on rotated-MNIST (Section \ref{sec:multi} and Fig. \ref{fig:multiseedclass}).

\section{A Neural Kernel Theory of Symmetry Learning}\label{sec:theory}

Essentially, the argument of this paper relies on the two following observations: (1) in the Neural Tangent Kernel (NTK) limit, training deep networks is analog to performing kernel regression; (2) kernel regression has a greatly simplified expression on datasets presenting symmetries, allowing a simple geometric interpretation of the factors leading to generalization (or lack thereof). We provide in Appendix \ref{app:intro_ntk} a brief and intuitive description of NTK. We present below the main theoretical result of the paper, which consists in the simplified expression for kernel regression on a symmetric dataset.

\subsection{General Definitions}
We start by recalling basic definitions relating to kernel methods.
\begin{definition}[Kernel Function]
A kernel function \( k: \mathcal{X}\times\mathcal{X}\to \R \) takes two inputs \( x, x' \in \mathcal{X} \) and returns a scalar value representing their similarity. It can be defined as the inner product between the images of the inputs under a certain feature map \( \varphi: \mathcal{X}\to\mathcal{H} \):
\[
k(x, x') = \langle \varphi(x), \varphi(x') \rangle
\]
Where:
\begin{itemize}
    \item \( \varphi(x) \) is a mapping from the original input space \(\mathcal{X}\) to a higher-dimensional feature space \(\mathcal{H}\) (possibly infinite-dimensional).
    \item \( \langle \cdot, \cdot \rangle \) represents the inner product in \(\mathcal{H}\).
\end{itemize}
Knowing the kernel function allows one to skip explicit computation of the mapping \(\varphi(x)\) and directly obtain the inner product.

 \end{definition}

 \begin{definition}[Gram Matrix]
 Given a dataset \( \{ x_1, x_2, \dots, x_N \} \) with \( N \) data points, and a kernel function \( k(x, x') \), the \emph{Gram matrix} \( K \) is defined as the matrix of pairwise kernel evaluations between all pairs of data points:
\[
K_{ij} = k(x_i, x_j)
\]
This matrix captures the pairwise relationships between all data points in the feature space induced by the kernel.
\end{definition}
\begin{definition}[Circulant Gram Matrix]
A Gram matrix \( K \) is said to have a \textit{circulant structure} if each row of the matrix is a cyclic shift of the previous row. Specifically, the matrix \( K \) is circulant if its entries satisfy the following condition:
\[
K_{ij} = c_{(i-j) \mod N}
\]
where \( N \) is the number of data points (or the size of the matrix), and \( c_{k} \) represents the entries of a vector \( \mathbf{c} = \{ c_0, c_1, \dots, c_{N-1} \} \), which is periodic with period \( N \). This means the matrix \( K \) can be written in the form:
\[
K = \begin{bmatrix}
c_0 & c_1 & c_2 & \dots & c_{N-1} \\
c_{N-1} & c_0 & c_1 & \dots & c_{N-2} \\
c_{N-2} & c_{N-1} & c_0 & \dots & c_{N-3} \\
\vdots & \vdots & \vdots & \ddots & \vdots \\
c_1 & c_2 & c_3 & \dots & c_0
\end{bmatrix}
\]
In other words, the matrix entries are determined by a single vector \( \mathbf{c} \), and each row is a shifted version of this vector, where the shifts are cyclic (i.e., they wrap around the matrix). 
\end{definition}
\subsection{Spectral Error}
When the Gram matrix is circulant over a dataset made of two interleaved classes, we can derive a simplified formula for kernel regression.
\begin{proposition}\label{prop:spectral_error}
Consider a kernel function and its Gram matrix on a dataset. The dataset has an even number of points $2N$, where the data points are from two classes ordered in an interleaved manner with labels \( +1 \) and \( -1 \). Assume that the Gram matrix is circulant on this ordered dataset. Recalling that a circulant matrix is diagonalized by the \emph{Discrete Fourier Transform} (DFT), kernel regression on a missing point from this dataset will result in the following error, referred to as the \emph{spectral error}:
\begin{align}
    \varepsilon_s = \frac{\lambda_N^{-1}}{\langle \lambda^{-1} \rangle}, \label{eq:spectral_error}
\end{align}
where \( \lambda_N \) denotes the eigenvalue of the largest frequency of the Gram matrix and \( \langle \lambda^{-1} \rangle \) denotes the average of the inverse eigenvalues over all frequencies (from $-N$ to $N$). 
\end{proposition}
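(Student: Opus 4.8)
The plan is to express the held-out prediction error spectrally in two moves: (i) invoke the standard leave-one-out identity for kernel interpolation, which writes the error at the missing point as a ratio involving only the full inverse Gram matrix $K^{-1}$ and the label vector $y$; and (ii) exploit the circulant structure to evaluate both parts of this ratio in the Fourier (DFT) basis. Throughout I order the $2N$ points so that the missing point is indexed last ($i = 2N$), and I take the label vector to be the interleaved pattern $y = (+1,-1,+1,-1,\dots)$.

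First I would establish the leave-one-out identity by a block inversion. Partition the Gram matrix as
\[
K = \begin{pmatrix} A & b \\ b^{\top} & d \end{pmatrix},
\]
where $A$ is the $(2N-1)\times(2N-1)$ Gram matrix of the retained points, $b$ collects the kernel values between the retained points and the missing point, and $d = k(x_i,x_i)$. Ridgeless kernel regression trained on the $2N-1$ retained points predicts $\hat f(x_i) = b^{\top} A^{-1} y_{\mathrm{tr}}$, where $y_{\mathrm{tr}}$ are the retained labels. Inverting $K$ via the Schur complement $s = d - b^{\top}A^{-1}b$ shows that the last row of $K^{-1}$ equals $s^{-1}(-b^{\top}A^{-1},\,1)$, so the last diagonal entry is $(K^{-1})_{ii} = s^{-1}$ while the last entry of $K^{-1}y$ is $(K^{-1}y)_i = s^{-1}\bigl(y_i - b^{\top}A^{-1}y_{\mathrm{tr}}\bigr)$. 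Dividing the two cancels the Schur complement and yields
\[
\varepsilon_i = y_i - \hat f(x_i) = \frac{(K^{-1}y)_i}{(K^{-1})_{ii}}.
\]
I expect this block-inversion step to be the main obstacle, since it is where one must check that $K$ (and hence $A$) is invertible in the ridgeless limit; everything afterwards is routine circulant spectral algebra.

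It then remains to evaluate the numerator and denominator in the Fourier basis that diagonalizes the circulant $K$ with eigenvalues $\lambda_k$. The numerator simplifies immediately, because the interleaved label vector $y$ is precisely the Nyquist-frequency DFT eigenvector of $K$: thus $Ky = \lambda_N y$, hence $K^{-1}y = \lambda_N^{-1} y$, so $(K^{-1}y)_i = \lambda_N^{-1} y_i$. For the denominator, note that $K^{-1}$ is itself circulant with eigenvalues $\lambda_k^{-1}$, and the diagonal of any circulant matrix is constant and equal to the mean of its eigenvalues; hence $(K^{-1})_{ii} = \langle \lambda^{-1}\rangle$ independently of $i$. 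Substituting both into the identity gives $\varepsilon_i = \lambda_N^{-1} y_i / \langle \lambda^{-1}\rangle$, and since $|y_i| = 1$ the magnitude of the error is exactly the claimed spectral error $\varepsilon_s = \lambda_N^{-1}/\langle \lambda^{-1}\rangle$. The frequency range written as $-N$ to $N$ is simply the symmetric relabeling of the $2N$ DFT modes, with the Nyquist mode $N$ carrying the interleaved label pattern.
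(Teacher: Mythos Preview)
Your proof is correct, but the route is genuinely different from the paper's primary derivation. The paper proves the result via a geometric argument: it interprets kernel regression as conditioning a multivariate Gaussian, whitens the covariance ellipsoid to a sphere using $K^{-1/2}$, derives an orthogonality condition between the (whitened) label vector and the direction of the unknown coordinate, and then grinds through the DFT index algebra explicitly to isolate $\mu_0$. This yields the same formula but through considerably more computation. The paper also gives an alternative derivation which is closer in spirit to yours: it writes $y = y_{\mathrm{true}} - \varepsilon\,\delta_0$, plugs into the Gaussian log-likelihood $y^\top K^{-1} y$, and solves $\partial_\varepsilon = 0$. That route implicitly arrives at the same ratio $(K^{-1}y)_i/(K^{-1})_{ii}$, but via likelihood maximization rather than via Schur complements.

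What your approach buys is concision and recognizability: the leave-one-out/PRESS identity is a one-line textbook fact, and after invoking it the spectral evaluation is essentially two observations (the alternating label vector is the Nyquist DFT eigenvector; the diagonal of a circulant matrix is the eigenvalue mean). What the paper's approaches buy is different: the geometric whitening proof makes the ``signal vs.\ noise'' picture (used heavily in the paper's interpretations) visually explicit, and the likelihood-based alternative is what the paper later lifts to arbitrary finite groups via the generalized Fourier transform. Your Schur-complement argument would also generalize to that setting, but the connection to the group-Fourier machinery is less immediate.
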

The result above, proven in App.~\ref{app:spectral_error}, will be leveraged throughout this study to gain insights into the geometrical quantities that matter for generalization of deep networks on symmetric datasets. 
Essentially, we will show that---on symmetric datasets---various kernels (RBF, MLP, CNN) lead to circulant (or approximately circulant) Gram matrices, allowing to use the spectral error formula (Eq.~\ref{eq:spectral_error}) to interpret the geometric factors on which generalization depends. Note that the spectral error can also be extended to multiple missing points (see App.~\ref{app:spectral_error_multiple_points}) but it leads to a less interpretable formula, so we do not consider this case. \\ \\
The spectral error formula can also be generalized to \emph{any finite group, including non-abelian groups}, see Theorem \ref{thm:nonabelian} in App.~\ref{app:nonabelian}. The resulting formula is necessarily more complex, requiring notions of representation theory and non-commutative harmonic analysis (generalized Fourier transform). For the sake of simplicity, we limit our treatment to the cyclic group in what follows.

\subsection{A Simple Case Study: Gaussian Kernel Regression on a Circular Dataset}\label{sec:gaussian_toy}

We first study the behavior of the Gaussian (i.e., RBF) kernel on a simple symmetric dataset in $\R^3$ (Fig. \ref{fig:spectrum_sketch}). We will see later that this case study captures all the relevant phenomenology for understanding the behavior of deep networks on datasets presenting symmetries.

\paragraph{Circular dataset in $\R^3$.} Consider the following dataset of $2N$ points (see Fig. \ref{fig:spectrum_sketch}A for an illustration):
\begin{align*}
    \mathcal{D} \equiv \{ (g^i. x^A, +1)\}_{i=0}^{N-1} \cup \{ (g^i.x^B, -1)\}_{i=0}^{N-1} \subset \R^n\times \R, \end{align*}
where $g$ is a representation of the generator of the cyclic group of order $N$, acting on $\R^3$ by usual matrix multiplication. This dataset is composed of two orbits of the same group, obtained for two different \emph{seed} samples, $x^A$ and $x^B$, which are labeled +1 and -1 respectively.
We \emph{order} the dataset so that points labeled +1 are interleaved with those labeled -1, while preserving individual orbit ordering.\footnote{While this ordering does not affect the results of kernel regression on a missing point (kernel methods are order-independent), it will allow us to diagonalize the Gram matrix via the Fourier transform, instead of a permutation of it, making the spectral formula directly interpretable in terms of the eigenvalues associated to each Fourier component.}
For notational convenience, we denote $x_i^{\bullet} = g^i. x^{\bullet}$. The ordered dataset can be written:
\begin{align*}
    \mathcal{D}_o & \equiv \{ (g^0.x^A, +1), (g^0.x^B, -1), (g^1.x^A, +1), \cdots (g^{N-1}.x^B, -1)\}\\
    & \equiv \{ (x_0^A, +1), (x_0^B, -1), (x_1^A, +1), \cdots (x_{N-1}^B, -1)\}.
\end{align*}

We consider the following $\R^3$ representation of the generator $g$:
\begin{align*}
    g \equiv \begin{bmatrix}
        1 & 0 & 0\\
        0 & \cos\theta & -\sin\theta\\
        0 & \sin\theta & \cos\theta
\end{bmatrix},
\end{align*}
where we define $\theta=2\pi/N$.
We consider the seed points
\begin{align}
    x^A \equiv \frac{\Delta}{2} \begin{bmatrix}
            1 \\
            0 \\
            0 
    \end{bmatrix} + \begin{bmatrix}
            0 \\
            1 \\
            0 
    \end{bmatrix}, \quad
    x^B \equiv -\frac{\Delta}{2} \begin{bmatrix}
            1 \\
            0 \\
            0 
    \end{bmatrix} + \begin{bmatrix}
            0 \\
            \cos(\theta/2) \\
            \sin(\theta/2)
    \end{bmatrix},\label{eq:low_d_orbits}
\end{align}
over which we make $g$ act by usual matrix multiplication.
Successive applications of $g$ onto the seed points generate the dataset. Note that the second seed point $x^B$ is chosen such that the two orbits are \emph{geometrically interleaved} (as depicted in fig \ref{fig:spectrum_sketch}A), which means that points of orbit $A$ are equidistant from their two nearest neighbors in orbit $B$ and vice-versa. This geometric interleaving is a necessary condition for the Gaussian kernel Gram matrix to be circulant (see below).

\begin{figure}[t]
    \centering
    \includegraphics[width=.95\textwidth]{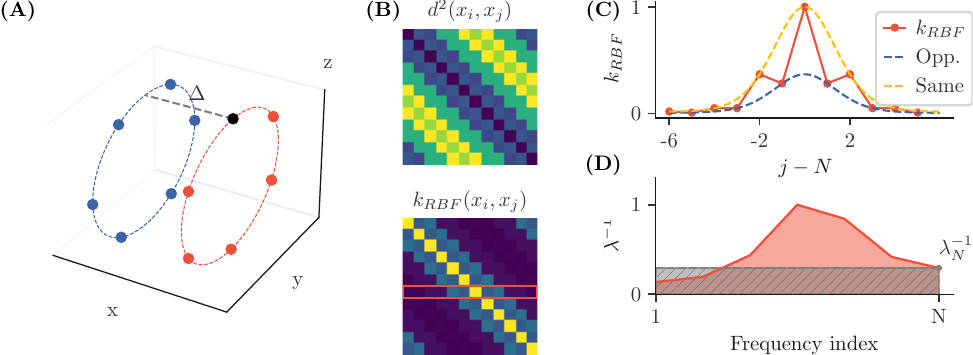}
    \caption{\textbf{Case study: Analysis of the generalization behavior of the Gaussian kernel on a circular-symmetric dataset.}
    \textbf{A}: A circular dataset is made of two sets of interleaved points in $\R^3$ belonging to two classes (denoted in red and blue). One point (in black) is left out during training, and we regress on it. 
    \textbf{B}: The pairwise distance matrix between the points is circulant (above), leading to a circulant kernel matrix (below) obtained by applying the Gaussian (RBF) kernel function to it elementwise.
    We select the $N$-th row from it, highlighted by the red rectangle; due to the circularity of the matrix, this comes without loss of information.
    \textbf{C}: A plot of the selected row of the kernel matrix. The values of the kernel alternate between two limiting curves, representing respectively the ``same label" and the ``opposite label'' kernel values.
    This alternation, due to the separation between classes, is akin to the presence of a high frequency component in the kernel function itself, were it computed over the classes perfectly interleaved ($\Delta$ = 0).
    \textbf{D}: The prediction error of a Gaussian kernel on a missing point of a circular-symmetric dataset is a simple function of its spectrum (Eq.~\ref{eq:spectral_error}).
    The reciprocal (inverse) of the positive half of the DFT of the selected row of the kernel matrix is shown.
    The grey area corresponds to the numerator of the spectral error, while the red area corresponds to the denominator.
    The ratio of the two corresponds to the prediction error incurred by kernel regression using the chosen kernel.
    The presence of the aforementioned high frequency component in the kernel is manifested in the low value of $\lambda_N^{-1}$, leading to small error (i.e., increased class separability).
    }
    \label{fig:spectrum_sketch}
\end{figure}

\begin{figure}[t!]
   \centering
   \includegraphics[width=0.95\linewidth]{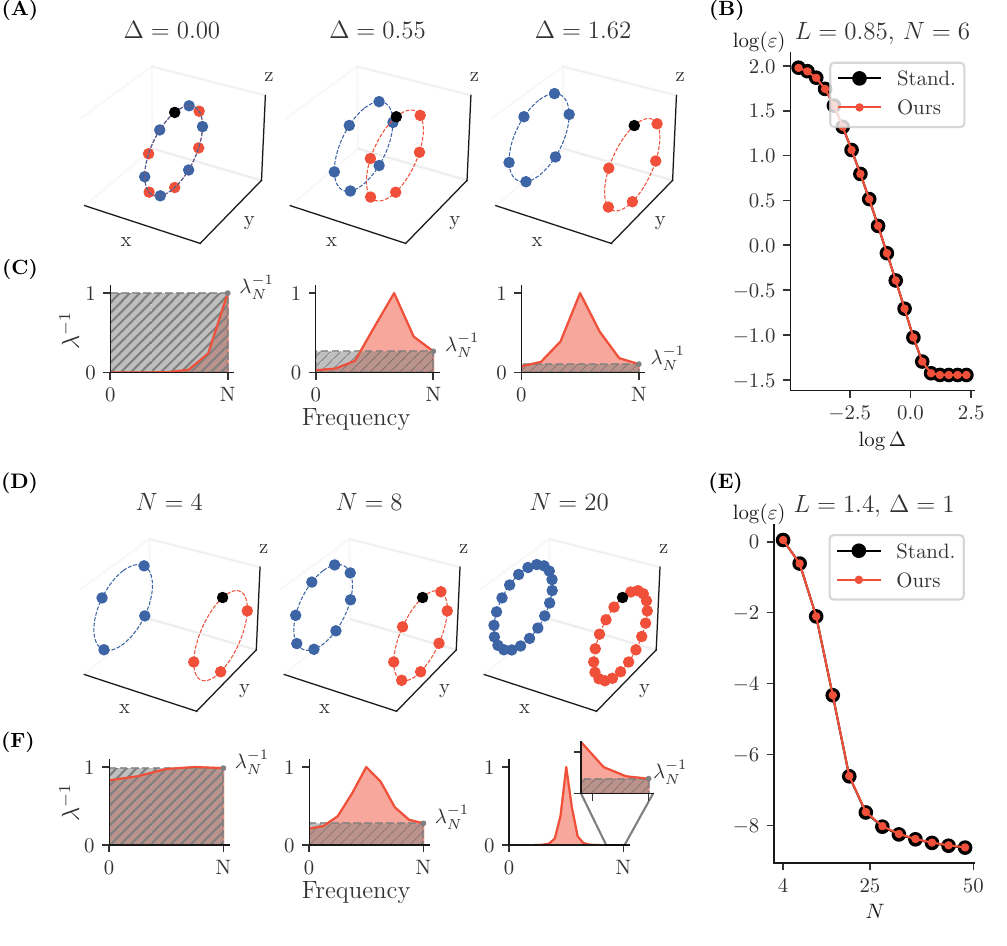}
   \caption{
   \textbf{Further geometric interpretations of the Gaussian kernel generalization behavior on a circular-symmetric dataset.}
   \textbf{A}: We progressively increase the separation $\Delta$ between two circular classes of points (respectively in blue and red). The leave-out point used for testing is in black.
    \textbf{B}: The prediction error (in black) of the RBF kernel on the leave-out point decreases as a function of $\Delta$, as predicted by our spectral formula (in red). given in Eq.~\ref{eq:spectral_error}. 
\textbf{C}: Inverse spectra of the kernel matrix for different $\Delta$. The grey (respectively, red) area corresponds to the numerator (resp., denominator) of Eq.~\ref{eq:spectral_error}. As $\Delta$ increases, the ratio between grey and red areas (i.e., the spectral error) progressively decreases, as a consequence of the last frequencies getting larger.
    \textbf{D, E, F}: Same as \emph{A,B,C} where instead of increasing $\Delta$ we increase the number of points forming each class. The ratio between grey and red areas progressively decreases with number of points, as a consequence of the middle inverse frequencies progressively diverging. }
    \label{fig:case_study}
\end{figure}

\paragraph{Gaussian (RBF) kernel regression on a circular dataset.} We consider the following regression problem: remove any one point from the dataset $\mathcal{D}_o$, and estimate the value of $y$ at the missing point.
We solve this regression problem using \textit{Gaussian process (GP)} kernel regression. We select the \textit{Gaussian or Radial Basis Function (RBF)} kernel,
\begin{align*}
    k_{RBF}(x_i, x_j) = \exp(-L^2 \ltwo{x_i - x_j}^2),
\end{align*}
where $L$ denotes the kernel's length scale.
\\

\begin{lemma}\label{lem:stationary_kernel}
The Gram matrix of a stationary kernel (a kernel that only depends on the Euclidean distance between pairs of input points) over a dataset made of two geometrically interleaved cyclic orbits (as defined above) is circulant. 
\end{lemma}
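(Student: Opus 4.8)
The plan is to show that the Gram matrix entry $K_{ij}$ depends only on $(i-j) \bmod 2N$, which is precisely the definition of circulance. Since the kernel is stationary, $K_{ij} = f(\ltwo{z_i - z_j}^2)$ for some function $f$, where $z_0, z_1, \dots, z_{2N-1}$ denotes the interleaved ordering of the $2N$ points (so even indices are orbit-$A$ points and odd indices are orbit-$B$ points). Thus it suffices to prove that the squared pairwise distance $\ltwo{z_i - z_j}^2$ depends only on the index difference modulo $2N$.

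First I would exploit the group action. Because $g$ is a representation of the generator of a cyclic group acting by orthogonal matrices (it is a rotation in $\R^3$, hence $g^\top g = I$), the action preserves inner products and therefore Euclidean distances. Concretely, for any two seed-derived points, $\ltwo{g^i . x^{\bullet} - g^j . x^{\bullet'}}^2 = \ltwo{g^{i-j}.x^{\bullet} - x^{\bullet'}}^2$ by applying $g^{-j}$ to both arguments. This already shows that distances between points within the same pair of orbits depend only on $i-j$, which handles the "same parity" cases of the interleaved indexing (both even or both odd).

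The key remaining step is to handle the cross-orbit distances (opposite parity) and to verify that the single shared period is $2N$ rather than $N$. Here I would use the \emph{geometric interleaving} hypothesis in an essential way. The construction of $x^B$ in Eq.~\eqref{eq:low_d_orbits} places the $B$-orbit exactly half a rotation step ahead of the $A$-orbit (the $\theta/2$ offset), so that $g$ applied to orbit $A$ followed by the half-shift lands on orbit $B$. Formally, I would show there is a single circulant generating vector $\mathbf{c}$ of length $2N$ such that $\ltwo{z_i - z_j}^2 = c'_{(i-j)\bmod 2N}$: the even-lag entries record same-class distances and the odd-lag entries record cross-class distances, and the equidistance property guaranteed by interleaving ensures that a cross-class distance at lag $2m+1$ is the same regardless of which absolute position it is measured from. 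Applying $f$ elementwise preserves this structure, so $K$ is circulant.

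The main obstacle I expect is the cross-orbit (odd-lag) bookkeeping: one must check carefully that shifting both a point of orbit $A$ and a point of orbit $B$ by $g$ moves them consistently so that the offset-$\theta/2$ relationship is preserved under the shift, and that this makes the cross distances functions of $i-j$ alone. The within-orbit argument is immediate from orthogonality of $g$, but the cross term is where the precise choice of $x^B$ matters, and it is the only place the "geometrically interleaved" assumption does real work. Once distances are shown to be shift-invariant with period $2N$ in the interleaved ordering, circulance of $K$ follows immediately since $f$ acts entrywise.
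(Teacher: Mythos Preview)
Your proposal is correct and follows essentially the same approach as the paper: show that pairwise Euclidean distances in the interleaved ordering depend only on the index difference modulo $2N$, then invoke stationarity of the kernel. The paper's own proof is in fact much terser than yours---it simply asserts in one line that distances depend only on $|i-j|$ without spelling out the orthogonality of $g$ or the even/odd-lag bookkeeping---so your version is a more detailed execution of the same idea, and you correctly identify that the geometric-interleaving hypothesis is exactly what is needed to make the cross-orbit (odd-lag) distances shift-invariant.
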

\begin{proof} 
First consider that the pairwise distance between points $x_i$ and $x_j$ in $\mathcal{D}_o$ is only function of their absolute index difference $|i-j|$. 
Since a \textit{stationary kernel} only depends on the Euclidean distance between pairs of input points, it follows that a stationary kernel produces a \textit{circulant} Gram matrix over $\mathcal{D}_o$.
\end{proof}

The RBF kernel is a stationary kernel. The Gram matrix of the RBF kernel being circulant over $\mathcal{D}_o$, Proposition \ref{prop:spectral_error} applies, and prediction at the missing point is given by the spectral error (Eq. \ref{eq:spectral_error}). This affords us a geometric interpretation of the generalization behavior of the kernel in the spectral (Fourier) domain.

\paragraph{Geometric interpretation} The formula of the spectral error lends itself to a simple geometric interpretation (Fig. \ref{fig:spectrum_sketch}).
Consider the spectrum obtained by applying the DFT to any one row of the kernel matrix.\footnote{Such row is composed entirely of real entries.
As a consequence, the spectrum is symmetric; we thus only show the positive half of the spectrum in the figure depictions, ranging from 0 to $N-1$.}
We plot the inverse spectrum $\lambda^{-1}$.
In this picture, the ratio in Eq.~\ref{eq:spectral_error} corresponds to a ratio between areas, respectively those of a rectangle with height $\lambda_N^{-1}$ for the numerator, and the area under the $\lambda^{-1}$ curve for the denominator.\\

\textbf{Let us now study the impact of the distance between orbits, $\Delta$, on the prediction error (Fig. \ref{fig:case_study}A-C)}. We vary $\Delta$ while keeping the number of points in one orbit, $N$, and the kernel's length scale, $L$, fixed.
We compute the prediction error by solving kernel regression for varying $\Delta$, and compare it with our formula for the spectral error (Fig.~\ref{fig:case_study}B).
As expected by their mathematical equivalence, the two agree across the investigated range of values, and decrease as a function of $\Delta$.
This decrease aligns with the following intuition: pulling the classes apart makes prediction over the missing point by a local kernel easier.

In Fig.~\ref{fig:case_study}C, we show the geometric interpretation of the spectral error.
Increasing $\Delta$, by increasing the power of the highest frequency of the kernel matrix, skews the ratio in favor of the denominator in Eq.~\ref{eq:spectral_error} (denoted by the red area).
For $\Delta=0$, the points are interleaved in the $yz$ plane, and a kernel that would successfully generalize on the missing point would need to be dominated by its highest frequency (oscillating) component. This is not the case for the RBF kernel, whose spectrum is instead dominated by the low frequency components due to its local nature.
As $\Delta$ increases, however, the kernel's spectrum effectively gains power in the highest frequency component, thus ``aligning" its properties to the requirements imposed by the regression problem.

In summary, an increase in the distance between orbits $\Delta$ leads to an increase in the power of the highest frequency term $\lambda_N$ of the kernel matrix, which leads in turn to better kernel generalization on the missing point.\\

\textbf{We now turn to study the impact of the number of angles in an orbit $N$ on the prediction error (Fig. \ref{fig:case_study}D-F).}
We compute the prediction error by solving kernel regression for varying $N$, and compare it with our formula for the spectral error (Fig.~\ref{fig:case_study}E).
As expected by their mathematical equivalence, the two agree across the investigated range of values, and decrease as a function of $N$.
This decrease aligns with the following intuition: increasing the point density of an orbit makes prediction over a missing point easier.

In Fig.~\ref{fig:case_study}F, we show the geometric interpretation of the spectral error.
Increasing $N$ skews the ratio in favor of the denominator of Eq.~\ref{eq:spectral_error} (denoted by the red area).
The DFT projects an $n$ dimensional signal over $n$ discrete frequencies.
For a local kernel, such as the RBF, which has most of its power slotted onto its first few low frequencies, sampling more points and then taking the DFT effectively means expanding the number of frequencies with low power, leading to the corresponding spectrum elements becoming vanishingly small.
This in turn causes the average of the inverse spectrum to diverge, making the denominator of Eq. \ref{eq:spectral_error} grow.\footnote{We note, however, that the presence of a nonzero $\Delta$ implies that the \textit{highest} frequency terms in general, and $\lambda_N$ in particular, remain large, leading to the ``explosion in the middle" of the inverse spectrum, which keeps the numerator in Eq.~\ref{eq:spectral_error} small.}

In summary, we see that increasing the number of angles along an orbit $N$ leads to an increase in the average inverse spectrum $\langle \lambda^{-1} \rangle$ (denominator of Eq. \ref{eq:spectral_error}), and thus to a decrease in generalization error on the missing point. 

These geometric insights will prove useful in interpreting the generalization capabilities of deep networks on high dimensional datasets possessing symmetries.

\subsection{Multi-layer Perceptrons on Rotated-MNIST}\label{sec:mlps_rotmnist}

Building on the insights from the previous section, we now turn to a more realistic setting. We study the predictions of Multi-layer Perceptrons (MLP) in the Neural Tangent Kernel limit (NTK), on partial views of rotated-MNIST, a high-dimensional dataset containing a rotational symmetry.
\paragraph{Datasets and architectures} We consider the MNIST dataset, and we augment it by means of (discrete) rotations.
More precisely, for each MNIST digit, $N$ images are generated by rotation in increments of $2\pi/N$, capturing the full rotational range.
This way, we have access to a complete \textit{rotational orbit} for each sample.
This rotation transformation corresponds, up to pixel discretization effects, to the action of a particular representation $g$ of the cyclic group of order $N$ (specifically, the regular representation), acting on the vector space $\R^{28\times 28}$ in which the greyscale MNIST images live.
Additionally, we normalize the digits to be on the sphere, making their orbits comparable with each other.

We then create datasets by forming \textit{pairs} of these orbits, each pair consisting of digits from different classes.
Given these datasets, we define a regression problem by leaving out one of the points in the two orbits, and asking to find a predictor for this missing sample. Note that unlike in the more realistic scenario presented in Fig.~\ref{fig:symm_learning}, our classes here are constructed from a single seed sample and their orbit, and we only consider two classes. However, we will show in Section \ref{sec:multi} that our conclusions on this simplified setup also apply to the multi-seed-per-class and multi-class problems, under further approximations.

We now focus on neural networks as our predictors.
Specifically, we consider respectively a 1-hidden-layer and a 5-hidden-layer MLPs with ReLU activations (see App.~\ref{app:arch} for more details on the architectures) in the NTK limit. In this limit, the predictions of the networks can be written as the result of kernel regression, where the kernel function is determined by the architecture of the network. Therefore, we can replicate the analysis performed in the previous section, where we replace the simple Gaussian kernel with a neural kernel.
In the following, we refer to our kernel function as $k_{NTK}$, and the respective kernel matrix as $K^{NTK}$.
In practice, we use the \href{https://github.com/google/neural-tangents}{\texttt{neural-tangents}} library \citep{Novak2020Neural} to compute the neural kernels.\\

Importantly, while this setup is partly analogous to the case study of the previous section (RBF kernel over a circular dataset), there are some key assumptions and approximations that need to be justified before we can apply the theory to this case.\\

Firstly, unlike the RBF kernel, the MLP kernel is not stationary, so Lemma \ref{lem:stationary_kernel} does not apply to the MLP kernel. However, we can show that the MLP kernel, in virtue of being a \emph{dot product} kernel \citep{Neal1996}, also produces a circulant Gram matrix over an orbit of the cyclic group:

\begin{lemma}
    The Gram matrix of a dot product kernel is circulant over a single orbit of the cyclic group.
\end{lemma}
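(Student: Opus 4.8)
The plan is to mirror the structure of the proof of Lemma~\ref{lem:stationary_kernel}, but where the stationary case reduced everything to pairwise \emph{distances}, here I would reduce everything to pairwise \emph{inner products}. A dot product kernel is one that can be written as $k(x,x') = f(\langle x, x'\rangle)$ for a scalar function $f$; the MLP/NTK kernel is of this type, since on the unit sphere---where all orbit points are normalized to lie---its dependence on the norms $\langle x_i,x_i\rangle$ is constant and it varies only through the inner product $\langle x_i, x_j\rangle$. Hence the whole Gram matrix is determined by the numbers $\langle x_i, x_j\rangle$ with $x_i = g^i x$, and it suffices to show these depend only on $(i-j) \bmod N$.

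The core step is a one-line computation exploiting that the generator $g$ acts by an \emph{orthogonal} transformation (a rotation, in the rotated-MNIST setting). Writing $g^\top = g^{-1}$, I would compute
\[
\langle g^i x, g^j x\rangle = x^\top (g^i)^\top g^j x = x^\top g^{\,j-i} x = \langle x, g^{\,j-i} x\rangle,
\]
so each inner product depends on the indices only through $j-i$. Since $g$ generates a cyclic group of order $N$ we have $g^N = I$, hence the dependence is through $(j-i) \bmod N$. Setting $c_k := f(\langle x, g^{-k} x\rangle)$ then yields exactly $K_{ij} = c_{(i-j)\bmod N}$, which is the definition of a circulant matrix. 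As a sanity check, symmetry of the inner product forces $c_k = c_{-k}$, consistent with $K$ being a symmetric Gram matrix.

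The main thing to be careful about---and the genuine hypothesis hiding in the statement---is that the cyclic group must act by transformations preserving the \emph{standard} inner product, i.e.\ $g$ must be orthogonal in the standard basis. This is what makes the reduction $\langle g^i x, g^j x\rangle = \langle x, g^{\,j-i}x\rangle$ valid; for a general (non-orthogonal) representation it would fail, because although every finite-group representation is unitarizable, the dot product kernel is tied to the \emph{fixed} standard inner product rather than to the invariant one. For exact rotations this holds, which is why the lemma is clean. For pixel-space rotations of MNIST, orthogonality (and the constancy of the orbit norms $\langle x_i, x_i\rangle$) holds only up to discretization and interpolation artifacts, which is precisely the origin of the ``approximately circulant'' caveat the paper attaches to the real-data case. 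I would therefore state and prove the lemma for an orthogonal representation, and flag this approximation separately rather than trying to absorb it into the argument.
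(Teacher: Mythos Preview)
Your proposal is correct and follows essentially the same argument as the paper: use orthogonality of the representation to reduce $\langle g^i x, g^j x\rangle$ to $\langle x, g^{\,j-i}x\rangle$, then invoke that a dot product kernel depends only on this inner product to conclude $K_{ij}$ depends only on $(i-j)\bmod N$. Your added remarks about the orthogonality hypothesis, the symmetry $c_k=c_{-k}$, and the discretization caveat for pixel rotations are helpful elaborations but do not change the structure of the proof.
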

\begin{proof}
The cyclic group acts on a seed point linearly via its (matrix) representation $R$; due to the properties of the cyclic group, such representation is orthogonal.
The dot product between data points thus only depends on their index difference, as indexed by the group action:
\begin{align*}
    (x_i^A)^T x_j^A = (R^i x_0^A)^T(R^j x_0^A) = (x_0^A)^T R^{-i} R^j x_0^A = (x_0^A)^TR^{j-i}x_0^A
\end{align*}

As the kernel of a dot product only depends on the scalar product of its inputs, it thus also only depends on the index distance between the data points:
\begin{align*}
    K_{ij} = k(x_i^A, x_j^A) = k((x_i^A)^Tx_j^A) = k((x_0^A)^TR^{j-i}x_0^A),
\end{align*}
and thus the $(i,j)$ entry of the Gram matrix depends only on the difference between the two indices $(i-j)$, modulo $N$ because of the properties of the rotation matrix.
The Gram computed over a single orbit is thus circulant. 
\end{proof}

Next, we remind the reader that the datasets we consider here are composed of two orbits, obtained by applying the cyclic group action to two different MNIST digits. Unlike in the synthetic dataset of the previous section, these two orbits are not geometrically interleaved, as we have no control over the position of the seed points given by the MNIST digits. This leads us to make the following approximation and adjustment.

\paragraph{Approximation: Circularity of the kernel matrix over the dataset made of two orbits} MNIST digits from different classes are in general not \textit{geometrically} interleaved, i.e., a point in one orbit is not equidistant from the ``neighboring" points in the other orbit. This can be written as the following dot product condition:
\begin{align*}
    \textcolor{custombluelight}{x_i^A \cdot x_i^B} \neq \textcolor{customblue}{x_i^A \cdot x_{i+1}^B.}
\end{align*}
Furthermore, the angular distance between first neighbors in each orbit is not necessarily the same, i.e., 
\begin{align*}
    \textcolor{customredlight}{x_i^A \cdot x_{i+1}^A} \neq \textcolor{customred}{x_i^B \cdot x_{i+1}^B.}
\end{align*}

Indeed, even though the seed images are all normalized to be on the sphere, the angular distance between first neighbors on an orbit still depends on how close the seed image is to the stabilizer of the group action.
Take for instance a seed image with perfect central symmetry (e.g., a perfect `o'); all rotations of the seed image are the seed image themselves, and thus the angular distance is 0.
Conversely, for a ``Dirac delta"-like image (i.e., an image that is nonzero at a single point, assuming infinite pixel resolution), all rotated images are orthogonal to each other, setting their angular difference to 1.
The angular distance between first neighbors on a same orbit thus sits between 0 and 1, depending on the smoothness of the image with respect to the group transformation.

As a consequence, the following NTK matrix is in general not circulant:\footnote{We can note that the first diagonal (denoted in black) is constant in virtue of the fact that we normalize each data point to be on the sphere.}
\[
K^{\text{NTK}} = k_{\text{NTK}} \left(
\renewcommand{\arraystretch}{1.5} \begin{bmatrix}
    x_0^A \cdot x_0^A & \textcolor{custombluelight}{x_0^A \cdot x_0^B} & \textcolor{customredlight}{x_0^A \cdot x_1^A} & \textcolor{customblue}{x_0^A \cdot x_1^B} & \textcolor{customred}{x_0^A \cdot x_2^A} & \cdots \\
    \textcolor{custombluelight}{x_0^B \cdot x_0^A} & x_0^B \cdot x_0^B & \textcolor{customblue}{x_0^B \cdot x_1^A} & \textcolor{customred}{x_0^B \cdot x_1^B} & \textcolor{custombluelight}{x_0^B \cdot x_2^A} & \cdots \\
    \textcolor{customredlight}{x_1^A \cdot x_0^A} & \textcolor{customblue}{x_1^A \cdot x_0^B} & x_1^A \cdot x_1^A & \textcolor{custombluelight}{x_1^A \cdot x_1^B} & \textcolor{customredlight}{x_1^A \cdot x_2^A} & \cdots \\
    \textcolor{customblue}{x_1^B \cdot x_0^A} & \textcolor{customred}{x_1^B \cdot x_0^B} & \textcolor{custombluelight}{x_1^B \cdot x_1^A} & x_1^B \cdot x_1^B & \textcolor{customblue}{x_1^B \cdot x_2^A} & \cdots \\
    \textcolor{customred}{x_2^A \cdot x_0^A} & \textcolor{custombluelight}{x_2^A \cdot x_0^B} & \textcolor{customredlight}{x_2^A \cdot x_1^A} & \textcolor{customblue}{x_2^A \cdot x_1^B} & x_2^A \cdot x_2^A & \cdots \\
    \vdots & \vdots & \vdots & \vdots & \vdots & \ddots \\
\end{bmatrix}
\renewcommand{\arraystretch}{1} 
\right)
\]

\begin{figure}[t]
    \centering
    \includegraphics[width=.95\linewidth]{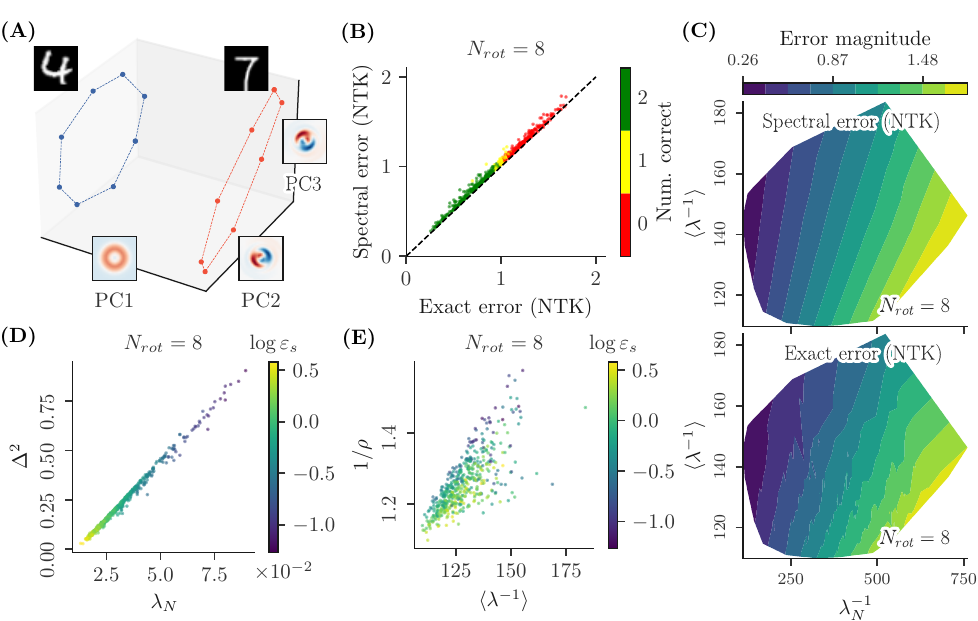}
    \caption{\textbf{Analysis of the prediction behavior of a MLP on pairs of orbits from rotated-MNIST.}
    \textbf{A}: We take samples of two different MNIST digits and generate their rotation orbits. The task is to predict the label value of a leave-out point in one of the two orbits. We use angle steps of 45 degrees, amounting to $N_{rot}=8$. We show the points in a reduced-dimensional space obtained by performing PCA on the dataset. 
    \textbf{B}: Scatter plot of the NTK error computed in the standard way (exact) against our spectral error (Eq.~\ref{eq:spectral_error}). Each dot corresponds to a different dataset, obtained by randomly picking pairs of digits of different classes. The color coding corresponds to the number of classification errors incurred by the symmetrized NTK regression, obtained by excluding a point from class A, then from class B, and counting the number of classification errors (0, 1 or 2), understood as a disagreement in sign between NTK prediction and label of the missing point.
    \textbf{C}: Comparison between the spectral and exact NTK error across different values of $\lambda^{-1}_N$ and $\langle \lambda^{-1}\rangle$. 
    \textbf{D}: Comparison between the values of $\lambda_N$ and $\Delta^2$. The former is the highest frequency component of the neural kernel matrix (inverse of the numerator in Eq.~\ref{eq:spectral_error}), while the latter is the distance in pixel space between the averages (centroids) of the two orbits.
    \textbf{E}: Comparison between the values of $\langle \lambda^{-1}\rangle$ and $1/\rho$ (see main text). }
    \label{fig:highd_study}
\end{figure}

To retrieve the setup outlined in the previous section, we resort to an \textit{ex-post} circularization procedure.
We consider the matrix $\Tilde{K}^{NTK}$ that is obtained by taking the diagonal-wise average of the kernel matrix $K^{NTK}$:
\begin{align*}
    \Tilde{K}^{NTK}_{ij} = \frac{1}{2N} \sum_{k} K^{NTK}_{(i+k)\% (2N), (j+k)\% (2N)}.
\end{align*}
Such matrix is by definition circulant.
The practical meaning of this procedure is twofold: the averaging of \textit{even} diagonals imposes that both classes' feature embeddings are ``equally spaced'' along the orbit, while the averaging of \textit{odd} diagonals (together with the intrinsic symmetry of the kernel matrix) imposes that the feature embeddings are interleaved in the sense that $\Tilde{K}^{NTK}_{i, i+1} = \Tilde{K}^{NTK}_{i, i-1}$. 
We stress that this circularization procedure is not justified a priori. However, we empirically show that it is a realistic approximation for analyzing the interplay between symmetric datasets and deep networks.

\paragraph{Adjustment: Symmetrization of the NTK error w.r.t. to class} Because of the asymmetry between orbits, removal of a point from one orbit is not in general equivalent to removal of a point from the other orbit. Yet, our formula for the spectral error, operating on the circularized kernel matrix, is intrinsically symmetric.
To restore this interchangeability in the standard NTK error, we consider in the following a \textit{symmetrized NTK error}, which we obtain by averaging the NTK prediction errors that arise by removing a point from either one orbit or the other. 
It is this symmetrized NTK error that we compare with our spectral error.

\paragraph{Results} Equipped now with a circulant kernel matrix, we can once again refer to the spectral error formula of Eq.~\ref{eq:spectral_error} for the estimation of the prediction error associated with performing kernel regression under the kernel matrix $\Tilde{K}^{NTK}$.

We find that the spectral error tracks the symmetrized NTK error across all sampled pairs (Fig.~\ref{fig:highd_study}A-B).
This agreement holds across the whole range of explored values for both $\lambda_N$ and $\langle \lambda^{-1}\rangle$ (Fig.~\ref{fig:highd_study}C), and regardless of the number of rotation angles (App.~\ref{app:many_angles}).
Moreover, this agreement qualitatively holds for finite-width, trained networks (Fig. \ref{fig:all_ntks}B and App.~\ref{app:more_mlps}), as well as for deeper (5-layer) MLPs (Fig. \ref{fig:all_ntks}C and App.~\ref{app:more_mlps}).
This suggests that the circularization procedure preserves the overall structure of the problem, and allows us to study generalization through a geometric lens as in the previous section.

\begin{figure}
    \centering
    \includegraphics[width=1\linewidth]{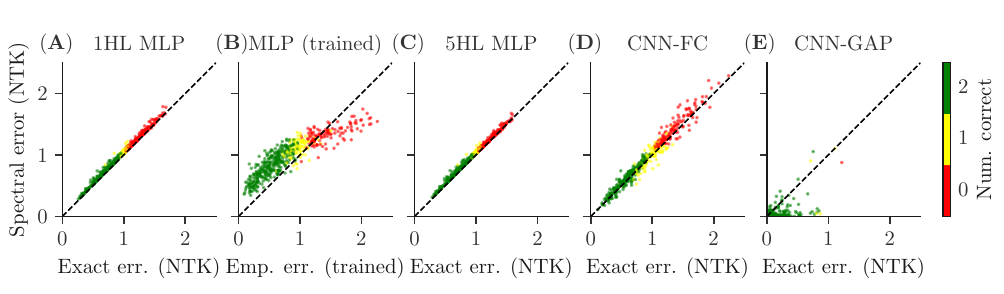}
    \caption{\textbf{Spectral error matches exact NTK error across various architectures and for finite-width networks, on rotated-MNIST orbit pairs}: \textbf{(A)} a MLP with 1 hidden layer, \textbf{(B)} a finite-width 1-hidden layer MLP trained with Adam, \textbf{(C)} a MLP with 5 hidden layers,  \textbf{(D)} a ConvNet with a fully-connected last layer, \textbf{(E)} a ConvNet with global average pooling at the last layer. 
    In this case, the assumptions of the theory are too crude to capture empirical phenomenology (see text).
    Color coding as in Fig.~\ref{fig:highd_study}B.
    }
    \label{fig:all_ntks}
\end{figure}

\paragraph{Geometric interpretation} The quantities involved in the spectral error (Eq.~\ref{eq:spectral_error}) are computed in kernel space, not in input (i.e., dataset) space, which makes their interpretation difficult. However, for simple kernels such as the MLP kernel, we will now show empirically that these quantities correlate with geometric quantities in input space, allowing us to understand the geometric factors underlying generalization.

Firstly, the highest frequency eigenvalue $\lambda_N$ correlates with the orbit separation in input space $\Delta$ (Eq.~\ref{eq:low_d_orbits}).
Indeed, the values of $\lambda_N$ and the euclidean distance between orbit averages in input space are, for the explored ranges, in an approximate linear relationship (Fig.~\ref{fig:highd_study}D).
\emph{As a consequence, one can see how the separation between the orbits is a key indicator of prediction error for the MLP kernel.}

Secondly, we can map the average reciprocal spectrum $\langle \lambda^{-1}\rangle$ to a measure of orbit density in input space, computed as the \emph{inverse of the orbit radius}. The orbit radius is computed as the distance of any orbit sample to the orbit centroid, in the high-dimensional dataset space.\footnote{The orbit radius is a measure of how close the seed image is from the stabilizer of the group action. For example, a seed image that is rotation invariant, such as a perfect `o', is situated on the stabilizer of the rotation group action, leading to a null radius.}
We see that these two quantities are in an approximately linear relation (Fig.~\ref{fig:highd_study}E).

These relations enable a geometric interpretation of the quantities that determine generalization success: generalization success depends on the distance between classes $\Delta$ and the density of orbits. Note, however, that the observed mapping between spectral quantities of the kernel and geometric quantities in input space is empirical and not guaranteed to hold for all architectures. In particular, we show later that this equivalence breaks for a convolutional neural network with global average pooling at the last layer, making the interpretation of the spectral formula terms less direct.

\subsection{Extension to Multiple Seeds per Class and Multiple Classes}\label{sec:multi}

Until now, we have only considered a simplified scenario in which the datasets are composed of a single orbit per class, and where there are only two classes.
We now ask whether the theory can also be adapted to more a realistic scenario, where datasets are composed of multiple seed images per class, and where there are multiple classes, only one of which has a missing angle during training.
This setting is the one we presented at the start of this study (Fig~\ref{fig:symm_learning}).
Below, we show that our framework can be extended to this scenario.

\paragraph{Extension to multiple seed points per class}\label{sec:multiseed}
First, we describe how our framework can be extended to multiple seeds per class and two classes.
Since our spectral formula only applies to datasets made of two orbits (one for each class), we make the simplifying assumption that pairs of orbits interact \emph{linearly} to predict the average error on a missing point. In other words, we average the spectral error computed over each possible pairs of orbits taken across classes.

Specifically, for all pairs of seeds that can be formed across the two classes, we compute a spectral error according to Eq.~\ref{eq:spectral_error}.
By doing so, we are considering the case of a missing point in either of these two orbits.
We then average the error over all the pairs, obtaining a single, average spectral error.

We compare this average spectral error against the exact NTK error, computed over the entire dataset.
We obtain the exact error by running NTK regression on the dataset that contains all orbits of both classes, excluding one angle from all orbits of one class.
On said angles, we compute the NTK regression error, and average it over missing points from all orbits of that class.
We then compute its symmetrized version by swapping the role of the two classes, and averaging the two error values.

We test the agreement between the pair-averaged spectral error and symmetrized NTK error across many trials, where each trial corresponds to a different dataset, obtained by taking a number of seeds ($N_{\mathrm{seed}}=13$) from each of two different, randomly picked MNIST digit classes. We limit the datasets to contain 13 seeds per class for computational reasons.
We find empirically that the average spectral error correctly predicts exact NTK error across a large number of such trials (Fig.~\ref{fig:multiseedclass}A, extended version in App.~\ref{app:multi_plots}). We note that the spectral error does not capture well the magnitude of the exact error anymore (in reason of the additional assumptions needed in the multi-seed case). However, the spectral error still correlates well with the exact error across datasets.

In conclusion, our spectral theory, which simply considers the average pairwise interaction between orbits of different classes, predicts well the generalization behavior of infinite-width networks on a dataset comprising multiple seeds per class.

\begin{figure}
    \centering
    \includegraphics[width=\linewidth]{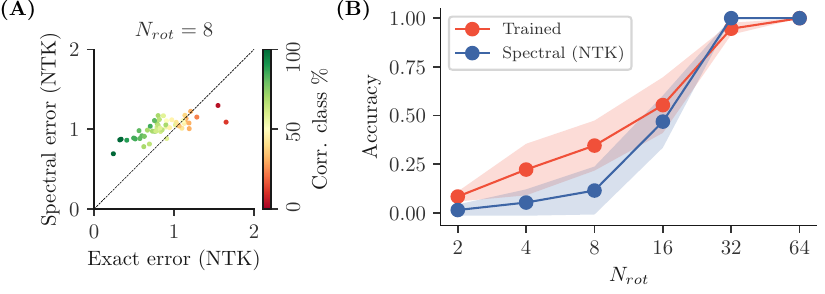}
    \caption{
    \textbf{Application of the spectral theory to a MLP trained on a subsection of rotated-MNIST comprising multiple seeds per class and multiple classes.}
    \textbf{A}:
    For datasets comprised of two classes of rotated-MNIST, comprising multiple seeds each ($N_{\mathrm{seed}}=13$), we compare the average spectral error, obtained by averaging over all pairings of orbits in the dataset, with the symmetrized NTK prediction error, computed over all possible missing points. Each dot in the scatter plot represents a different dataset, drawn by randomly selecting two classes from MNIST, and randomly selecting 13 seed images per class. 
    The color coding reflects the percentage of seeds (of both classes) for which the NTK regression gives a correct prediction, understood as agreeing in its sign with the label of the missing points.
    \textbf{B}:
    On a multi-class (10 classes of MNIST), multi-seed-per-class ($N_{\mathrm{seed}}=13$) version of rotated-MNIST, we compare the generalization accuracy predicted by our multi-class adapted spectral error, with the one of a normally trained MLP (2 hidden layers, trained with a cross-entropy loss). 
    As the number of points in the orbits increases, both trained and spectral accuracies increase on the classification task, progressively and similarly, suggesting that no mechanism for symmetry learning is present for finite-width trained networks that would be unaccounted for by the spectral theory.
    }
    \label{fig:multiseedclass}
\end{figure}

\paragraph{Extension to multiple classes}\label{sec:multiclass}
We now describe how our framework can be extended to the 10 classes of MNIST.
We stress that, in principle, NTK regression is not suited to predict the results of a network that is trained with a cross-entropy loss on multiple classes.
However, we show that we can adapt our spectral theory to this scenario, and qualitatively model the results of such training.
We do so by employing a \textit{one-versus-many} strategy.

Consider an orbit from class A.
Form all pairs with orbits of another class B (there are $N_{\mathrm{seed}}=13$ such pairs).
Average the spectral prediction obtained from all these pairwise comparisons.
Repeat this comparison with all other classes C, D, E, etc.
We thus obtain a prediction for a missing point in the orbit of class A against every other \emph{class}.
If \emph{all} of these class-wise predictions are correct, we consider the network prediction on this orbit of class A to be correct.
We extend this procedure to all orbits of class A, and count the percentage of correctly classified orbits for that class.
We repeat this procedure for every possible leave-out class, and report the average accuracy of our classifier over all leave-out classes. 

We compare the predicted accuracy resulting from this procedure, to the empirical accuracy of a finite-width 2-hidden-layer MLP trained with Adam and a cross-entropy loss (see details of architecture in App.~\ref{app:multiclass_arch}), on a version of rotated-MNIST comprising $N_{\mathrm{seed}}=13$ seeds per class (we limit the number of seeds per class to 13 because of the prohibitive computational cost of computing NTK regression on larger datasets). 

The spectral accuracy qualitatively matches the empirical accuracy of the trained network as we vary the number of points composing the orbits (Fig.~\ref{fig:multiseedclass}B, extended version in App. Fig.~\ref{fig:multiclass}). The spectral accuracy curve also mirrors the empirical curves obtained by training various architectures (MLP, CNN, ViT) on a full version of rotated-MNIST at the beginning of this study (Fig. \ref{fig:symm_learning}).
We note that classification accuracy increases with the number of sampled angles until it saturates to perfect classification accuracy. The saturation effect comes from the fact that we evaluate the network on a classification task. When the elements of the orbits are dense enough in kernel space, the error in prediction becomes small enough that the test point is systematically correctly classified.

\emph{In conclusion, our spectral theory, which only considers pairwise interactions between seeds across classes, and lends itself to a simple geometric interpretation of the factors leading to correct generalization (namely class distance and orbit density), qualitatively recapitulates the lack of generalization of conventionally trained finite-width networks on a realistic dataset, rotated-MNIST, comprising multiple classes and multiple seeds per class. Moreover, both theory and experiments show that the error is a progressive function of these two quantities, and that there isn't a phase transition where the network would suddenly learn a function that captures the symmetry of the problem. Taken together, these results provide strong evidence that the ability of conventional deep networks to generalize on symmetric datasets is essentially dictated by local geometrical properties of the datasets, and that no specific mechanism exists that would allow deep networks to learn from examples the non-local, symmetric structure of a prediction problem.
}

\subsection{Extension to Equivariant Architectures}

We here study how the interplay between dataset symmetries and equivariant architectures affects generalization. 

For simplicity and concreteness, we focus our study on spatially convolutional neural networks. We distinguish two types of convolutional architectures: (1) convolutional architectures where the last layer is fully connected: these architectures do not ensure full invariance to translation; (2) convolutional architectures where the last layer performs global average pooling, ensuring invariance to translation.

We also distinguish two cases of data symmetries: (1) when the symmetry in the data corresponds to the one encoded in the equivariant architecture; (2) when it does not, and the architecture is equivariant to another, different symmetry.
We consider a dataset with translational symmetry to illustrate the first case (a convolutional neural network is equivariant to translations) and a dataset with rotational symmetry to illustrate the second case (a convolutional neural network is not equivariant to image rotations). 

Through these examples, we build a framework that should be easily extensible to characterize how equivariant architectures and dataset symmetries interact in general.

Below, we provide formal definitions for the aforementioned datasets and network architectures, and then proceed to state the results.

\subsubsection{Definitions}

\begin{definition}[Dataset with translational symmetry]
A dataset with translational symmetry is composed of seed images and all their translations.
For the purpose of the proofs to follow, we will focus on a single orbit of this dataset, which consists of a single seed point $x_s \in \R^{n \times n} $ and all of its translations:
\begin{align*}
    \mathcal{O}_T &= \{ g_T^0.x_s, g_T^1.x_s, \cdots g_T^{n-1}.x_s\},
\end{align*}
where the translation operator $g_T$ acts on images by circularly shifting them along one of the dimensions.
For pixel coordinates $(i_x,i_y)$, and the corresponding value of the pixel $x(i_x, i_y)$, we write:
\begin{align*}
    g_T.x\left(\begin{bmatrix}
            i_x \\
            i_y
    \end{bmatrix}\right) = x\left(\begin{bmatrix}
            (i_x + 1) \bmod n\\
            i_y
    \end{bmatrix}\right)
\end{align*}

\end{definition}

\begin{definition}[Dataset with rotational symmetry]
A dataset with rotational symmetry is composed of seed images and all their rotations in $C_4$ (we limit ourselves to 4 cardinal rotations to avoid definitional problems of image rotation on discrete pixel grids).
We will focus on a single orbit of this dataset, which consists in a single seed point $x_s \in \R^{n \times n} $ and all its rotations: 
\begin{align*}
    \mathcal{O}_R &= \{ g_R^0.x_s, g_R^1.x_s,  g_R^{2}.x_s, g_R^{3}.x_s\}
\end{align*}
where the rotation operator $g_R$ permutes pixel coordinates $(i_x,i_y)$ as follows:

\begin{align*}
    g_R. x\left(\begin{bmatrix}
            i_x \\
            i_y
    \end{bmatrix}\right) = x\left(\begin{bmatrix}
            i_y\\
            n-i_x
    \end{bmatrix}\right)
\end{align*}
\end{definition}

\begin{definition}[Fully connected convolutional network (FC)] A fully connected convolutional network is a network \( f : \R^{n \times n} \to \R \) parameterized by:
\begin{align*}
f_\text{FC}(x) = A \frac{1}{\sqrt{k}} \phi(B \circledast x)_v,
\end{align*}
where \( A \in \R^{1 \times n^2k} \), \( B \in \R^{k \times 1 \times 3 \times 3} \), \(\circledast\) denotes the spatial convolution operation, and for any matrix \( u \in \R^{n \times n} \), \( u_v \in \R^{n^2} \) denotes the vectorization (i.e., flattening) of \( u \).
This network first applies a convolutional layer to the data, then flattens the resulting representation into a vector, and passes it through a fully connected layer. We assume circular padding and stride of 1. In experiments we use a filter of size $3\times3$. \end{definition}

\begin{definition}[Global Average Pooling convolutional network (GAP)] A GAP network is a convolutional network with global average pooling at the last layer.
This network is invariant to discrete translations.
The network \( f_\text{GAP} : \R^{n \times n} \to \R \) is parameterized by:
\begin{align*}
f_\text{GAP}(x) = \frac{1}{\sqrt{k}n^2} \sum_{k} \sum_{i_x} \sum_{i_y} A_{1 k} \phi(B_{k,i_x,i_y} \cdot x)
\end{align*}
where \( A \in \R^{1 \times k} \), and \( B \in \R^{k \times 1 \times 3 \times 3} \) with \( B_k \in \R^{1 \times 1 \times 3 \times 3} \) indexing filter \( k \) of \( B \).
$B_{k, i_x, i_y} \in \R^{1\times 1 \times n\times n}$ is obtained by centering $B_k$ at coordinates $(i_x, i_y)$ of an $n\times n$ grid with periodic boundary conditions, and filling the remaining entries with zeros.
Then, the dot product is understood as the sum of the elementwise multiplications of all entries of $x$ and $B_{k, i_x, i_y}$.
We remark that this operation is effectively an alternative way of describing a convolution with filter bank $B$, but this indexing choice proves useful in the proofs.
After applying a convolutional layer to the data, this network averages the resulting representation across each of the \( k \) output channels, and then takes a linear combination of these averages using a fully connected layer.
\end{definition}

\subsubsection{Theoretical Results on Equivariant Architectures}

We showed previously that the generalization behavior of MLPs on a symmetric dataset, rotated-MNIST, is well captured by the spectral error, a quantity computed from the Fourier components of the MLP kernel over the (orbits of the) cyclic group of interest.
Crucially, the derivation of this result relied on the circulant structure of the kernel matrix, which in turn was a consequence of the dot product nature of the MLP kernel.

Convolutional neural networks, however, are \textit{not} dot product kernels, as pixel proximity plays a role in the kernel similarity between two images \citep{arora2019exact}.
Nevertheless, here we show that the kernel matrix of convolutional neural networks \emph{is also circulant} over common representations of the cyclic group, regardless of whether the equivariance matches the symmetry of the data or not.
We thus get that the same spectral theory of generalization applies to these equivariant architectures.
Furthermore, a well-known special case arises when the network is designed to be fully invariant to the symmetry of interest, which is also well captured by our spectral theory.\\

First, we study the interplay between an equivariant architecture and its matching symmetry. 
\begin{proposition}
The kernel matrix of a fully connected convolutional network $K_\text{FC}$ over a translation orbit $O_{T}$ is circulant. Moreover, this kernel matrix is in general not constant or rank-deficient.
\end{proposition}

See App.~\ref{app:equi} for a proof of this proposition and the propositions below.\\

The kernel matrix of a fully connected convolutional network being circulant over a translation orbit, it can be analyzed in the Fourier domain, as done in the previous sections for a MLP. The same spectral formula for generalization error can thus be derived for this architecture and symmetry.
Moreover, the kernel matrix is in general not rank-deficient on the translation orbit. This implies that, \textit{a priori}, no inverse eigenvalue will diverge in the denominator of Eq.~\ref{eq:spectral_error}.
Were this to happen, the spectral error would go to 0, i.e., the network would achieve perfect generalization.
The same conclusion about the inability of a MLP to extrapolate symmetries to partially observed classes thus holds for a fully connected convolutional network on a dataset with translational symmetry.

\begin{proposition}
The kernel matrix of a global average pooling convolutional network $K_\text{GAP}$ over a translation orbit $O_{T}$ is constant.
\end{proposition}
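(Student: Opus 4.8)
The plan is to exploit the fact that global average pooling makes the GAP network exactly invariant to the discrete translation $g_T$, and then to show that \emph{any} kernel derived from such an invariant network (NNGP or NTK) must take the same value on every pair drawn from a translation orbit.

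First I would establish the translation invariance of $f_\text{GAP}$ at the level of the architecture, for every fixed choice of parameters $\theta$. Three ingredients combine: (i) circular convolution is \emph{equivariant} to circular shifts, i.e. $B \circledast (g_T . x) = g_T . (B \circledast x)$, which follows from the circular-padding and unit-stride assumptions in the definition of $f_\text{GAP}$; (ii) the elementwise nonlinearity $\phi$ commutes with the coordinate permutation induced by $g_T$, so equivariance is preserved through the activation; and (iii) the final operation $\tfrac{1}{n^2}\sum_{i_x}\sum_{i_y}$ is a sum over \emph{all} spatial positions, hence invariant under any permutation of those positions, in particular the bijective grid permutation induced by $g_T$. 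Chaining (i)--(iii) yields $f_\text{GAP}(g_T . x) = f_\text{GAP}(x)$ for all $\theta$, and by iteration $f_\text{GAP}(g_T^i . x_s) = f_\text{GAP}(x_s)$ for every $i$.

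Next I would transfer this invariance to the kernel. Because the identity $f_\text{GAP}(g_T^i . x_s) = f_\text{GAP}(x_s)$ holds identically in $\theta$, differentiating in $\theta$ gives $\nabla_\theta f_\text{GAP}(g_T^i . x_s) = \nabla_\theta f_\text{GAP}(x_s)$. Both the NNGP kernel $\mathbb{E}_\theta[f(x)f(x')]$ and the NTK $\langle \nabla_\theta f(x), \nabla_\theta f(x')\rangle$ (in the infinite-width limit) are built from the network map, so each is unchanged when either argument is replaced by any translate within the orbit. Concretely, for all $i,j$,
\[
K^\text{GAP}_{ij} = k_\text{GAP}(g_T^i . x_s,\, g_T^j . x_s) = k_\text{GAP}(x_s, x_s),
\]
independent of $i$ and $j$; hence every entry of $K_\text{GAP}$ equals the constant $k_\text{GAP}(x_s,x_s)$ and the matrix is constant. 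Equivalently, in the feature-map picture the relevant $\varphi$ is translation-invariant, so $\varphi(g_T^i . x_s) = \varphi(x_s)$ for every $i$ and all inner products coincide.

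The main obstacle I anticipate is the careful verification of step (i): one must check that $g_T$, as defined, acts as a genuine bijection on the pixel grid (a true circular shift) so that both the convolution equivariance and the pooling invariance are \emph{exact} rather than merely approximate—this is precisely where circular padding and stride $1$ are essential, since zero-padding or boundary truncation would break the equivariance. Once exact invariance is secured, the passage to the kernel is immediate and architecture-agnostic, relying only on the functional form of the NNGP/NTK as expectations or inner products assembled from the identically-invariant network map.
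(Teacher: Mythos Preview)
Your proposal is correct and takes a somewhat different, more conceptual route than the paper. The paper works directly with the explicit NNGP formula $K_\text{GAP}(x,x') = \sum_{i_x,i_y,i_x',i_y'} \sum_k \langle \phi(B_{k,i_x,i_y}\cdot x), \phi(B_{k,i_x',i_y'}\cdot x')\rangle_\Theta$: it moves $g_T$ from the image onto the filter index ($g_T^{-1}.B_{k,i_x',i_y'} = B_{k,i_x'-1\bmod n,\,i_y'}$) and then absorbs the shift by a change of summation variable in $i_x'$, recovering $K_\text{GAP}(x,x)$. You instead first establish translation invariance of the \emph{function} $f_\text{GAP}$ for every fixed parameter vector (circular-convolution equivariance $+$ elementwise $\phi$ $+$ pooling invariance), and then pass to the kernel by noting that both the NNGP expectation and the NTK gradient inner product are assembled from this identically-invariant map. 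Your argument is more abstract and buys you both kernels (NNGP and NTK) at once without writing down the recursive formula, and it makes the mechanism---architectural invariance, not a computational coincidence---transparent; the paper's argument is more hands-on and verifies the claim at the level of the concrete kernel expression, which dovetails with the neighbouring propositions where the same index-manipulation template is reused to show circulancy without constancy.
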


The kernel matrix of a global average pooling convolutional network is not only circulant, but also constant.
The kernel matrix computed over a dataset made of two orbits of two different classes is thus blockwise-constant in 2x2 blocks, one for each orbit, and by consequence rank-deficient.
This rank-deficiency ensures that some of the eigenvalues of the kernel matrix are null, leading the denominator of the spectral error (Eq.~\ref{eq:spectral_error}) to diverge and thus perfect generalization (0 error).\footnote{This is true in the general case where the last frequency (at the numerator of the spectral formula) has non-zero power, i.e., the two classes are not fully collapsed in kernel space.}
We thus recover the well-known fact that a convolutional network with global average pooling at the last layer makes predictions that are by construction invariant to translations, and thus correctly generalizes from a partial view of a class orbit.\\

Second, we study the interplay between an equivariant architecture and a mismatching symmetry. 

\begin{proposition}
The kernel matrix of a fully connected convolutional network $K_\text{FC}$ over a rotation orbit $O_{R}$ is circulant, but in general not constant or rank-deficient. 
\end{proposition}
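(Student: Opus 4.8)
The plan is to reduce the circulant property to a single invariance of the kernel under \emph{simultaneous} rotation of its two arguments, and then to read off non-constance and full rank from the fact that the fully connected readout retains spatial information (in contrast to global average pooling). Write $k_\text{FC}$ for the infinite-width kernel (NNGP or NTK) induced by $f_\text{FC}$, and $K_\text{FC}$ for its Gram matrix on $\mathcal{O}_R$, so $(K_\text{FC})_{ij} = k_\text{FC}(g_R^i.x_s, g_R^j.x_s)$. Because $C_4$ is cyclic, hence abelian, and $g_R$ acts orthogonally, it suffices to prove
\[
k_\text{FC}(g_R.x,\, g_R.x') = k_\text{FC}(x,x') \quad \text{for all } x,x'.
\]
Applying this identity $j$ times and using commutativity to write $g_R^{-j}g_R^{i} = g_R^{\,i-j}$ yields $(K_\text{FC})_{ij} = k_\text{FC}(g_R^{\,i-j}.x_s,\, x_s) =: c_{(i-j)\bmod 4}$, which is exactly the circulant condition.

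The core step is to establish this simultaneous-rotation invariance, and here I cannot invoke equivariance as in the translation case, since rotation does \emph{not} commute with spatial convolution. Instead I would use the quasi-equivariance of convolution: for a $3\times 3$ filter $B$ centred at the origin with periodic boundaries, a change of summation variable gives
\[
\bigl(B \circledast (g_R.x)\bigr)(p) = \bigl((g_R^{-1}.B) \circledast x\bigr)(g_R^{-1}.p),
\]
where $g_R^{-1}.B$ is the filter with its $3\times 3$ taps permuted by the rotation, a permutation that maps the $3\times 3$ support to itself (this is where the $3\times 3$ and periodic-padding assumptions enter). In the infinite-width limit the NNGP kernel is a sum over spatial locations of the per-location post-activation covariance, $k_\text{FC}(x,x') = \sigma_A^2 \sum_p \mathbb{E}_B\bigl[\phi((B\circledast x)(p))\,\phi((B\circledast x')(p))\bigr]$. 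Substituting the quasi-equivariance identity for both $x$ and $x'$ and reindexing the location sum by the bijection $p \mapsto g_R^{-1}.p$ turns the expression into the same sum with $B$ replaced by $g_R^{-1}.B$; since the taps are i.i.d.\ Gaussian, the law of $B$ is invariant under the tap permutation, so the expectation is unchanged and $k_\text{FC}(g_R.x, g_R.x') = k_\text{FC}(x,x')$. The NTK is built from the same per-location covariances and their derivatives, so it inherits the invariance verbatim.

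For the remaining claims I would contrast $K_\text{FC}$ with the GAP kernel, just shown to be constant. Non-constance: the diagonal equals $k_\text{FC}(x_s,x_s)$ for every $i$ (by the invariance), while an off-diagonal entry is $k_\text{FC}(x_s, g_R.x_s)$. Writing $k_\text{FC}$ through a feature map $\varphi$, constancy would force $\|\varphi(x_s)\|^2 = \langle \varphi(x_s), \varphi(g_R.x_s)\rangle$ together with $\|\varphi(g_R.x_s)\| = \|\varphi(x_s)\|$ (the latter from the invariance at $x=x'$); the Cauchy--Schwarz equality case then forces $\varphi(g_R^{\,l}.x_s) = \varphi(x_s)$ for all $l$. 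Because the fully connected readout preserves the spatial arrangement of the convolutional features (unlike GAP, which averages it away), this collapse fails for any seed that is not rotation-invariant, so $K_\text{FC}$ is non-constant in general. Non-rank-deficiency follows from the same observation: the four feature vectors $\varphi(g_R^{\,l}.x_s)$, $l=0,\dots,3$, are generically linearly independent, so their $4\times 4$ Gram matrix has full rank; equivalently the circulant eigenvalues $\lambda_m = \sum_{l=0}^{3} c_l\, i^{\,ml}$ are generically nonzero. This is precisely where the behaviour diverges from GAP, whose constant (hence rank-one) matrix forces a vanishing eigenvalue and thus perfect generalization.

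The main obstacle is the first invariance step: since convolutional networks are not rotation-equivariant, the naive equivariance argument available for translations cannot be used, and the circulant structure must instead be extracted from the interplay between the quasi-equivariance of convolution and the distributional symmetry of the i.i.d.\ Gaussian filter. Care is needed to verify that the $3\times 3$ tap permutation preserves the filter support and that the periodic boundary makes $p \mapsto g_R^{-1}.p$ a bijection of spatial locations, since these are exactly the hypotheses that make the reindexing legitimate.
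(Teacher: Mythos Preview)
Your proof of the circulant property is essentially the paper's argument: transfer the rotation from the image to the filter via the convolution identity, reindex the spatial sum by the bijection $p\mapsto g_R^{-1}.p$, and absorb the residual tap permutation into the i.i.d.\ Gaussian law of the filter weights. Your phrasing in terms of a ``quasi-equivariance'' identity is slightly more compact, but the mechanism is identical.

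Where you diverge is on the auxiliary claims. For non-constancy, the paper simply writes out $K_\text{FC}(x,g_R.x)$ and observes that the two slots carry differently rotated filters, so no relabeling collapses it to $K_\text{FC}(x,x)$; you instead argue via the Cauchy--Schwarz equality case that constancy would force $\varphi(g_R^{\,l}.x_s)=\varphi(x_s)$ and then appeal to the readout ``preserving spatial arrangement''. Both arguments are at a comparable level of informality. For non-rank-deficiency, the paper points to an explicit witness (a Dirac-like seed with its active pixel off-centre) rather than invoking genericity of the feature vectors; the explicit construction is the more robust move, since ``generically linearly independent'' is exactly the kind of claim that needs a concrete example to be convincing. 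If you want to tighten your version, replacing the genericity appeal with the paper's Dirac seed would close that gap cleanly.
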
\label{prop:kfc_rot}

For a fully connected convolutional network applied to a dataset with rotational symmetry, the kernel matrix over an orbit remains circulant. Therefore, the same spectral theory of generalization applies as before. 
The kernel matrix is in general neither constant nor rank-deficient, and thus generalization on the missing point is not guaranteed to succeed.\footnote{This result, however, does not preclude a neural \emph{classifier} from classifying the point correctly. Indeed, if the error is less than 1, a classifier would classify the point correctly (since the prediction is closer to the correct label than the other label which is -1 in our setup). Whether the neural classifier will classify the point correctly depends on the specific interplay between orbit density and class separation in kernel space, which depends on geometric properties of the data and of the architecture, as described by the spectral formula.}

\begin{proposition}
The kernel matrix of a global average pooling convolutional network $K_\text{GAP}$ over a rotation orbit $O_{R}$ is circulant, but in general not rank-deficient or constant.
\end{proposition}\label{prop:kgap_rot}

The implication of this last proposition is that (1) our spectral theory applies to this scenario as well, and (2) unlike for translations, the global average pooling layer does not guarantee perfect generalization on a rotation orbit.

\subsubsection{Empirical Results on Equivariant Architectures}

Empirically, we repeat the analyses previously performed on MLPs using convolutional architectures and obtain essentially the same results.

\begin{figure}[t]
    \centering
    \includegraphics[width=\linewidth]{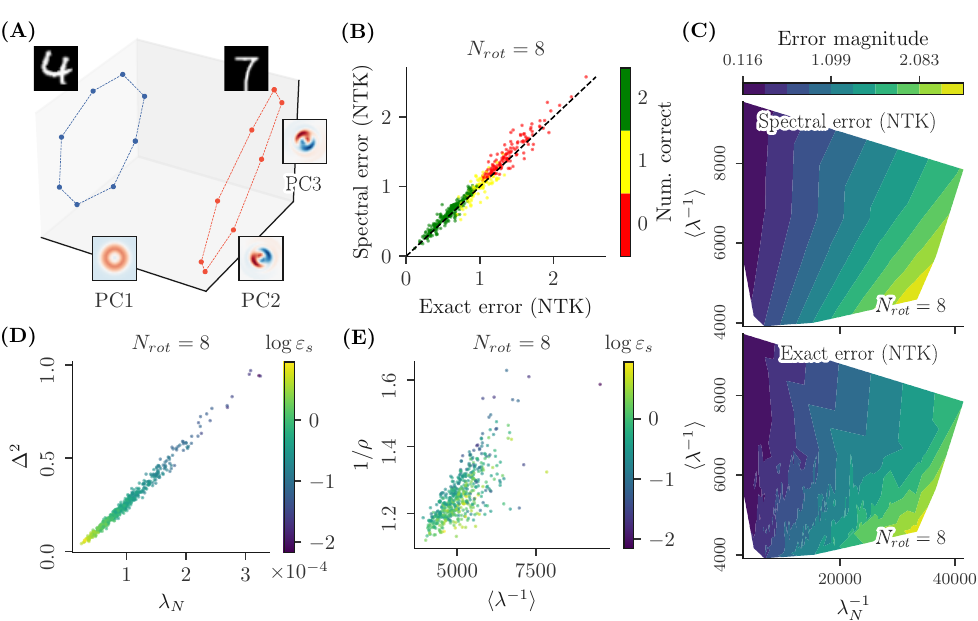}
    \caption{\textbf{Analysis of the prediction behavior of a fully connected convolutional network on pairs of orbits from rotated-MNIST.} Same caption as Fig. \ref{fig:highd_study}.}
    \label{fig:cntk_fc_main}
\end{figure}

\paragraph{Fully connected (FC) convolutional network on rotated-MNIST} We first analyze the behavior of a fully connected (FC) convolutional network, on the task of classifying a missing point from a dataset composed of two orbits, each generated from 8 rotations of a seed MNIST digit. The FC kernel matrix over a rotation orbit is circulant, as predicted by Prop. 3, allowing us to apply the spectral formula as we did for a MLP. The spectral error predicts well the symmetrized NTK error, across all pairs of orbits tested (Fig. \ref{fig:all_ntks}D). As with a MLP, the numerator and denominator values of the spectral error are approximately proportional to the distance between classes in input space and their average orbit density respectively (Fig. ~\ref{fig:cntk_fc_main}), allowing the same interpretation of the factors that lead to successful generalization as for the MLP.

\paragraph{Global average pooling (GAP) convolutional network on rotated-MNIST} We next consider a global average pooling convolutional network, on the same task and datasets. Its kernel matrix over a rotation orbit is also circulant, as predicted by Prop. 4, allowing us to apply the spectral theory.
The prediction errors are in general remarkably low (but, crucially, not 0) for this architecture on these datasets (Fig. \ref{fig:all_ntks}E). 

While we lack a comprehensive theory for this success, we speculate that the GAP convolutional network finds similarities in the local structure of images belonging to the same rotation orbit, allowing it to outperform other architectures. Interestingly, we find that a GAP convolutional network architecture where the filters don't overlap (i.e., the stride matches filter size) fails to correctly classify some pairs of orbits (App. Fig. \ref{fig:cntk_gap_fail}).

Moreover, the agreement between exact NTK error and spectral error that can be observed for other architectures is here worse, although still reasonable (both methods predict low error).
Our interpretation of why spectral error is not a very good approximation of exact NTK error here, is that the post-hoc circularization of the kernel matrix may reshape its structure too drastically. For instance, the geometric interleaving between the two orbits that is assumed by the circularization, may not be a good approximation in this kernel space. 

Finally, as shown in App.~\ref{app:more_cntks_rots}, the numerator and denominator values of the spectral error do not map well to the distance between
classes in input space and their average orbit density. \emph{This illustrates the important fact that, for some neural architectures, the spectral quantities computed in kernel space are not direct correlates of simple geometric quantities in input space, and cannot be interpreted as simply. Nevertheless, even in these cases, the error on a missing point is given by the spectral formula (with the approximations mentioned), where the quantities in the formula represent geometric quantities in kernel space.
}

\paragraph{Convolutional networks on translated-MNIST} We next analyze the behavior of convolutional networks on pairs of orbits of translated-MNIST, a version of MNIST where the digits are translated along the image x-axis, with periodic boundary condition (App.~\ref{app:more_cntks_shifts}).
Using a fully connected convolutional network, we do not see good generalization, as predicted by Prop. 1, and in good quantitative agreement with the spectral formula.
Using a global average pooling convolutional network, we see perfect generalization of the network on the leave-out class (error is 0 to numerical precision), as expected from the fact that this architecture is invariant to translation. 
In our spectral formula, this perfect generalization is realized by the fact that the denominator of Eq.~\ref{eq:spectral_error} diverges, as a consequence of the kernel matrix being rank-deficient (a consequence of Prop. 2).\\

\emph{In conclusion, we find that the generalization behavior of convolutional architectures on symmetric datasets is governed by the same formula and depends on the same geometric quantities as in the MLP case (namely orbit separation and density in kernel space). For highly non-linear architectures such as networks with a global-average pooling at the last layer, it is important to note that the geometric quantities in kernel space entering our spectral error are no longer trivially correlated to the same geometric quantities in input space, making the interpretation of these quantities kernel-specific. Finally, we find that only when the network is designed to be fully invariant to the symmetry of interest (e.g., GAP-CNN for translations), can it be theoretically guaranteed that the network will systematically generalize the symmetry correctly to unseen classes.}

\section{Discussion}\label{sec:discussion}

\textbf{In this work we establish a theory on when---and to what extent---deep networks are capable of learning symmetries from data.}
We find that the generalization behavior of conventional networks trained with supervision on datasets presenting a cyclic-group symmetry is captured by a simple ratio of inverse kernel frequency powers.
Our analysis of this formula sheds light on the limitations of conventional architectures trained with supervision to learn symmetries from data.
In particular, we find that they are generally unable to extrapolate symmetries observed exhaustively on some classes to other partially sampled classes.
Accurate generalization is only possible when the local structure of the data in kernel space (determined by network architecture) allows for correct generalization.
In other words, conventional networks have no mechanism to learn symmetries that have not been embedded in their architecture \emph{a priori} through equivariance. 
\medskip

\textbf{What are the practical implications of our results?} There may be different remedies to the issues described above, which may also depend on the specific constraints of the problem. We outline here these solutions briefly before entering a more thorough discussion. First, resorting to very large datasets or having prior knowledge about the symmetries of the problem so as to build them into the model via data augmentation or equivariance may suffice in some cases. When this is not possible, introducing soft biases encouraging the network to find equivariant structures may be a fruitful alternative. Pretraining with self-supervised learning losses using the symmetries of interest as augmentation may also alleviate the problem of generalization, but it would require further studies to understand how such pretraining affects learned representations. We cannot exclude that in the feature-rich learning regime, symmetries can be learned and generalized---at least in some cases. Finally, there may be some entirely new learning procedures which could somehow respect the symmetries of the data without explicitly building these symmetries into the model.\medskip

\textbf{Could neural networks learn symmetries outside the NTK regime?} Our theory is only valid in the NTK regime, also sometimes referred to as the `lazy regime', which assumes (1) infinite width, and (2) a certain scaling of the parameters with width (weights scale as $1/\sqrt{N}$ where $N$ is layer width). We found in our experiments that the theory also recapitulates well the training dynamics of finite-width networks using Pytorch default scaling of weights (Kaiming Uniform, weights scale as $1/\sqrt{N}$). However, we cannot exclude that for other initialization schemes, datasets (e.g., CIFAR-10, ImageNet) or network architectures (e.g., deeper MLPs), finite-width networks could successfully learn the symmetry of the problem. Furthermore, other regimes of weight scaling have been studied in the infinite limit, for example the `mean-field' regime (where weights scale as $1/N$) \citep{bach2016breakingcursedimensionalityconvex,mei2018meanfield, chizat2018global}, the `high-dimensional' regime \citep{saglietti2022analytical}, and the `dynamical mean-field' regime \citep{bordelon2022selfconsistent, yang2023tensorprogramsvifeature}. These regimes, often referred to as `feature learning' or `active' regimes, are characterized by a kernel that evolves in time, and thus learns features. This leads to different generalization behaviors both in the finite-width \citep{Geiger_2020} and infinite-width limit \citep{bordelon2022selfconsistent}. It would be interesting to study symmetry learning in these regimes. In a recent study, \citet{jacot2025dnnsbreakcursedimensionality} showed that deep networks in the active regime are able to learn data symmetries. However, their definition of symmetry is that of an invariant subspace warped by a non-linear function (through \emph{low-index functions}, i.e., functions of the form $f(x)=g(Ax)$ where $A$ is a low rank matrix). In contrast, we define a symmetry as the \emph{non-local, linear} action of a group on a dataset.\medskip

\textbf{Our results may seem at odds with the notion that deep networks are biased to find simple solutions} \citep{shah2020pitfalls, ortiz2020neural, zhang2021understanding, power2022grokking, humayun2024deep}.
In our study, networks are blind to the simple symmetric structure of the data.
Were they to notice this simple structure, they could achieve data-efficient generalization.
Importantly, results showing that networks have a bias towards simplicity typically define simplicity as a notion of local structure in the data.
These studies show that networks find the smoothest possible function capturing the training set.
Here, however, the symmetric structure of the data is not a local structure: rotated versions of a digit are not close in pixel space to the upright digit, and they may even be closer to a different digit class altogether.
For example, a `4' upside down might look more like a `6' than like another upright `4'.
It is thus expected that networks with a bias for smooth solutions will fail on such task, in the absence of a mechanism to detect symmetries.
Networks biased to find the \emph{shortest description} of a dataset (in the sense of Kolmogorov complexity) may perform better on this task \citep{valle-perez2018deep}, but progress in this area is limited. Bayesian model selection approaches may be a promising avenue to find such shortest description \citep{vanderwilk2018learninginvariancesusingmarginal, immer2022invariancelearningdeepneural,vanderouderaa2022learninginvariantweightsneural,vanderouderaa2023learninglayerwiseequivariancesautomatically}, and have recently been applied to learning data symmetries \citep{vanderouderaa2024noethersrazorlearningconserved}.
\medskip

\textbf{On the other hand, our findings are compatible with the scaling laws observed when training deep networks} \citep{kaplan2020scaling,dyerPNAS2024}.
Scaling laws show that deep networks continuously improve on generalization as the number of samples in the training set increases.
Mapped to our setup, this corresponds to our observation that network performance gradually increases with the number of angles sampled from the group orbits.
We see no abrupt change in generalization performance that would indicate that the network captures or ``groks" the symmetry invariance, even with very long training times.
\textbf{The absence of a mechanism to capture data symmetries could play a role in the notable data-inefficiency of current deep learning approaches}, inefficiency demonstrated by the evergrowing datasets used for training them (e.g., in computer vision JFT-300M \citep{Sun_2017_ICCV}, IG-3.6B \citep{singh2022revisiting} and LAION-5B \citep{schuhmann2022laion}).
As the number of symmetric transformations present in the data increases (e.g., image rotation, translation, scaling, etc.), we only expect the generalization difficulty to increase, as this leads to a combinatorial explosion of possible transformations \citep{schott2022visual}.
This combinatorial explosion could partly be responsible for the brittleness of current deep learning approaches on edge cases, as observed for instance in \cite{Abbas_Deny_2023}, where it is shown that deep networks for vision fare particularly poorly on \emph{combinations} of symmetric transformations such as scaling and rotation.
\medskip

\textbf{Many methods have been proposed to learn symmetries from data.}
For example, some propose to leverage more supervision by pairing images undergoing a symmetric transformation during training.
This is the case, for instance, of self-supervised learning approaches via joint embedding \citep{geiping2023cookbook}, and of autoencoder approaches \citep{dupont2020equivariant, connor2020representing,keller2021topographic, connor2024learning}, the goal of which is to map one sample to a transformed version of itself via an autoencoder equipped with transformation operators in latent space.
Empirically, these methods are found to somewhat generalize group transformations (e.g., 3D rotations) to object classes that were not seen during training.
Other directions include meta-learning approaches \citep{pmlr-v70-finn17a,yang2020feature}, feature learning approaches \citep{bach2016breakingcursedimensionalityconvex,yang2023tensorprogramsvifeature,jacot2025dnnsbreakcursedimensionality} and dynamic architecture choices \citep{stanley:ec02, Chauhan2024}, which could potentially alleviate the problem of having a fixed, frozen, kernel incapable of adapting to problem symmetries. 
Indeed, our understanding is that networks cannot learn symmetries because their kernel is defined by their architecture, and cannot adapt to the symmetries of a given dataset.
Designing mechanisms to adapt the kernel induced by the network to the symmetries of the problem at stake seems a likely path forward to devise more data-efficient deep networks.
In future work, our theory could be extended to these methods, in order to establish whether they are theoretically able to learn symmetries from data, and if so, to what extent.
\medskip

\textbf{Our theory could also be extended in various ways to capture a richer phenomenology.}
First, we have mainly focused  here on the simple discrete one-dimensional cyclic group. However, our theoretical framework is readily extendable to any finite group including non-abelian groups (see Theorem \ref{thm:nonabelian}), and it would be interesting to study the practical implications of the theory for such more complex groups.
It would also be interesting to understand whether and how the theory could be extended to continuous groups, such as $SO(3)$.
Second, our study focuses on symmetries that are native to the space in which the datasets reside (e.g., image rotations).
However, group actions may more realistically exist in a latent space affecting the dataset indirectly (e.g., images of 3D-rotated objects).
This latter case is particularly interesting because architectures cannot easily be designed to be equivariant to symmetries which are not directly acting in dataset space.
We note, however, that given the failure of conventional architectures to correctly generalize simple symmetries native to the dataset space, we do not expect these same architectures to be able to learn more complex, latent space symmetries.
\medskip

\textbf{A source of inspiration for learning symmetries could be found in cognitive science and neuroscience.}
Empirically, there is evidence that humans are superior to deep networks at some tasks necessitating symmetry invariance. In a recent example, \cite{ollikka2025humans} showed that humans beat state-of-the-art deep networks and most vision-language models at recognizing objects in unusual poses.
Humans are also known to be able to reason about problem symmetries, for example in the problem of mental rotation \citep{Shepard1971}.
Interestingly, the time for subjects to compare one 3D object to another in mental rotation tasks is proportional to the angular difference between the two objects, implying that some recurrent processes in the brain may be involved, recurrence lacking from current state-of-the art architectures in deep learning.
There is also evidence that the mental rotation ability is learned through experience (or at least not fully operational at birth), as studies on infants show that there is a critical age when they are able to perform this task \citep{Bambha27052022}. Conversely, children learning to read need to first unlearn mirror symmetries in order to differentiate characters from their mirrored version \citep{perea2011,pegado2011}.
The ability to capture specific problem symmetries may, however, be partially or fully innate in some animals.
A study on chicks show that they have the ability to recognize an unknown synthetic 3D object from birth, across multiple points of view \citep{wood2013}.
This finding prompts us to reconsider whether humans and animals actively learn problem symmetries, or rely on neural architectures which are pre-configured for specific symmetries.
It is at least clear from behavioral experiments that humans have the ability to adapt to unnatural symmetric transformations of their environment.
Classically, Kohler and Erismann \citep{kohler1963formation} showed that a subject wearing goggles reversing the world upside down can adapt to this new environment after a few days of practice.
The extent to which this adaptation is possible for new, previously unseen symmetries is unclear, however (e.g., what if the goggles permanently permuted the location of every pixel?).
Finally, a direction of interest may be to imitate the physicist: by describing the world symbolically, the physicist comes to discover the symmetries of the world and exploit them for predictions.
Language models and other neuro-symbolic approaches may be able to partly replicate this ability, or at least piggy-back on human accumulated knowledge about the symmetries of the world.

\acks{We thank David Radnell, Milo Orlich, Samuel A. Ocko, Jaakko Lehtinen, Tycho F. A. van der Ouderaa, Arthur Jacot, Jan Gerken, Stefano Sarao Mannelli for useful discussions and additional references, Luigi Acerbi and Martin Trapp for comments on the manuscript, Adityanarayanan Radhakrishnan for his clear and concise lecture notes on neural kernels, and Fabio Anselmi for an insight that led to the proof of Theorem 15. We thank the anonymous reviewers and our editor Aapo Hyvärinen for their constructive feedback and suggestions, and for their time.  Funding: Research Council of Finland grant to S.D. under the Project ``Neuroscience-inspired Deep Learning'': 3357590.} 

\bibliography{24-2175}

\begin{thebibliography}{126}
\providecommand{\natexlab}[1]{#1}
\providecommand{\url}[1]{\texttt{#1}}
\expandafter\ifx\csname urlstyle\endcsname\relax
  \providecommand{\doi}[1]{doi: #1}\else
  \providecommand{\doi}{doi: \begingroup \urlstyle{rm}\Url}\fi

\bibitem[Abbas and Deny(2023)]{Abbas_Deny_2023}
Amro Abbas and St\'{e}phane Deny.
\newblock Progress and limitations of deep networks to recognize objects in
  unusual poses.
\newblock In \emph{Conference on Artificial Intelligence (AAAI)}, 2023.

\bibitem[Ainsworth et~al.(2023)Ainsworth, Hayase, and
  Srinivasa]{ainsworth2023git}
Samuel Ainsworth, Jonathan Hayase, and Siddhartha Srinivasa.
\newblock Git {Re-Basin}: {Merging} models modulo permutation symmetries.
\newblock In \emph{International Conference on Learning Representations
  (ICLR)}, 2023.

\bibitem[Alcorn et~al.(2019)Alcorn, Li, Gong, Wang, Mai, Ku, and
  Nguyen]{strikewithapose}
Michael~A Alcorn, Qi~Li, Zhitao Gong, Chengfei Wang, Long Mai, Wei-Shinn Ku,
  and Anh Nguyen.
\newblock Strike (with) a pose: {Neural} networks are easily fooled by strange
  poses of familiar objects.
\newblock In \emph{Conference on Computer Vision and Pattern Recognition
  (CVPR)}, 2019.

\bibitem[Anselmi et~al.(2019)Anselmi, Evangelopoulos, Rosasco, and
  Poggio]{Anselmi2019}
Fabio Anselmi, Georgios Evangelopoulos, Lorenzo Rosasco, and Tomaso Poggio.
\newblock Symmetry-adapted representation learning.
\newblock \emph{Pattern Recognition}, 2019.

\bibitem[Anselmi et~al.(2023)Anselmi, Manzoni, d’Onofrio, Rodriguez,
  Caravagna, Bortolussi, and Cairoli]{anselmi2023data}
Fabio Anselmi, Luca Manzoni, Alberto d’Onofrio, Alex Rodriguez, Giulio
  Caravagna, Luca Bortolussi, and Francesca Cairoli.
\newblock Data symmetries and learning in fully connected neural networks.
\newblock \emph{IEEE Access}, 2023.

\bibitem[Arora et~al.(2019)Arora, Du, Hu, Li, Salakhutdinov, and
  Wang]{arora2019exact}
Sanjeev Arora, Simon~S Du, Wei Hu, Zhiyuan Li, Russ~R Salakhutdinov, and
  Ruosong Wang.
\newblock On exact computation with an infinitely wide neural net.
\newblock In \emph{Advances in Neural Information Processing Systems
  (NeurIPS)}, 2019.

\bibitem[Avidan et~al.(2025)Avidan, Li, and
  Sompolinsky]{avidan2025connectingntknngpunified}
Yehonatan Avidan, Qianyi Li, and Haim Sompolinsky.
\newblock Unified theoretical framework for wide neural network learning
  dynamics.
\newblock \emph{Physical Review E}, 2025.

\bibitem[Azulay and Weiss(2019)]{azulay2019deep}
Aharon Azulay and Yair Weiss.
\newblock Why do deep convolutional networks generalize so poorly to small
  image transformations?
\newblock \emph{Journal of Machine Learning Research (JMLR)}, 2019.

\bibitem[Bach(2017)]{bach2016breakingcursedimensionalityconvex}
Francis Bach.
\newblock Breaking the curse of dimensionality with convex neural networks.
\newblock \emph{Journal of Machine Learning Research (JMLR)}, 2017.

\bibitem[Bahri et~al.(2024)Bahri, Dyer, Kaplan, Lee, and Sharma]{dyerPNAS2024}
Yasaman Bahri, Ethan Dyer, Jared Kaplan, Jaehoon Lee, and Utkarsh Sharma.
\newblock Explaining neural scaling laws.
\newblock \emph{Proceedings of the National Academy of Sciences (PNAS)}, 2024.

\bibitem[Balestriero et~al.(2023)Balestriero, Ibrahim, Sobal, Morcos, Shekhar,
  Goldstein, Bordes, Bardes, Mialon, Tian, Schwarzschild, Wilson, Geiping,
  Garrido, Fernandez, Bar, Pirsiavash, LeCun, and
  Goldblum]{geiping2023cookbook}
Randall Balestriero, Mark Ibrahim, Vlad Sobal, Ari Morcos, Shashank Shekhar,
  Tom Goldstein, Florian Bordes, Adrien Bardes, Gregoire Mialon, Yuandong Tian,
  Avi Schwarzschild, Andrew~Gordon Wilson, Jonas Geiping, Quentin Garrido,
  Pierre Fernandez, Amir Bar, Hamed Pirsiavash, Yann LeCun, and Micah Goldblum.
\newblock A cookbook of self-supervised learning.
\newblock \emph{arXiv preprint}, 2023.

\bibitem[Bambha et~al.(2022)Bambha, Beckner, Shetty, Voss, Xie, Yiu, LoBue,
  Oakes, and Casasola]{Bambha27052022}
Valerie Bambha, Aaron Beckner, Nikita Shetty, Annika Voss, Jinlin Xie, Eunice
  Yiu, Vanessa LoBue, Lisa Oakes, and Marianella Casasola.
\newblock Developmental changes in children’s object insertions during play.
\newblock \emph{Journal of Cognition and Development}, 2022.

\bibitem[Bekkers(2021)]{bekkers2021bsplinecnnsliegroups}
Erik~J Bekkers.
\newblock B-spline {CNNs} on {Lie} groups.
\newblock \emph{arXiv preprint}, 2021.

\bibitem[Benton et~al.(2020)Benton, Finzi, Izmailov, and
  Wilson]{benton2020learnedaugmentations}
Gregory Benton, Marc Finzi, Pavel Izmailov, and Andrew~G Wilson.
\newblock Learning invariances in neural networks from training data.
\newblock In \emph{Advances in Neural Information Processing Systems
  (NeurIPS)}, 2020.

\bibitem[Bordelon and Pehlevan(2022)]{bordelon2022selfconsistent}
Blake Bordelon and Cengiz Pehlevan.
\newblock Self-consistent dynamical field theory of kernel evolution in wide
  neural networks.
\newblock In \emph{Advances in Neural Information Processing Systems
  (NeurIPS)}, 2022.

\bibitem[Bordelon et~al.(2020)Bordelon, Canatar, and
  Pehlevan]{bordelonPMLR2020a}
Blake Bordelon, Abdulkadir Canatar, and Cengiz Pehlevan.
\newblock Spectrum dependent learning curves in kernel regression and wide
  neural networks.
\newblock In \emph{International Conference on Machine Learning (ICML)}, 2020.

\bibitem[Bouchacourt et~al.(2021)Bouchacourt, Ibrahim, and
  Deny]{bouchacourt2021addressingtopologicaldefectsdisentanglement}
Diane Bouchacourt, Mark Ibrahim, and Stéphane Deny.
\newblock Addressing the topological defects of disentanglement via distributed
  operators.
\newblock \emph{arXiv preprint}, 2021.

\bibitem[Brehmer et~al.(2024)Brehmer, Behrends, de~Haan, and
  Cohen]{brehmer2024doesequivariancematterscale}
Johann Brehmer, Sönke Behrends, Pim de~Haan, and Taco Cohen.
\newblock Does equivariance matter at scale?
\newblock \emph{arXiv preprint}, 2024.

\bibitem[Bronstein et~al.(2021)Bronstein, Bruna, Cohen, and
  Veličković]{bronstein2021geometricdeeplearninggrids}
Michael~M. Bronstein, Joan Bruna, Taco Cohen, and Petar Veličković.
\newblock Geometric deep learning: {Grids}, groups, graphs, geodesics, and
  gauges.
\newblock \emph{arXiv preprint}, 2021.

\bibitem[Canatar et~al.(2021)Canatar, Bordelon, and
  Pehlevan]{canatar2021spectral}
Abdulkadir Canatar, Blake Bordelon, and Cengiz Pehlevan.
\newblock Spectral bias and task-model alignment explain generalization in
  kernel regression and infinitely wide neural networks.
\newblock \emph{Nature Communications}, 2021.

\bibitem[Chauhan et~al.(2024)Chauhan, Zhou, Lu, Molaei, and
  Clifton]{Chauhan2024}
Vinod~Kumar Chauhan, Jiandong Zhou, Ping Lu, Soheila Molaei, and David~A.
  Clifton.
\newblock A brief review of hypernetworks in deep learning.
\newblock \emph{Artificial Intelligence Review}, 2024.

\bibitem[Chen et~al.(2020)Chen, Bai, Zhao, Ament, Gregoire, and
  Gomes]{chen2020simple}
Di~Chen, Yiwei Bai, Wenting Zhao, Sebastian Ament, John Gregoire, and Carla
  Gomes.
\newblock Deep reasoning networks for unsupervised pattern de-mixing with
  constraint reasoning.
\newblock In \emph{International Conference on Machine Learning (ICML)}, 2020.

\bibitem[Chirikjian and Kyatkin(2021)]{Chirikjian2021}
Gregory~S. Chirikjian and Alexander~B. Kyatkin.
\newblock \emph{Engineering Applications of Noncommutative Harmonic Analysis}.
\newblock CRC Press, 2021.

\bibitem[Chizat and Bach(2018)]{chizat2018global}
L\'{e}na\"{\i}c Chizat and Francis Bach.
\newblock On the global convergence of gradient descent for over-parameterized
  models using optimal transport.
\newblock In \emph{Advances in Neural Information Processing Systems
  (NeurIPS)}, 2018.

\bibitem[Chung et~al.(2018)Chung, Lee, and
  Sompolinsky]{chung2018classification}
SueYeon Chung, Daniel~D. Lee, and Haim Sompolinsky.
\newblock Classification and geometry of general perceptual manifolds.
\newblock \emph{Physical Review X}, 2018.

\bibitem[Cohen and Welling(2016)]{cohenICML2016}
Taco Cohen and Max Welling.
\newblock Group equivariant convolutional networks.
\newblock In \emph{International Conference on Machine Learning (ICML)}, 2016.

\bibitem[Cohen et~al.(2019{\natexlab{a}})Cohen, Weiler, Kicanaoglu, and
  Welling]{cohen2019gauge}
Taco Cohen, Maurice Weiler, Berkay Kicanaoglu, and Max Welling.
\newblock Gauge equivariant convolutional networks and the icosahedral {CNN}.
\newblock In \emph{International Conference on Machine Learning (ICML)},
  2019{\natexlab{a}}.

\bibitem[Cohen et~al.(2019{\natexlab{b}})Cohen, Geiger, and
  Weiler]{cohen2020generaltheoryequivariantcnns}
Taco~S Cohen, Mario Geiger, and Maurice Weiler.
\newblock A general theory of equivariant {CNNs} on homogeneous spaces.
\newblock In \emph{Advances in Neural Information Processing Systems
  (NeurIPS)}, 2019{\natexlab{b}}.

\bibitem[Cohen et~al.(2020)Cohen, Chung, Lee, and
  Sompolinsky]{cohen2020separability}
Uri Cohen, SueYeon Chung, Daniel~D. Lee, and Haim Sompolinsky.
\newblock Separability and geometry of object manifolds in deep neural
  networks.
\newblock \emph{Nature Communications}, 2020.

\bibitem[Connor and Rozell(2020)]{connor2020representing}
Marissa Connor and Christopher Rozell.
\newblock Representing closed transformation paths in encoded network latent
  space.
\newblock \emph{Conference on Artificial Intelligence (AAAI)}, 2020.

\bibitem[Connor et~al.(2024)Connor, Olshausen, and Rozell]{connor2024learning}
Marissa Connor, Bruno Olshausen, and Christopher Rozell.
\newblock Learning internal representations of {3D} transformations from {2D}
  projected inputs.
\newblock \emph{Neural Computation}, 2024.

\bibitem[Cranmer et~al.(2020)Cranmer, Greydanus, Hoyer, Battaglia, Spergel, and
  Ho]{cranmer2020lagrangian}
Miles Cranmer, Sam Greydanus, Stephan Hoyer, Peter Battaglia, David Spergel,
  and Shirley Ho.
\newblock Lagrangian neural networks.
\newblock In \emph{ICLR Workshop on Integration of Deep Neural Models and
  Differential Equations}, 2020.

\bibitem[Culpepper and Olshausen(2009)]{culpepper2009learning}
Benjamin Culpepper and Bruno Olshausen.
\newblock Learning transport operators for image manifolds.
\newblock In \emph{Advances in Neural Information Processing Systems
  (NeurIPS)}, 2009.

\bibitem[D'Ascoli et~al.(2021)D'Ascoli, Touvron, Leavitt, Morcos, Biroli, and
  Sagun]{d2021convit}
St{\'e}phane D'Ascoli, Hugo Touvron, Matthew~L Leavitt, Ari~S Morcos, Giulio
  Biroli, and Levent Sagun.
\newblock Convit: {Improving} vision transformers with soft convolutional
  inductive biases.
\newblock In \emph{International Conference on Machine Learning (ICML)}, 2021.

\bibitem[Dosovitskiy et~al.(2021)Dosovitskiy, Beyer, Kolesnikov, Weissenborn,
  Zhai, Unterthiner, Dehghani, Minderer, Heigold, Gelly, Uszkoreit, and
  Houlsby]{dosovitskiy2020image}
Alexey Dosovitskiy, Lucas Beyer, Alexander Kolesnikov, Dirk Weissenborn,
  Xiaohua Zhai, Thomas Unterthiner, Mostafa Dehghani, Matthias Minderer, Georg
  Heigold, Sylvain Gelly, Jakob Uszkoreit, and Neil Houlsby.
\newblock An image is worth 16x16 words: {Transformers} for image recognition
  at scale.
\newblock In \emph{International Conference on Learning Representations
  (ICLR)}, 2021.

\bibitem[Dupont et~al.(2020)Dupont, Martin, Colburn, Sankar, Susskind, and
  Shan]{dupont2020equivariant}
Emilien Dupont, Miguel~Bautista Martin, Alex Colburn, Aditya Sankar, Josh
  Susskind, and Qi~Shan.
\newblock Equivariant neural rendering.
\newblock In \emph{International Conference on Machine Learning (ICML)}, 2020.

\bibitem[Elsayed et~al.(2020)Elsayed, Ramachandran, Shlens, and
  Kornblith]{elsayed2020revisiting}
Gamaleldin Elsayed, Prajit Ramachandran, Jonathon Shlens, and Simon Kornblith.
\newblock Revisiting spatial invariance with low-rank local connectivity.
\newblock In \emph{International Conference on Machine Learning (ICML)}, 2020.

\bibitem[Esmaeili et~al.(2023)Esmaeili, Walters, Zimmermann, and van~de
  Meent]{esmaeili2023topologicalobstructionsavoid}
Babak Esmaeili, Robin Walters, Heiko Zimmermann, and Jan-Willem van~de Meent.
\newblock Topological obstructions and how to avoid them.
\newblock In \emph{Advances in Neural Information Processing Systems
  (NeurIPS)}, 2023.

\bibitem[Farrell et~al.(2022)Farrell, Bordelon, Trivedi, and
  Pehlevan]{Farrell2021}
Matthew Farrell, Blake Bordelon, Shubhendu Trivedi, and Cengiz Pehlevan.
\newblock Capacity of group-invariant linear readouts from equivariant
  representations: {How} many objects can be linearly classified under all
  possible views?
\newblock In \emph{International Conference on Learning Representations
  (ICLR)}, 2022.

\bibitem[Finn et~al.(2017)Finn, Abbeel, and Levine]{pmlr-v70-finn17a}
Chelsea Finn, Pieter Abbeel, and Sergey Levine.
\newblock Model-agnostic meta-learning for fast adaptation of deep networks.
\newblock In \emph{International Conference on Machine Learning (ICML)}, 2017.

\bibitem[Finzi et~al.(2020)Finzi, Stanton, Izmailov, and
  Wilson]{finzi2020generalizing}
Marc Finzi, Samuel Stanton, Pavel Izmailov, and Andrew~Gordon Wilson.
\newblock Generalizing convolutional neural networks for equivariance to {Lie}
  groups on arbitrary continuous data.
\newblock In \emph{International Conference on Machine Learning (ICML)}, 2020.

\bibitem[Fort et~al.(2020)Fort, Dziugaite, Paul, Kharaghani, Roy, and
  Ganguli]{fortNEURIPS2020}
Stanislav Fort, Gintare~Karolina Dziugaite, Mansheej Paul, Sepideh Kharaghani,
  Daniel~M Roy, and Surya Ganguli.
\newblock Deep learning versus kernel learning: {An} empirical study of loss
  landscape geometry and the time evolution of the neural tangent kernel.
\newblock In \emph{Advances in Neural Information Processing Systems
  (NeurIPS)}, 2020.

\bibitem[Garrido et~al.(2024)Garrido, Assran, Ballas, Bardes, Najman, and
  LeCun]{garrido2024learningleveragingworldmodels}
Quentin Garrido, Mahmoud Assran, Nicolas Ballas, Adrien Bardes, Laurent Najman,
  and Yann LeCun.
\newblock Learning and leveraging world models in visual representation
  learning.
\newblock \emph{arXiv preprint}, 2024.

\bibitem[Geiger et~al.(2020)Geiger, Spigler, Jacot, and Wyart]{Geiger_2020}
Mario Geiger, Stefano Spigler, Arthur Jacot, and Matthieu Wyart.
\newblock Disentangling feature and lazy training in deep neural networks.
\newblock \emph{Journal of Statistical Mechanics: Theory and Experiment}, 2020.

\bibitem[Gens and Domingos(2014)]{gensNIPS2014}
Robert Gens and Pedro Domingos.
\newblock Deep symmetry networks.
\newblock In \emph{Advances in Neural Information Processing Systems
  (NeurIPS)}, 2014.

\bibitem[Gerace et~al.(2022)Gerace, Saglietti, Sarao~Mannelli, Saxe, and
  Zdeborová]{gerace2022probing}
Federica Gerace, Luca Saglietti, Stefano Sarao~Mannelli, Andrew Saxe, and Lenka
  Zdeborová.
\newblock Probing transfer learning with a model of synthetic correlated
  datasets.
\newblock \emph{Machine Learning: Science and Technology}, 2022.

\bibitem[Gerken and Kessel(2024)]{gerkenPMLR2024a}
Jan~E Gerken and Pan Kessel.
\newblock Emergent equivariance in deep ensembles.
\newblock In \emph{International Conference on Machine Learning (ICML)}, 2024.

\bibitem[Goldt et~al.(2020)Goldt, M\'ezard, Krzakala, and
  Zdeborov\'a]{goldt2020modeling}
Sebastian Goldt, Marc M\'ezard, Florent Krzakala, and Lenka Zdeborov\'a.
\newblock Modeling the influence of data structure on learning in neural
  networks: {The} hidden manifold model.
\newblock \emph{Physical Review X}, 2020.

\bibitem[Greydanus et~al.(2019)Greydanus, Dzamba, and
  Yosinski]{greydanus2019hamiltonian}
Samuel Greydanus, Misko Dzamba, and Jason Yosinski.
\newblock Hamiltonian neural networks.
\newblock In \emph{Advances in Neural Information Processing Systems
  (NeurIPS)}, 2019.

\bibitem[Gross(1996)]{gross1996role}
David~J. Gross.
\newblock The role of symmetry in fundamental physics.
\newblock \emph{Proceedings of the National Academy of Sciences (PNAS)}, 1996.

\bibitem[Gruver et~al.(2023)Gruver, Finzi, Goldblum, and Wilson]{gruver2023the}
Nate Gruver, Marc~Anton Finzi, Micah Goldblum, and Andrew~Gordon Wilson.
\newblock The {Lie} derivative for measuring learned equivariance.
\newblock In \emph{International Conference on Learning Representations
  (ICLR)}, 2023.

\bibitem[Hajij et~al.(2023)Hajij, Zamzmi, Papamarkou, Miolane, Guzmán-Sáenz,
  Ramamurthy, Birdal, Dey, Mukherjee, Samaga, Livesay, Walters, Rosen, and
  Schaub]{hajij2023topologicaldeeplearninggoing}
Mustafa Hajij, Ghada Zamzmi, Theodore Papamarkou, Nina Miolane, Aldo
  Guzmán-Sáenz, Karthikeyan~Natesan Ramamurthy, Tolga Birdal, Tamal~K. Dey,
  Soham Mukherjee, Shreyas~N. Samaga, Neal Livesay, Robin Walters, Paul Rosen,
  and Michael~T. Schaub.
\newblock Topological deep learning: {Going} beyond graph data.
\newblock \emph{arXiv preprint}, 2023.

\bibitem[Higgins et~al.(2018)Higgins, Amos, Pfau, Racaniere, Matthey, Rezende,
  and Lerchner]{higgins2018definitiondisentangledrepresentations}
Irina Higgins, David Amos, David Pfau, Sebastien Racaniere, Loic Matthey,
  Danilo Rezende, and Alexander Lerchner.
\newblock Towards a definition of disentangled representations.
\newblock \emph{arXiv preprint}, 2018.

\bibitem[Humayun et~al.(2024)Humayun, Balestriero, and
  Baraniuk]{humayun2024deep}
Ahmed~Imtiaz Humayun, Randall Balestriero, and Richard Baraniuk.
\newblock Deep networks always grok and here is why.
\newblock In \emph{International Conference on Machine Learning (ICML)}, 2024.

\bibitem[Ibrahim et~al.(2022)Ibrahim, Bouchacourt, and
  Morcos]{ibrahim2022robustselfsupervisedlearninglie}
Mark Ibrahim, Diane Bouchacourt, and Ari Morcos.
\newblock Robust self-supervised learning with {Lie} groups.
\newblock \emph{arXiv preprint}, 2022.

\bibitem[Ibrahim et~al.(2023)Ibrahim, Garrido, Morcos, and
  Bouchacourt]{ibrahim2023robustness}
Mark Ibrahim, Quentin Garrido, Ari~S. Morcos, and Diane Bouchacourt.
\newblock The robustness limits of so{TA} vision models to natural variation.
\newblock \emph{Transactions on Machine Learning Research (TMLR)}, 2023.

\bibitem[Immer et~al.(2022)Immer, van~der Ouderaa, R\"{a}tsch, Fortuin, and
  van~der Wilk]{immer2022invariancelearningdeepneural}
Alexander Immer, Tycho~F.A. van~der Ouderaa, Gunnar R\"{a}tsch, Vincent
  Fortuin, and Mark van~der Wilk.
\newblock Invariance learning in deep neural networks with differentiable
  laplace approximations.
\newblock In \emph{Advances in Neural Information Processing Systems
  (NeurIPS)}, 2022.

\bibitem[Jacot et~al.(2018)Jacot, Gabriel, and Hongler]{jacotNIPS2018}
Arthur Jacot, Franck Gabriel, and Clement Hongler.
\newblock Neural tangent kernel: {Convergence} and generalization in neural
  networks.
\newblock In \emph{Advances in Neural Information Processing Systems
  (NeurIPS)}, 2018.

\bibitem[Jacot et~al.(2025)Jacot, Choi, and
  Wen]{jacot2025dnnsbreakcursedimensionality}
Arthur Jacot, Seok~Hoan Choi, and Yuxiao Wen.
\newblock How {DNN}s break the curse of dimensionality: {Compositionality} and
  symmetry learning.
\newblock In \emph{International Conference on Learning Representations
  (ICLR)}, 2025.

\bibitem[Jaderberg et~al.(2015)Jaderberg, Simonyan, Zisserman, and
  kavukcuoglu]{jaderberg2015spatial}
Max Jaderberg, Karen Simonyan, Andrew Zisserman, and koray kavukcuoglu.
\newblock Spatial transformer networks.
\newblock In \emph{Advances in Neural Information Processing Systems
  (NeurIPS)}, 2015.

\bibitem[Kaba and Ravanbakhsh(2023)]{kaba2023symmetry}
S{\'e}kou-Oumar Kaba and Siamak Ravanbakhsh.
\newblock Symmetry breaking and equivariant neural networks.
\newblock In \emph{NeurIPS Workshop on Symmetry and Geometry in Neural
  Representations}, 2023.

\bibitem[Kaplan et~al.(2020)Kaplan, McCandlish, Henighan, Brown, Chess, Child,
  Gray, Radford, Wu, and Amodei]{kaplan2020scaling}
Jared Kaplan, Sam McCandlish, Tom Henighan, Tom~B. Brown, Benjamin Chess, Rewon
  Child, Scott Gray, Alec Radford, Jeffrey Wu, and Dario Amodei.
\newblock Scaling laws for neural language models.
\newblock \emph{arXiv preprint}, 2020.

\bibitem[Keller and Welling(2021{\natexlab{a}})]{keller2021topographic}
T.~Anderson Keller and Max Welling.
\newblock Topographic vaes learn equivariant capsules.
\newblock In \emph{Advances in Neural Information Processing Systems
  (NeurIPS)}, 2021{\natexlab{a}}.

\bibitem[Keller and Welling(2021{\natexlab{b}})]{kellerNEURIPS2021}
T.~Anderson Keller and Max Welling.
\newblock Topographic vaes learn equivariant capsules.
\newblock In \emph{Advances in Neural Information Processing Systems
  (NeurIPS)}, 2021{\natexlab{b}}.

\bibitem[Kohler(1963)]{kohler1963formation}
Ivo Kohler.
\newblock The formation and transformation of the perceptual world.
\newblock \emph{Psychological issues}, 1963.

\bibitem[LeCun et~al.(1998)LeCun, Bottou, Bengio, and Haffner]{mnist}
Y.~LeCun, L.~Bottou, Y.~Bengio, and P.~Haffner.
\newblock Gradient-based learning applied to document recognition.
\newblock \emph{Proceedings of the IEEE}, 1998.

\bibitem[Lee et~al.(2018)Lee, Sohl-dickstein, Pennington, Novak, Schoenholz,
  and Bahri]{lee2018deep}
Jaehoon Lee, Jascha Sohl-dickstein, Jeffrey Pennington, Roman Novak, Sam
  Schoenholz, and Yasaman Bahri.
\newblock Deep neural networks as {Gaussian} processes.
\newblock In \emph{International Conference on Learning Representations
  (ICLR)}, 2018.

\bibitem[Liang et~al.(2024)Liang, Liu, Ostrow, and Fiete]{liang2024how}
Qiyao Liang, Ziming Liu, Mitchell Ostrow, and Ila Fiete.
\newblock How diffusion models learn to factorize and compose.
\newblock In \emph{Advances in Neural Information Processing Systems
  (NeurIPS)}, 2024.

\bibitem[Lippl and Stachenfeld(2025)]{lippl2024doescompositionalstructureyield}
Samuel Lippl and Kim Stachenfeld.
\newblock When does compositional structure yield compositional generalization?
  {A} kernel theory.
\newblock In \emph{International Conference on Learning Representations
  (ICLR)}, 2025.

\bibitem[Madan et~al.(2022)Madan, Henry, Dozier, Ho, Bhandari, Sasaki, Durand,
  Pfister, and Boix]{madan2022ood}
Spandan Madan, Timothy Henry, Jamell Dozier, Helen Ho, Nishchal Bhandari,
  Tomotake Sasaki, Frédo Durand, Hanspeter Pfister, and Xavier Boix.
\newblock When and how convolutional neural networks generalize to
  out-of-distribution category–viewpoint combinations.
\newblock \emph{Nature Machine Intelligence}, 2022.

\bibitem[Madan et~al.(2025)Madan, Sasaki, Pfister, Li, and
  Boix]{madan2021small}
Spandan Madan, Tomotake Sasaki, Hanspeter Pfister, Tzu-Mao Li, and Xavier Boix.
\newblock In-distribution adversarial attacks on object recognition models
  using gradient-free search.
\newblock \emph{arXiv preprint}, 2025.

\bibitem[Marchetti et~al.(2024)Marchetti, Hillar, Kragic, and
  Sanborn]{marchetti2024harmonicslearninguniversalfourier}
Giovanni~Luca Marchetti, Christopher~J Hillar, Danica Kragic, and Sophia
  Sanborn.
\newblock Harmonics of learning: {Universal} fourier features emerge in
  invariant networks.
\newblock In \emph{Conference on Learning Theory (COLT)}, 2024.

\bibitem[Mei et~al.(2018)Mei, Montanari, and Nguyen]{mei2018meanfield}
Song Mei, Andrea Montanari, and Phan-Minh Nguyen.
\newblock A mean field view of the landscape of two-layer neural networks.
\newblock \emph{Proceedings of the National Academy of Sciences (PNAS)}, 2018.

\bibitem[Mel and Ganguli(2021)]{mel2021theory}
Gabriel Mel and Surya Ganguli.
\newblock A theory of high dimensional regression with arbitrary correlations
  between input features and target functions: {Sample} complexity, multiple
  descent curves and a hierarchy of phase transitions.
\newblock In \emph{International Conference on Machine Learning (ICML)}, 2021.

\bibitem[Mercatali et~al.(2022)Mercatali, Freitas, and
  Garg]{mercatali2022symmetry}
Giangiacomo Mercatali, Andre Freitas, and Vikas Garg.
\newblock Symmetry-induced disentanglement on graphs.
\newblock In \emph{Advances in Neural Information Processing Systems
  (NeurIPS)}, 2022.

\bibitem[Moskalev et~al.(2023)Moskalev, Sepliarskaia, Bekkers, and
  Smeulders]{moskalevPMLR2023a}
Artem Moskalev, Anna Sepliarskaia, Erik~J Bekkers, and Arnold~W.M. Smeulders.
\newblock On genuine invariance learning without weight-tying.
\newblock In \emph{Proceedings of 2nd Annual Workshop on Topology, Algebra, and
  Geometry in Machine Learning (TAG-ML)}, 2023.

\bibitem[Neal(1996)]{Neal1996}
Radford~M. Neal.
\newblock \emph{Bayesian Learning for Neural Networks}.
\newblock Springer-Verlag, 1996.

\bibitem[Noether(1918)]{Noether1918}
E.~Noether.
\newblock Invariante variationsprobleme.
\newblock \emph{Nachrichten von der Gesellschaft der Wissenschaften zu
  Göttingen, Mathematisch-Physikalische Klasse}, 1918.

\bibitem[Nordenfors and
  Flinth(2024)]{nordenfors2024ensemblesprovablylearnequivariance}
Oskar Nordenfors and Axel Flinth.
\newblock Ensembles provably learn equivariance through data augmentation.
\newblock \emph{arXiv preprint}, 2024.

\bibitem[Novak et~al.(2020)Novak, Xiao, Hron, Lee, Alemi, Sohl-Dickstein, and
  Schoenholz]{Novak2020Neural}
Roman Novak, Lechao Xiao, Jiri Hron, Jaehoon Lee, Alexander~A. Alemi, Jascha
  Sohl-Dickstein, and Samuel~S. Schoenholz.
\newblock Neural tangents: {Fast} and easy infinite neural networks in python.
\newblock In \emph{International Conference on Learning Representations
  (ICLR)}, 2020.

\bibitem[Ollikka et~al.(2025)Ollikka, Abbas, Perin, Kilpel{\"a}inen, and
  Deny]{ollikka2025humans}
Netta Ollikka, Amro Kamal~Mohamed Abbas, Andrea Perin, Markku Kilpel{\"a}inen,
  and Stephane Deny.
\newblock A comparison between humans and {AI} at recognizing objects in
  unusual poses.
\newblock \emph{Transactions on Machine Learning Research (TMLR)}, 2025.

\bibitem[Ortiz-Jimenez et~al.(2020)Ortiz-Jimenez, Modas, Moosavi, and
  Frossard]{ortiz2020neural}
Guillermo Ortiz-Jimenez, Apostolos Modas, Seyed-Mohsen Moosavi, and Pascal
  Frossard.
\newblock Neural anisotropy directions.
\newblock In \emph{Advances in Neural Information Processing Systems
  (NeurIPS)}, 2020.

\bibitem[Pegado et~al.(2011)Pegado, Nakamura, Cohen, and Dehaene]{pegado2011}
Felipe Pegado, Kimihiro Nakamura, Laurent Cohen, and Stanislas Dehaene.
\newblock Breaking the symmetry: {Mirror} discrimination for single letters but
  not for pictures in the visual word form area.
\newblock \emph{NeuroImage}, 2011.

\bibitem[Perea et~al.(2011)Perea, Moret-Tatay, and Panadero]{perea2011}
Manuel Perea, Carmen Moret-Tatay, and Victoria Panadero.
\newblock Suppression of mirror generalization for reversible letters:
  {Evidence} from masked priming.
\newblock \emph{Journal of Memory and Language}, 2011.

\bibitem[P{\'e}rez~Rey et~al.(2023)P{\'e}rez~Rey, Marchetti, Kragic, Jarnikov,
  and Holenderski]{rey2023equivariantrepresentationlearningpresence}
Luis~Armando P{\'e}rez~Rey, Giovanni~Luca Marchetti, Danica Kragic, Dmitri
  Jarnikov, and Mike Holenderski.
\newblock Equivariant representation learning in the presence of stabilizers.
\newblock In \emph{Machine Learning and Knowledge Discovery in Databases:
  Research Track}, 2023.

\bibitem[Pfau et~al.(2020)Pfau, Higgins, Botev, and
  Racani\`{e}re]{pfauNEURIPS2020}
David Pfau, Irina Higgins, Alex Botev, and S\'{e}bastien Racani\`{e}re.
\newblock Disentangling by subspace diffusion.
\newblock In \emph{Advances in Neural Information Processing Systems
  (NeurIPS)}, 2020.

\bibitem[Power et~al.(2022)Power, Burda, Edwards, Babuschkin, and
  Misra]{power2022grokking}
Alethea Power, Yuri Burda, Harri Edwards, Igor Babuschkin, and Vedant Misra.
\newblock Grokking: {Generalization} beyond overfitting on small algorithmic
  datasets.
\newblock \emph{arXiv preprint}, 2022.

\bibitem[Sabour et~al.(2017)Sabour, Frosst, and Hinton]{sabour2017dynamic}
Sara Sabour, Nicholas Frosst, and Geoffrey~E Hinton.
\newblock Dynamic routing between capsules.
\newblock In \emph{Advances in Neural Information Processing Systems
  (NeurIPS)}, 2017.

\bibitem[Saglietti et~al.(2022)Saglietti, Mannelli, and
  Saxe]{saglietti2022analytical}
Luca Saglietti, Stefano Mannelli, and Andrew Saxe.
\newblock An analytical theory of curriculum learning in teacher-student
  networks.
\newblock In \emph{Advances in Neural Information Processing Systems
  (NeurIPS)}, 2022.

\bibitem[Sanborn et~al.(2023)Sanborn, Shewmake, Olshausen, and
  Hillar]{sanborn2023bispectral}
Sophia Sanborn, Christian~A Shewmake, Bruno Olshausen, and Christopher~J.
  Hillar.
\newblock Bispectral neural networks.
\newblock In \emph{International Conference on Learning Representations
  (ICLR)}, 2023.

\bibitem[Saxe et~al.(2019)Saxe, McClelland, and Ganguli]{saxe2019mathematical}
Andrew~M. Saxe, James~L. McClelland, and Surya Ganguli.
\newblock A mathematical theory of semantic development in deep neural
  networks.
\newblock \emph{Proceedings of the National Academy of Sciences (PNAS)}, 2019.

\bibitem[Schott et~al.(2022)Schott, K{\"u}gelgen, Tr{\"a}uble, Gehler, Russell,
  Bethge, Sch{\"o}lkopf, Locatello, and Brendel]{schott2022visual}
Lukas Schott, Julius~Von K{\"u}gelgen, Frederik Tr{\"a}uble, Peter~Vincent
  Gehler, Chris Russell, Matthias Bethge, Bernhard Sch{\"o}lkopf, Francesco
  Locatello, and Wieland Brendel.
\newblock Visual representation learning does not generalize strongly within
  the same domain.
\newblock In \emph{International Conference on Learning Representations
  (ICLR)}, 2022.

\bibitem[Schuhmann et~al.(2022)Schuhmann, Beaumont, Vencu, Gordon, Wightman,
  Cherti, Coombes, Katta, Mullis, Wortsman, Schramowski, Kundurthy, Crowson,
  Schmidt, Kaczmarczyk, and Jitsev]{schuhmann2022laion}
Christoph Schuhmann, Romain Beaumont, Richard Vencu, Cade Gordon, Ross
  Wightman, Mehdi Cherti, Theo Coombes, Aarush Katta, Clayton Mullis, Mitchell
  Wortsman, Patrick Schramowski, Srivatsa Kundurthy, Katherine Crowson, Ludwig
  Schmidt, Robert Kaczmarczyk, and Jenia Jitsev.
\newblock Laion-5b: {An} open large-scale dataset for training next generation
  image-text models.
\newblock In \emph{Advances in Neural Information Processing Systems
  (NeurIPS)}, 2022.

\bibitem[Shah et~al.(2020)Shah, Tamuly, Raghunathan, Jain, and
  Netrapalli]{shah2020pitfalls}
Harshay Shah, Kaustav Tamuly, Aditi Raghunathan, Prateek Jain, and Praneeth
  Netrapalli.
\newblock The pitfalls of simplicity bias in neural networks.
\newblock In \emph{Advances in Neural Information Processing Systems
  (NeurIPS)}, 2020.

\bibitem[Shepard and Metzler(1971)]{Shepard1971}
Roger~N. Shepard and Jacqueline Metzler.
\newblock Mental rotation of three-dimensional objects.
\newblock \emph{Science}, 1971.

\bibitem[Siddiqui et~al.(2023)Siddiqui, Krueger, and
  Breuel]{siddiqui2023investigatingnature3dgeneralization}
Shoaib~Ahmed Siddiqui, David Krueger, and Thomas Breuel.
\newblock Investigating the nature of {3D} generalization in deep neural
  networks.
\newblock \emph{arXiv preprint}, 2023.

\bibitem[Simsek et~al.(2021)Simsek, Ged, Jacot, Spadaro, Hongler, Gerstner, and
  Brea]{pmlr-v139-simsek21a}
Berfin Simsek, Fran{\c{c}}ois Ged, Arthur Jacot, Francesco Spadaro, Clement
  Hongler, Wulfram Gerstner, and Johanni Brea.
\newblock Geometry of the loss landscape in overparameterized neural networks:
  {Symmetries} and invariances.
\newblock In \emph{International Conference on Machine Learning (ICML)}, 2021.

\bibitem[Singh et~al.(2022)Singh, Gustafson, Adcock, De~Freitas~Reis, Gedik,
  Kosaraju, Mahajan, Girshick, Dollar, and Van Der~Maaten]{singh2022revisiting}
Mannat Singh, Laura Gustafson, Aaron Adcock, Vinicius De~Freitas~Reis, Bugra
  Gedik, Raj~Prateek Kosaraju, Dhruv Mahajan, Ross Girshick, Piotr Dollar, and
  Laurens Van Der~Maaten.
\newblock Revisiting weakly supervised pre-training of visual perception
  models.
\newblock In \emph{Conference on Computer Vision and Pattern Recognition
  (CVPR)}, 2022.

\bibitem[Sohl-Dickstein et~al.(2017)Sohl-Dickstein, Wang, and
  Olshausen]{sohl2010unsupervised}
Jascha Sohl-Dickstein, Ching~Ming Wang, and Bruno~A. Olshausen.
\newblock An unsupervised algorithm for learning {Lie} group transformations.
\newblock \emph{arXiv preprint}, 2017.

\bibitem[Sorscher et~al.(2022)Sorscher, Ganguli, and
  Sompolinsky]{sorscher2022neural}
Ben Sorscher, Surya Ganguli, and Haim Sompolinsky.
\newblock Neural representational geometry underlies few-shot concept learning.
\newblock \emph{Proceedings of the National Academy of Sciences (PNAS)}, 2022.

\bibitem[Stanley and Miikkulainen(2002)]{stanley:ec02}
Kenneth~O. Stanley and Risto Miikkulainen.
\newblock Evolving neural networks through augmenting topologies.
\newblock \emph{Evolutionary Computation}, 2002.

\bibitem[Sun et~al.(2017)Sun, Shrivastava, Singh, and Gupta]{Sun_2017_ICCV}
Chen Sun, Abhinav Shrivastava, Saurabh Singh, and Abhinav Gupta.
\newblock Revisiting unreasonable effectiveness of data in deep learning era.
\newblock In \emph{International Conference on Computer Vision (ICCV)}, 2017.

\bibitem[Sundaram et~al.(2022)Sundaram, Sinha, Groth, Sasaki, and
  Boix]{sundaram2022symmetry}
Shobhita Sundaram, Darius Sinha, Matthew Groth, Tomotake Sasaki, and Xavier
  Boix.
\newblock Symmetry perception by deep networks: {Inadequacy} of feed-forward
  architectures and improvements with recurrent connections.
\newblock \emph{arXiv preprint}, 2022.

\bibitem[Tanaka and Kunin(2021)]{tanakaNIPS2021}
Hidenori Tanaka and Daniel Kunin.
\newblock Noether{\textquoteright}s learning dynamics: {Role} of symmetry
  breaking in neural networks.
\newblock In \emph{Advances in Neural Information Processing Systems
  (NeurIPS)}, 2021.

\bibitem[Vadgama et~al.(2025)Vadgama, Islam, Buracas, Shewmake, Moskalev, and
  Bekkers]{vadgama2025probingequivariancesymmetrybreaking}
Sharvaree Vadgama, Mohammad~Mohaiminul Islam, Domas Buracas, Christian
  Shewmake, Artem Moskalev, and Erik Bekkers.
\newblock Probing equivariance and symmetry breaking in convolutional networks.
\newblock \emph{arXiv preprint}, 2025.

\bibitem[Valle-Perez et~al.(2019)Valle-Perez, Camargo, and
  Louis]{valle-perez2018deep}
Guillermo Valle-Perez, Chico~Q. Camargo, and Ard~A. Louis.
\newblock Deep learning generalizes because the parameter-function map is
  biased towards simple functions.
\newblock In \emph{International Conference on Learning Representations
  (ICLR)}, 2019.

\bibitem[van~der Linden et~al.(2024)van~der Linden, Garc\'{\i}a-Castellanos,
  Vadgama, Kuipers, and
  Bekkers]{vanderlinden2024learningsymmetriesweightsharingdoubly}
Putri~A. van~der Linden, Alejandro Garc\'{\i}a-Castellanos, Sharvaree Vadgama,
  Thijs~P. Kuipers, and Erik~J. Bekkers.
\newblock Learning symmetries via weight-sharing with doubly stochastic
  tensors.
\newblock In \emph{Advances in Neural Information Processing Systems
  (NeurIPS)}, 2024.

\bibitem[van~der Ouderaa et~al.(2023)van~der Ouderaa, Immer, and van~der
  Wilk]{vanderouderaa2023learninglayerwiseequivariancesautomatically}
Tycho F.~A. van~der Ouderaa, Alexander Immer, and Mark van~der Wilk.
\newblock Learning layer-wise equivariances automatically using gradients.
\newblock In \emph{Advances in Neural Information Processing Systems
  (NeurIPS)}, 2023.

\bibitem[van~der Ouderaa et~al.(2024)van~der Ouderaa, van~der Wilk, and
  de~Haan]{vanderouderaa2024noethersrazorlearningconserved}
Tycho F.~A. van~der Ouderaa, Mark van~der Wilk, and Pim de~Haan.
\newblock Noether's razor: {Learning} conserved quantities.
\newblock In \emph{Advances in Neural Information Processing Systems
  (NeurIPS)}, 2024.

\bibitem[van~der Ouderaa and van~der
  Wilk(2022)]{vanderouderaa2022learninginvariantweightsneural}
Tycho~F.A. van~der Ouderaa and Mark van~der Wilk.
\newblock Learning invariant weights in neural networks.
\newblock In \emph{Conference on Uncertainty in Artificial Intelligence}, 2022.

\bibitem[van~der Wilk et~al.(2018)van~der Wilk, Bauer, John, and
  Hensman]{vanderwilk2018learninginvariancesusingmarginal}
Mark van~der Wilk, Matthias Bauer, ST~John, and James Hensman.
\newblock Learning invariances using the marginal likelihood.
\newblock In \emph{Advances in Neural Information Processing Systems
  (NeurIPS)}, 2018.

\bibitem[Wang et~al.(2022)Wang, Walters, and Yu]{wang2022approximately}
Rui Wang, Robin Walters, and Rose Yu.
\newblock Approximately equivariant networks for imperfectly symmetric
  dynamics.
\newblock In \emph{International Conference on Machine Learning (ICML)}, 2022.

\bibitem[Weiler et~al.(2021)Weiler, Forré, Verlinde, and
  Welling]{weiler2021coordinateindependentconvolutionalnetworks}
Maurice Weiler, Patrick Forré, Erik Verlinde, and Max Welling.
\newblock Coordinate independent convolutional networks -- isometry and gauge
  equivariant convolutions on riemannian manifolds.
\newblock \emph{arXiv preprint}, 2021.

\bibitem[Wiedemer et~al.(2023)Wiedemer, Mayilvahanan, Bethge, and
  Brendel]{wiedemer2024compositional}
Thadd\"{a}us Wiedemer, Prasanna Mayilvahanan, Matthias Bethge, and Wieland
  Brendel.
\newblock Compositional generalization from first principles.
\newblock In \emph{Advances in Neural Information Processing Systems
  (NeurIPS)}, 2023.

\bibitem[Wood(2013)]{wood2013}
Justin~N. Wood.
\newblock Newborn chickens generate invariant object representations at the
  onset of visual object experience.
\newblock \emph{Proceedings of the National Academy of Sciences (PNAS)}, 2013.

\bibitem[Worrall et~al.(2017)Worrall, Garbin, Turmukhambetov, and
  Brostow]{Worrall2017}
Daniel~E. Worrall, Stephan~J. Garbin, Daniyar Turmukhambetov, and Gabriel~J.
  Brostow.
\newblock Harmonic networks: {Deep} translation and rotation equivariance.
\newblock In \emph{Conference on Computer Vision and Pattern Recognition
  (CVPR)}, 2017.

\bibitem[Yang and Hu(2021)]{yang2020feature}
Greg Yang and Edward~J. Hu.
\newblock Tensor programs {IV}: {Feature} learning in infinite-width neural
  networks.
\newblock In \emph{International Conference on Machine Learning (ICML)}, 2021.

\bibitem[Yang et~al.(2024{\natexlab{a}})Yang, Yu, Zhu, and
  Hayou]{yang2023tensorprogramsvifeature}
Greg Yang, Dingli Yu, Chen Zhu, and Soufiane Hayou.
\newblock Tensor programs {VI}: {Feature} learning in infinite depth neural
  networks.
\newblock In \emph{International Conference on Learning Representations
  (ICLR)}, 2024{\natexlab{a}}.

\bibitem[Yang et~al.(2024{\natexlab{b}})Yang, Dehmamy, Walters, and
  Yu]{yang2024latentspacesymmetrydiscovery}
Jianke Yang, Nima Dehmamy, Robin Walters, and Rose Yu.
\newblock Latent space symmetry discovery.
\newblock In \emph{International Conference on Machine Learning (ICML)},
  2024{\natexlab{b}}.

\bibitem[Yang et~al.(2025)Yang, Park, Lubana, Okawa, Hu, and
  Tanaka]{yang2025swingbydynamicsconceptlearning}
Yongyi Yang, Core~Francisco Park, Ekdeep~Singh Lubana, Maya Okawa, Wei Hu, and
  Hidenori Tanaka.
\newblock Dynamics of concept learning and compositional generalization.
\newblock In \emph{International Conference on Learning Representations
  (ICLR)}, 2025.

\bibitem[Yeh et~al.(2022)Yeh, Hu, Hasegawa-Johnson, and Schwing]{yehPMLR2022}
Raymond~A. Yeh, Yuan-Ting Hu, Mark Hasegawa-Johnson, and Alexander Schwing.
\newblock Equivariance discovery by learned parameter-sharing.
\newblock In \emph{International Conference on Artificial Intelligence and
  Statistics (AISTATS)}, 2022.

\bibitem[Zbontar et~al.(2021)Zbontar, Jing, Misra, LeCun, and
  Deny]{zbontar2021barlow}
Jure Zbontar, Li~Jing, Ishan Misra, Yann LeCun, and Stephane Deny.
\newblock Barlow {Twins}: {Self}-supervised learning via redundancy reduction.
\newblock In \emph{International Conference on Machine Learning (ICML)}, 2021.

\bibitem[Zemel and Hinton(1990)]{zemelNIPS1990}
Richard Zemel and Geoffrey~E Hinton.
\newblock Discovering viewpoint-invariant relationships that characterize
  objects.
\newblock In \emph{Advances in Neural Information Processing Systems
  (NeurIPS)}, 1990.

\bibitem[Zhang et~al.(2021)Zhang, Bengio, Hardt, Recht, and
  Vinyals]{zhang2021understanding}
Chiyuan Zhang, Samy Bengio, Moritz Hardt, Benjamin Recht, and Oriol Vinyals.
\newblock Understanding deep learning (still) requires rethinking
  generalization.
\newblock \emph{Commun. ACM}, 2021.

\bibitem[Zhou et~al.(2021)Zhou, Knowles, and Finn]{zhou2021metalearning}
Allan Zhou, Tom Knowles, and Chelsea Finn.
\newblock Meta-learning symmetries by reparameterization.
\newblock In \emph{International Conference on Learning Representations
  (ICLR)}, 2021.

\bibitem[Ziyin et~al.(2020)Ziyin, Hartwig, and Ueda]{ziyin2020neural}
Liu Ziyin, Tilman Hartwig, and Masahito Ueda.
\newblock Neural networks fail to learn periodic functions and how to fix it.
\newblock In \emph{Advances in Neural Information Processing Systems
  (NeurIPS)}, 2020.

\end{thebibliography}

\clearpage

\appendix

{\centering \LARGE \bfseries Appendix\par}

\section*{Contents}
\addcontentsline{toc}{section}{Appendix Contents}
\startcontents
\printcontents{a}{1}{\setcounter{tocdepth}{2}}\vspace{0.5cm}

All code is available at \href{https://github.com/Andrea-Perin/gpsymm}{\texttt{https://github.com/Andrea-Perin/gpsymm}.}

\section{A (Very Brief) Introduction to Neural Kernel Theories}\label{app:intro_ntk}

Here we briefly introduce the mathematical connection that exists between deep neural networks and kernel methods. We reuse and modify the words of Jan Gerken (\url{https://gapindnns.github.io/_pages/wide_nns.html}), with permission:
\begin{quote}
\itshape
The number of neurons in each layer of a neural network is called the \emph{width} $N$ of that layer. When considering the limit in which the width of all hidden layers goes to infinity (assuming that the weights of the network scale as $1/\sqrt{N}$ to avoid activity divergence), the neural network simplifies dramatically. By an argument using the central limit theorem, one can show that in the infinite width limit, the neurons follow a collective Gaussian distribution known as a Gaussian process. Intuitively, the fluctuations from all the neurons cancel out. This effect was known for a long time as the \emph{Neural Network Gaussian Process (NNGP)} theory \citep{Neal1996} and characterizes the neural network at initialization, i.e. before training has begun. Furthermore, the neural kernel---characterizing dot product similarity of data points in the last infinitely-wide layer of the network---becomes independent of the specific initialization chosen and can be computed using recursive layer-by-layer expressions.\\\\
In 2018, a seminal paper by \citet{jacotNIPS2018} showed that the simplifications go even further than that: In the infinite width limit (again assuming that the weights scale as $1/\sqrt{N}$), the dynamics remain manageable even during training. Specifically, they proved that the kernel of an infinitely wide network remains constant throughout training. This kernel, called the \emph{Neural Tangent Kernel (NTK)}, is equal to the NNGP kernel with additional corrective terms. Briefly, this simplification is possible because all weights in the inner layers remain in the vicinity of their initial values during training, allowing to linearize the network around the initial condition. Under the simple but realistic training paradigm of gradient descent of the mean-squared-error loss, the training dynamics can then be solved analytically in closed form and the prediction of the trained network on any input be computed. The output of the trained network is again a Gaussian process. \\ \\These simplifications in the infinite width limit give powerful insights into the behavior of neural networks.  
\end{quote}

\section{Derivation of the Spectral Error}\label{app:spectral_error}

In this section, we report the computations that give the spectral error in Eq.~\ref{eq:spectral_error}.
We start by describing general Gaussian process (GP) regression, and then move to the actual derivation of our formula, that applies specifically to a group-cyclic dataset.

We consider a dataset $\mathcal{D} = \{ (x_i, y_i)\}_{i=1}^N \subset \R^d\times \R$, so that the samples $x_i$ are $d$ dimensional vectors and labels $y_i$ are scalars.

GP regression is a Bayesian method which, starting from the user's prior information (namely, a mean function $m: \mathcal{X} \to\R$, and a covariance function $k: \mathcal{X}\times \mathcal{X} \to \R$), produces a distribution of regression functions of the type $f: \mathcal{X}\to \R$.
We are interested in the value of such a function at a given test point $x_t \in \mathcal{X}$, conditioned on the values $(x_i, y_i)$ contained in the dataset.
To obtain these pointwise results, one can focus on a finite collection of points $(x_i, y_i) \in\mathcal{X}\times \R$.
Any such finite collection has the property (inherited by the GP) of being jointly Gaussian-distributed as follows:
\begin{align*}
    (y_1, \dots, y_N) \sim \mathcal{N}(\vec{m}, K),
\end{align*}
where $\vec{m}_i=m(x_i)\in\R$ is the mean vector, obtained by evaluating the mean function $m$ over the given points, and $K_{ij} = k(x_i, x_j) \in \R$ is the covariance matrix, obtained by evaluating the covariance function $k$ over all pairs of points $x_i, x_j$ that can be formed in the collection.
We set $m(x)=0$ everywhere.

Equipped with these tools, we can formulate regression on an additional test point $x_t$ as a \textit{conditioning} over a $N+1$ dimensional multivariate Gaussian distribution (MVG), in order to obtain a probability distribution for the value of $y_t$.
In other words, we can consider the set of $N+1$ samples ($N$ from the training set, and the additional one being the test value) to be distributed according to a $N+1$ dimensional MVG:
\begin{align*}
    (y_1, \dots, y_N, y_t) \sim \mathcal{N}\left(0, 
    \begin{bmatrix}
        K & k(\vec{x}, x_t)\\
        k(\vec{x}, x_t)^T & k(x_t, x_t)
    \end{bmatrix}\right), 
\end{align*}
where we use the notation $k(\vec{x}, x_t) \in \R^N$ to denote the column vector that, at $i$-th entry, contains $k(x_i, x_t)$ with $x_i$ being the $i$-th sample in the dataset.
The conditional mean and variance for $y_t$ are described by the formulas:
\begin{align*}
    \mu_{t|\mathcal{D}} &= (K^{-1}k(\vec{x}, x_t))^T \vec{y},\\
    \sigma_{t|\mathcal{D}} &= k(x_t, x_t) - (K^{-1}k(\vec{x}, x_t))^Tk(\vec{x}, x_t),
\end{align*}
where we define $\vec{y}$ to be the vector of the training labels.
The expectation of this distribution (i.e., $\mu_{t|\mathcal{D}}$) will then serve as the result of the Gaussian process regression.
This procedure can be extended for $p$ test points, with all the opportune dimensionality changes (i.e., the result of the conditioning will be a $p$ dimensional MVG).

In Section~\ref{sec:gaussian_toy}, we take the Gaussian (RBF) kernel as the kernel for Gaussian Process regression.
In later sections, we consider various deep network architectures in the infinite width limit.
These also behave like Gaussian Processes, with a kernel that is specified by their architecture \citep{Neal1996, jacotNIPS2018}.

Our proof for the spectral formula of the generalization error does not depend on the specific kernel used, as long as the kernel (Gram) matrix is \emph{circulant over a group of points generated by the action of a cyclic group}.
We demonstrate elsewhere that this is the case both for the Gaussian kernel, as well as all the deep neural kernels that we analyze in this paper (MLP and ConvNets).

We are now ready to present the derivation of Eq.~\ref{eq:spectral_error}. This equation describes the \emph{most likely} error in prediction resulting from Gaussian Process regression on a dataset generated by cyclic group action. We refer to this \emph{most likely} error as \emph{the result of kernel regression} in main text. We do not study the fluctuations of errors around this mean.

We start by presenting a depiction of the conditioning procedure for the case in which the training set consists of a single point (Fig.~\ref{fig:ellipse}).
In the following, we denote labels as $y_0$ and $y_1$, and their respective values as $\mu_0$ and $\mu_1$.
In practice, the $y$s are the training set labels, and the $\mu$s are their values. 
In this case, the training label is $y_0$, while the test label is $y_1$.
The two labels are jointly distributed according to a 2 dimensional Gaussian, with mean 0 and covariance matrix $K$.
This matrix depends on the values of $x_0, x_1$, and the kernel function $k$.
We represent the distribution as an ellipse, which is understood as a level set of the covariance matrix $K$, centered at the mean, 0, in a way similar to the usual representation of confidence levels of a multivariate Gaussian distribution.
Conditioning over the known training label $y_0$ can be interpreted geometrically as slicing the ellipse with a vertical line at coordinate $\mu_0$.
We thus obtain a segment (i.e., a 1-dimensional ellipse), the midpoint of which is the point $(\mu_0, \mu_{1|0})$, where $\mu_{1|0}$ denotes the conditional value of $y_1$ given $x_0, x_1$ and the value of $y_0$.
As a consequence, the result of GP regression can thus be interpreted geometrically as finding the $y_1$ coordinate of the center of the slice.
Doing this will reward us with Eq.~\ref{eq:spectral_error}.

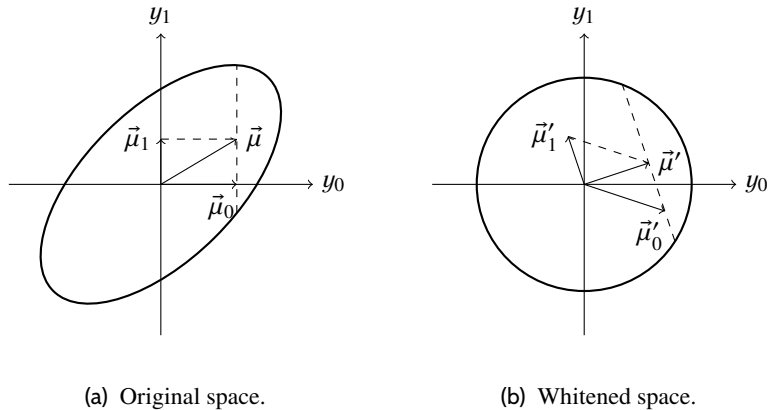
\begin{figure}[H]
    \centering
\begin{subfigure}{0.3\textwidth}
        \centering
\begin{tikzpicture}
\def\A{2} \def\B{1} \def\Angle{45} 

\draw[rotate around={\Angle:(0,0)}, thick] (0,0) ellipse (\A cm and \B cm);

\def\muOne{1}

\pgfmathsetmacro{\AA}{cos(\Angle)*cos(\Angle)/(\A*\A) + sin(\Angle)*sin(\Angle)/(\B*\B)}
    \pgfmathsetmacro{\BB}{sin(\Angle)*sin(\Angle)/(\A*\A) + cos(\Angle)*cos(\Angle)/(\B*\B)}
    \pgfmathsetmacro{\CC}{-2*cos(\Angle)*sin(\Angle)/(\B*\B) + 2*cos(\Angle)*sin(\Angle)/(\A*\A)}

    \pgfmathsetmacro{\discriminant}{\CC*\CC*\muOne*\muOne - 4*\BB*(\AA*\muOne*\muOne - 1)}
    \pgfmathsetmacro{\yplus}{(-\CC*\muOne + sqrt(\discriminant))/(2*\BB)}
    \pgfmathsetmacro{\yminus}{(-\CC*\muOne - sqrt(\discriminant))/(2*\BB)}

\draw[dashed] (\muOne, \yminus) -- (\muOne, \yplus);

\pgfmathsetmacro{\muTwo}{(\yplus + \yminus) / 2}

\draw[dashed] (0, \muTwo) -- (\muOne, \muTwo);

\draw[->] (-\B-1,0) -- (\B+1,0) node[right] {$y_0$};
    \draw[->] (0,-\B-1) -- (0,\B+1) node[above] {$y_1$};

\draw[->] (0,0) -- (\muOne,0) node[left,below,shift={(-0.2,0)}] {$\vec{\mu}_0$};
    \draw[->] (0,0) -- (0,\muTwo) node[left] {$\vec{\mu}_1$};
    \draw[->] (0,0) -- (\muOne,\muTwo) node[right] {$\vec{\mu}$};
    
\end{tikzpicture}         \caption{Original space.}
        \label{fig:ellipse}
    \end{subfigure}
    \hspace{1cm}
\begin{subfigure}{0.3\textwidth}
        \centering
\begin{tikzpicture}
\def\A{2} \def\B{1} \def\Angle{45} \def\radius{1.414}

\def\muOne{1}
\pgfmathsetmacro{\AA}{cos(\Angle)*cos(\Angle)/(\A*\A) + sin(\Angle)*sin(\Angle)/(\B*\B)}
    \pgfmathsetmacro{\BB}{sin(\Angle)*sin(\Angle)/(\A*\A) + cos(\Angle)*cos(\Angle)/(\B*\B)}
    \pgfmathsetmacro{\CC}{-2*cos(\Angle)*sin(\Angle)/(\B*\B) + 2*cos(\Angle)*sin(\Angle)/(\A*\A)}
    \pgfmathsetmacro{\discriminant}{\CC*\CC*\muOne*\muOne - 4*\BB*(\AA*\muOne*\muOne - 1)}
    \pgfmathsetmacro{\yplus}{(-\CC*\muOne + sqrt(\discriminant))/(2*\BB)}
    \pgfmathsetmacro{\yminus}{(-\CC*\muOne - sqrt(\discriminant))/(2*\BB)}
    \pgfmathsetmacro{\muTwo}{(\yplus + \yminus) / 2}

\pgfmathsetmacro{\a}{sin(\Angle)*sin(\Angle)/\B + cos(\Angle)*cos(\Angle)/\A}
    \pgfmathsetmacro{\b}{sin(\Angle)*cos(\Angle) * ((\B-\A)/(\A*\B))}
    \pgfmathsetmacro{\c}{sin(\Angle)*cos(\Angle) * ((\B-\A)/(\A*\B))}
    \pgfmathsetmacro{\d}{cos(\Angle)*cos(\Angle)/\B + sin(\Angle)*sin(\Angle)/\A}

\pgfmathsetmacro{\muOnePx}{\a * \muOne}
    \pgfmathsetmacro{\muOnePy}{\c * \muOne}
    \pgfmathsetmacro{\muTwoPx}{\b * \muTwo}
    \pgfmathsetmacro{\muTwoPy}{\d * \muTwo}
    \pgfmathsetmacro{\yminusx}{\a * \muOne + \b*\yminus}
    \pgfmathsetmacro{\yminusy}{\c * \muOne + \d*\yminus}
    \pgfmathsetmacro{\yplusx}{\a * \muOne + \b*\yplus}
    \pgfmathsetmacro{\yplusy}{\c * \muOne + \d*\yplus}
    \pgfmathsetmacro{\mupx}{\muOnePx + \muTwoPx}
    \pgfmathsetmacro{\mupy}{\muOnePy + \muTwoPy}

\draw[->] (0, 0) -- (\radius*\muOnePx, \radius*\muOnePy) node[below,shift={(-0.2,0)}] {$\vec{\mu}'_0$};
    \draw[->] (0, 0) -- (\radius*\muTwoPx, \radius*\muTwoPy) node[left] {$\vec{\mu}'_1$};
    \draw[->] (0, 0) -- (\radius*\mupx, \radius*\mupy) node[right] {$\vec{\mu}'$};

\draw[dashed] (\radius*\yminusx, \radius*\yminusy) -- (\radius*\yplusx, \radius*\yplusy);
    \draw[dashed] (\radius*\muTwoPx, \radius*\muTwoPy) -- (\radius*\mupx, \radius*\mupy);
\draw[thick] (0,0) circle (\radius cm);

\draw[->] (-\B-1,0) -- (\B+1,0) node[right] {$y_0$};
    \draw[->] (0,-\B-1) -- (0,\B+1) node[above] {$y_1$};

\end{tikzpicture}         \caption{Whitened space.}
        \label{fig:circle}
    \end{subfigure}
    \caption{2D sketch of the geometric interpretation, showing the effect of the whitening transformation.}
    \label{fig:ellipse_to_circle}
\end{figure}

We can make this geometric task easier by turning the ellipse into a circle; we can do this by performing a \textit{whitening} operation on this vector space.
Such whitening is effectively obtained by the inverse of the square root of the covariance matrix $K$.
Denoting the transformed values by priming them, our task is now to find $\vec{\mu}_1'$ given $\vec{\mu}_0'$.\footnote{We have now shifted to a vector notation for these values as, unlike in the original space, both components can be nonzero.}
It is clear that the segment produced by the conditioning turns now into a chord of the circle.
This makes the whitened vector $\Vec{\mu}'$, which is defined so that it points to the midpoint of the segment, \textit{perpendicular} to the segment itself.
This means we can write the following geometric condition:
\begin{align}
    (\Vec{\mu}_0' + \Vec{\mu}_1') \cdot \Vec{\mu}_1' = 0. \label{eq:ortho_2d}
\end{align}

The same properties hold in the case where we have $2N$ samples, as in the main text, and we condition on $2N-1$ of them.
Let us now denote the value we wish to infer as $\mu_0$; the multidimensional extension of Eq.~\ref{eq:ortho_2d} becomes
\begin{align}
    (\sum_{i=0}^{2N-1} \Vec{\mu}_i') \cdot \Vec{\mu}_0' = 0. \label{eq:ortho}
\end{align}

We now move to solve Eq.~\ref{eq:ortho}.
To do so, we make the justified assumption that $K$ is \emph{circulant} (as a consequence of the group-symmetric structure of the dataset).
As such, it is diagonalized by the Discrete Fourier Transform (DFT) basis, $E$:
\begin{align*}
    K = E^{-1} \Lambda E.
\end{align*}

Whitening is obtained by applying the transform given by:
\begin{align*}
    Z = K^{-1/2} = E \Lambda^{-1/2} E^{-1},
\end{align*}
to all the vectors $\Vec{\mu_i}$.

To make computations easier, we collect our vectors in a data matrix $M$ so that the $i$-th column is $\Vec{\mu}_i$, we can collectively transform our samples with a matrix operation.
The collection of transformed (i.e., whitened) samples is denoted as $M'$, and
\begin{align*}
    M' = ZM.
\end{align*}

We now write down a few properties of the matrices at play that will make computations quicker.
Denoting $\delta$ as the Kronecker delta, $\lambda_i$ as the $i$-th eigenvalue of $K$, and $\omega = \exp(-2\pi i/(2N))$, we have
\begin{align*}
    M_{ij} &= \mu_i \delta_{ij},\\
    E_{ij} &= \frac{1}{\sqrt{2N}} \omega^{ij},\\
    E^{-1}_{ij} &= \frac{1}{\sqrt{2N}} \omega^{-ij},\\
    \Lambda^{-1/2}_{ij} &= \frac{1}{\sqrt{\lambda_{i}}} \delta_{ij}.
\end{align*}

We can now compute the value of the $j$-th component of the $i$-th sample in whitened space, that is, $M'_{ji}$:
\begin{align*}
    M'_{ji} &= (ZM)_{ji}\\
    &= \sum_k Z_{jk}M_{ki}\\
    &= \sum_{k} (E \Lambda^{-1} E^{-1})_{jk}M_{ki}\\
    &= \sum_{k l m} E_{jl}\Lambda^{-1/2}_{lm} E^{-1}_{mk} M_{ki}\\
    &= \sum_{k l m} \frac{1}{\sqrt{2N}} \omega^{jl}\frac{1}{\sqrt{\lambda_{l}}} \delta_{lm} \frac{1}{\sqrt{2N}} \omega^{-mk} \mu_k \delta_{ki}\\
    &= \sum_l \frac{1}{2N} \omega^{jl}\frac{1}{\sqrt{\lambda_{l}}} \omega^{-li} \mu_i\\
    &= \sum_l \frac{1}{2N} \frac{\mu_i\omega^{l(j-i)}}{\sqrt{\lambda_{l}}}.
\end{align*}

We can now take Eq.~\ref{eq:ortho} and distribute the dot product over the sum.
We obtain:
\begin{align}
    \sum_{i=0}^{2N-1} \sum_j M_{ji}'M_{j0}' = 0. \label{eq:ortho_indexes}
\end{align}
In general,
\begin{align*}
    \sum_j M_{ji}'M_{j0}' 
    &= \sum_j (\sum_l \frac{1}{2N} \frac{\mu_i\omega^{l(j-i)}}{\sqrt{\lambda_l}})(\sum_k \frac{1}{2N} \frac{\mu_0\omega^{kj}}{\sqrt{\lambda_k}})\\
    &= \sum_{lkj} \frac{1}{(2N)^2} \mu_i\mu_0\frac{\omega^{l(j-i)}\omega^{kj}}{\sqrt{\lambda_l\lambda_k}}\\
    &= \sum_{lkj} \frac{1}{2N} \frac{\mu_i\mu_0\omega^{-il}}{\sqrt{\lambda_l\lambda_k}} \frac{1}{2N}\omega^{j(l+k)}.
\end{align*}
We isolate the rightmost term and use the equality
\begin{align*}
    \sum_j \frac{1}{2N}\omega^{j(l+k)} = \delta_{l,2N-k}.
\end{align*}
Thus, we get
\begin{align*}
    \sum_j M_{ji}'M_{j0}' 
    &= \sum_{kl} \frac{1}{2N} \frac{\mu_i\mu_0\omega^{-il}}{\sqrt{\lambda_l\lambda_k}} \delta_{l,2N-k}\\
    &= \sum_{k} \frac{1}{2N} \frac{\mu_i\mu_0\omega^{-i(2N-k)}}{\sqrt{\lambda_{2N-k}\lambda_k}}\\
    &= \sum_{k} \frac{1}{2N} \frac{\mu_i\mu_0\omega^{ik}}{\lambda_k}.
\end{align*}
Thus, Eq.~\ref{eq:ortho_indexes} becomes, after simplifying the common term $\mu_0$ away,
\begin{align*}
    \sum_{i=0}^{2N-1} \sum_k \frac{1}{2N} \frac{\mu_i\omega^{ik}}{\lambda_k} = 0.
\end{align*}
So far, we have only used the circularity of $K$ in our derivation.
We now make an additional assumption, this time on the values and ordering of the labels.
We require $\mu_{2i}=\mu$, and $\mu_{2i+1}=-\mu$ for $i\in [0, \dots, N-1]$.
This amounts to having our samples be interleaved and have opposing labels.
We can then write the general expression for the value of $\mu_i$, valid for all $i$:
\begin{align}
    \mu_i = \mu \omega^{iN} + (\mu_0-\mu)\delta_{i0}.\label{eq:mu_expression}
\end{align}
Note how, for $i=0$ (that is, the missing point) we recover the unknown value $\mu_0$ for which we want to solve.

With this substitution, we get
\begin{align*}
    \sum_{i=0}^{2N-1}\frac{1}{2N} (\mu \omega^{iN} + (\mu_0-\mu)\delta_{i0})\sum_k \frac{\omega^{ik}}{\lambda_k} = 0.
\end{align*}
The summation thus splits in three parts:
\begin{align*}
    \sum_{i=0}^{2N-1}\frac{\mu}{N} \sum_k \frac{\omega^{i(k+N)}}{\lambda_k} 
    + \sum_{i=0}^{2N-1}\frac{\mu_0}{2N}\sum_k \frac{\omega^{ik}}{\lambda_k}\delta_{i0}
    - \sum_{i=0}^{2N-1}\frac{\mu}{2N}\delta_{i0}\sum_k \frac{\omega^{ik}}{\lambda_k} = 0
\end{align*}
We now go over each of them.
\begin{itemize}
    \item For the first term,
    \begin{align*}
    \sum_{i=0}^{2N-1}\sum_k\frac{\mu}{2N}\frac{\omega^{i(k+N)}}{\lambda_k} &= \mu\sum_k \frac{1}{\lambda_k}\sum_{i=0}^{2N-1}\frac{1}{2N}\omega^{i(k+N)}\\
    &= \mu\sum_k \frac{1}{\lambda_k}\delta_{k,-N}\\
    &= \mu\frac{1}{\lambda_{N}}.
    \end{align*}
    \item The second term contains a Kronecker delta that selects the 0-th component in the sum over index $i$.
    Thus, we get
    \begin{align*}
        \mu_0 \sum_k \frac{1}{2N}\frac{1}{\lambda_k} = \mu_0 \langle \lambda^{-1}\rangle,
    \end{align*}
    where the angled brackets denote the average.
    \item The last term, like the previous one, contains a selection on $i$ due to the Kronecker delta.
    It becomes
    \begin{align*}
        \mu \langle \lambda^{-1}\rangle.
    \end{align*}
\end{itemize}
Putting all together, we get
\begin{align*}
    \mu\frac{1}{\lambda_{N}} + \mu_0 \langle \lambda^{-1}\rangle - \mu \langle \lambda^{-1}\rangle = 0,
\end{align*}
from which, after rearranging, we get a formula for our missing point as a function of the eigenvalues of $K$:
\begin{align}
    \mu_0 = \mu \left(1 - \frac{\lambda_{N}^{-1}}{\langle \lambda^{-1} \rangle}\right).\label{eq:mu0_theory}
\end{align}

The formula for the spectral error, then, is
\begin{align}
    \varepsilon_s \triangleq \frac{\mu_0-\mu}{\mu} \color{black}{= \frac{\lambda_N^{-1}}{\langle \lambda^{-1} \rangle}}.
    \label{eq:eps_theory}
\end{align}

\subsection{Extension to Multiple Missing Points}\label{app:spectral_error_multiple_points}
In the case of multiple missing points, we can find an equivalent of Eq.~\ref{eq:eps_theory}.
It requires solving a $p$ dimensional linear system, where $p$ is the number of missing points.

We denote by $[p]$ the set of indexes for the missing points.
Then, for the missing point with index $j\in[p]$, we have
\begin{align*}
    \sum_{m\in[p]} \mu_m \frac{1}{2N}\sum_l \frac{\omega^{l(j-m)}}{\lambda_l} = \mu \left(\frac{1}{2N}\sum_{n\in[p]} \omega^{nN} \sum_k \frac{\omega^{k(j-n)}}{\lambda_k} - \frac{\omega^{jN}}{\lambda_{N}}\right).
\end{align*}
We notice that this equation can be expressed as the following linear system:
\begin{align*}
    \sum_{m\in[p]} A_{jm}\mu_m = b_j,
\end{align*}
where
\begin{align*}
    A_{jm} = \frac{1}{2N}\sum_l \frac{\omega^{l(j-m)}}{\lambda_l}, \quad b_j=\mu \left(\frac{1}{2N}\sum_{n\in[p]} \omega^{nN} \sum_k \frac{\omega^{k(j-n)}}{\lambda_k} - \frac{\omega^{jN}}{\lambda_{N}}\right).
\end{align*}
One can check that, if we set $[p]=\{0\}$, this formula recovers the expression in Eq.~\ref{eq:eps_theory}.

\section{Extension of the Spectral Error to General Finite Groups}\label{app:nonabelian}

We now present a general formula for the spectral error which holds true for arbitrary finite groups, including non-abelian groups.\\

We start by re-deriving the error for the simple cyclic group using an algebraic formalism which will be easier to generalize for arbitrary finite groups.

\subsection{Alternative Derivation of Spectral Error on Cyclic Groups}\label{app:spectral_nice}

We re-derive here the formula for the spectral error in Eq.~\ref{eq:spectral_error}.
We start by restating the conditions under which the formula holds.

Assume we have an orbit of the cyclic group $C_{2N}$ (where $N$ is the number of points in one class), a labeling function $y: C_{2N}\to\R$ and a kernel function $k: V\times V \to \R$, where $V \cong \R^n$ is some vector space. 
Additionally, assume that $\rho: C_{2N} \to GL(V)$ is a representation of $C_{2N}$, and consider its linear action on elements of $V$. We assume $k$ to be \emph{stationary} with respect to this action.
We can then consider, for a ``seed point" $x\in V$ the $C_{2N}$ orbit $\mathcal{O}x$ generated by the action of $\rho$:
\begin{align*}
    \mathcal{O}x = \{\rho(g)x \mid g\in C_{2N}\} = \{\rho(r^i)x \mid i\in [2N]\},
\end{align*}
where $r\in C_{2N}$ is the generator of the cyclic group.
We collect the kernel function's values on pairs of points in $\mathcal{O}x$, and store them in the kernel matrix $K\in\R^{2N\times 2N}$, such that
\begin{align*}
    K_{ij} = k(x_i, x_j) = k(\rho(r^i)x, \rho(r^j)x) = k(x, \rho(r^{j-i})x), 
\end{align*}
where the last equality is allowed by the stationarity of the kernel function.

Consider now the situation in which we condition kernel regression on all but one point of the dataset. We assume the labels to follow the formula
\begin{align*}
    y_{true}(r^i) = (-1)^i, \quad \mathrm{for}\quad i\in[2N],
\end{align*}
which produces an alternating labeling scheme.
Given the symmetry of the dataset, we can choose the missing point to be the first point (indexed at 0) without loss of generality. We want to quantify the \textit{expected error} of kernel regression on the missing point.
We can express the prediction given by kernel regression using in the Fourier basis:
\begin{align*}
    y = \mathcal{F}\mu - \varepsilon_s \delta_0 = \mathcal{F}(\mu - \varepsilon_s \hat{\delta}_0),
\end{align*}
where $\mathcal{F} \in \C^{2N\times 2N}$ is the inverse DFT matrix, $\mu\in\R^{2N}$ is the vector of Fourier coefficients for the labeling function, $\hat{\delta}_0$ is the vector of Fourier coefficients for the indicator at 0, and $\varepsilon_s$ is the error on the missing point.

Furthermore, for a stationary kernel over an orbit of the group, the kernel matrix is circulant, which allows us to diagonalize it using the same inverse DFT matrix $\mathcal{F}$ (and its transpose):
\begin{align*}
    K = \mathcal{F} \Lambda \mathcal{F}^T.
\end{align*}
Then, as per usual Gaussian Process (GP) regression, we assume a jointly gaussian distribution for the predictions with covariance given by $K$:
\begin{align*}
    p(y) \propto \exp\left(-y^TK^{-1}y\right).
\end{align*}
Substituting the Fourier expressions,
\begin{align*}
    y^TK^{-1}y &= (\mathcal{F}(\mu - \varepsilon_s \hat{\delta}_0))^T \mathcal{F} \Lambda^{-1} \mathcal{F}^T (\mathcal{F}(\mu - \varepsilon_s\hat{\delta}_0)) \\
    &= \mu^T \Lambda^{-1} \mu - \varepsilon_s \mu^T \Lambda^{-1} \hat{\delta}_0 -  \varepsilon_s \hat{\delta}_0^T \Lambda^{-1} \mu + \varepsilon_s^2 \hat{\delta}_0^T \Lambda^{-1}\hat{\delta}_0.
\end{align*}
The maximum likelihood estimate for $\varepsilon_s$ is obtained by differentiating this expression with respect to it, and setting the result to 0.
We obtain
\begin{align*}
    \frac{\partial p(y)}{\partial \varepsilon_s} = 0 \implies
    \mu^T\Lambda^{-1}\hat{\delta}_0 - \varepsilon_s \hat{\delta}_0^T\Lambda^{-1}\hat{\delta}_0 = 0 \implies \varepsilon_s = \frac{\mu^T\Lambda^{-1}\hat{\delta}_0}{\hat{\delta}_0^T\Lambda^{-1}\hat{\delta}_0}.
\end{align*}
Now, we can compute $\hat{\delta}_0$ (note that we use the orthogonal scaling for the DFT):
\begin{align*}
    [\hat{\delta}_0]_i = [\mathcal{F}^T\delta_0]_i = \frac{1}{\sqrt{2N}} \sum_{j=1}^{2N} \omega^{-ij}\delta_{j0} = \frac{1}{\sqrt{2N}}.
\end{align*}
The denominator becomes
\begin{align*}
    \hat{\delta}^T_0 \Lambda^{-1}\hat{\delta}_0 = \sum_{ij} \frac{\delta_{ij}}{2N\lambda_i} =  \langle \lambda^{-1} \rangle.
\end{align*}
As for the numerator, since the labeling function is $(y_{true})_i=(-1)^i$, we have
\begin{align*}
    \mu_i = [\mathcal{F}^T y_{true}]_i = \frac{1}{\sqrt{2N}}\sum_{j=1}^{2N} \omega^{-ij}(-1)^j = 
    \frac{1}{\sqrt{2N}}\sum_{j=1}^{2N} \omega^{-ij}\omega^{jN} = \sqrt{2N}\delta_{iN}.
\end{align*}
Thus, the numerator becomes
\begin{align*}
    \mu^T\Lambda^{-1}\hat{\delta}_0 = \sum_{i,j=1}^{2N} \sqrt{2N}\delta_{iN} \frac{\delta_{ij}}{\lambda_i} \frac{1}{\sqrt{2N}} = \lambda_N^{-1}.
\end{align*}
Thus, the formula for the spectral error becomes
\begin{align*}
    \varepsilon_s = \frac{\lambda_N^{-1}}{\langle \lambda^{-1}\rangle}.
\end{align*}

\subsection{Extension to Arbitrary Groups}\label{app:arbitrary}

The ideas behind the derivation are the same as in App.~\ref{app:spectral_nice}, but require the introduction of additional group-theoretic concepts.
For a good practical introduction to those topics in representation theory and noncommutative harmonic analysis, we refer to \cite{Chirikjian2021}.

Let us consider a finite group $G$ of order $|G|$, and representation $\rho$ on a $d$-dimensional vector space $V$.
For a point $x\in V$, consider the orbit generated by the linear action of the representation $\rho$:
\begin{align*}
    \mathcal{O}_x = \{\rho(g)x \mid g\in G\}.
\end{align*}
Let us also consider a kernel function $k: V \times V \to \R$ that is \textit{stationary} with respect to the action of $G$, meaning that, for $x, x'\in V$ and for any $g\in G$,
\begin{align*}
    k(x, x') = k(g.x, g.x'),
\end{align*}
where the dot notation stands for the action of the group $G$ onto elements of $V$.
We compute the associated kernel matrix $K\in \R^{|G| \times |G|}$ on the $G$-orbit we defined above as follows:
\begin{align*}
    K_{ij} = k(x_i, x_j) = k(\rho(g_i)x, \rho(g_j)x).
\end{align*}
Kernel regression produces predictions on input data points. These predictions are given by the function $y: G \to \C$. 
Under the assumptions of kernel regression, the collection of the $|G|$ predicted values will be jointly Gaussian distributed:
\begin{align}
    p(y) \propto \exp(-\langle y, K^{-1}y\rangle)\label{eq:jointprob},
\end{align}
where the angled brackets denote an inner product in $\C^{|G|}$.

In the following, we are interested in computing the prediction error of a linear kernel regressor on a single missing point in the orbit at $m\in G$.
The predictions, conditioned on all but the missing point $m\in G$, can be written:
\begin{align*}
    y(g) = y_{true}(g) + \varepsilon \delta_m(g),
\end{align*}
where we define the $G$-Kronecker delta function $\delta_h: G\to\C$ as follows:
\begin{align*}
    \delta_h(g) = \begin{cases}
			1 & \text{if $g=h$,}\\
            0 & \text{otherwise.}
		 \end{cases}
\end{align*}
We now express both terms of $y(g)$ in the \emph{generalized Fourier basis}. 
We start with the true labels $y_{true}$:
\begin{align*}
    y_{true}(g) &= \sum_{\rho \in \hat{G}} \sqrt{\frac{d_\rho}{|G|}} \mathrm{Tr}[\hat{y}_{true}(\rho)\rho(g)],
\end{align*}
where $\hat{G}$ denotes the set of equivalent \textit{unitary} irreps of $G$, $d_\rho$ is the dimension of irrep $\rho$, and the matrix functions $\hat{y}_{true}(\rho)$ are the generalized Fourier coefficients for the labeling function $y_{true}$.\footnote{Note that we use the \textit{orthogonal convention} for the normalization constants.}
We then move to the generalized Fourier expression for the $G$-Kronecker delta.
We start from the inverse Fourier transform formula,
\begin{align*}
    \delta_h(g) = \sum_{\rho \in \hat{G}} \sqrt{\frac{d_\rho}{|G|}} \mathrm{Tr}[\hat{\delta}_h(\rho) \rho(g)].
\end{align*}
We compute the Fourier coefficients $\hat{\delta}_h(\rho)$ by projecting the function onto the basis given by the irreps:
\begin{align*}
    \hat{\delta}_h(\rho) = \sum_{g\in G}\sqrt{\frac{d_\rho}{|G|}} \delta_h(g)\rho(g^{-1}) = \sqrt{\frac{d_\rho}{|G|}}\rho(h^{-1}).
\end{align*}
Then,
\begin{align*}
    \delta_h(g) = \sum_{\rho \in \hat{G}} \sqrt{\frac{d_\rho}{|G|}} \mathrm{Tr}\left[\sqrt{\frac{d_\rho}{|G|}}\rho(h^{-1}) \rho(g)\right] = \frac{1}{|G|}\sum_{\rho\in\hat{G}} d_\rho \mathrm{Tr}[\rho(h^{-1}g)].
\end{align*}
Analogously to the cyclic case, the Kronecker delta can be seen as a sum of all irreps in the group.
We thus obtain the following expression for $y$:
\begin{align}
    y(g) &= y_{true}(g) + \varepsilon \delta_m(g) \nonumber \\
    &=\sum_{\rho \in \hat{G}} \sqrt{\frac{d_\rho}{|G|}} \mathrm{Tr}[\hat{y}_{true}(\rho)\rho(g)] + \varepsilon \frac{1}{|G|}\sum_{\rho\in\hat{G}} d_\rho \mathrm{Tr}[\rho(m^{-1}g)] \nonumber \\
&=\sum_{\rho \in \hat{G}} \sqrt{\frac{d_\rho}{|G|}} \mathrm{Tr}\left[\left(\hat{y}_{true}(\rho)\rho(m) + \varepsilon \sqrt{\frac{d_\rho}{|G|}}\rho(e)\right)\rho(m^{-1}g)\right].\label{eq:y_ready_for_shift}
\end{align}

We are looking for the most likely prediction error $\varepsilon$ on the missing point, which is given by solving the equation:
\begin{align}
    \frac{\partial p(y)}{\partial\varepsilon} = \frac{\partial\exp(-\langle y, K^{-1}y\rangle)}{\partial\varepsilon} = 0.\label{eq:maxlikelihood_general}
\end{align}

An intermediate step is computing the inner product $\langle y, K^{-1}y\rangle$. We first need to find an expression for $K^{-1}$.
We start by noticing that, due to the stationarity of the kernel function, any one row of the kernel matrix is a \textit{group function}.
Indeed, for group elements $g,h\in G$, we have
\begin{align*}
    K_{gh} = k(g . x, h . x) = k(x, g^{-1}h . x) \triangleq \kappa_x(g^{-1}h).
\end{align*}
Thus, for a fixed element $x\in V$ (which we drop from the notation in the following), we can write a row of the kernel matrix in its Fourier form:
\begin{align*}
    K_{gh} = \kappa(g^{-1}h) = \sum_{\rho\in\hat{G}} \sqrt{\frac{d_\rho}{|G|}}\mathrm{Tr}[\hat{\kappa}(\rho)\rho(g^{-1})\rho(h)].
\end{align*}
We notice that, due to the symmetry of $K$, it must hold that
\begin{align*}
    \mathrm{Tr}[\hat{\kappa}(\rho)\rho(g^{-1})\rho(h)] = \mathrm{Tr}[\hat{\kappa}(\rho)\rho(h^{-1})\rho(g)],
\end{align*}
which is only possible when $\hat{\kappa}(\rho)$ is real and symmetric for all $\rho$.

Before we compute Eq.~\ref{eq:maxlikelihood_general}, we prove that the specific missing point $m$ can always be set to be the identity element $e$, by correctly permuting the labeling of $y$.
To do so, we introduce the shift operator $L_m$, which acts on group functions $f: G\to \C$ as
\begin{align*}
    L_m[f](g) = f(m^{-1}g).
\end{align*}
Its Fourier expression is
\begin{align*}
    L_m[f](g) = \sum_{\rho\in\hat{G}} \sqrt{\frac{d_\rho}{|G|}} \mathrm{Tr}[\widehat{L_m[f]}(\rho)\rho(g)]= \sum_{\rho\in\hat{G}}\sqrt{\frac{d_\rho}{|G|}}\mathrm{Tr}[\hat{f}(\rho)\rho(m^{-1}g)].
\end{align*}
By comparing this formula with Eq.~\ref{eq:y_ready_for_shift}, we recognize that
\begin{align*}
    y(g) = L_{m}[y_e](g) = y_e(m^{-1}g),
\end{align*}
where we introduce the labeling function $y_e$, which indexes the missing point by the group element $e$:
\begin{align*}
    y_e(g) = \sum_{\rho\in\hat{G}} \sqrt{\frac{d_\rho}{|G|}} \mathrm{Tr}\left[\left(\hat{y}_{true}(\rho)\rho(e) + \varepsilon\sqrt{\frac{d_\rho}{|G|}}\rho(e)\right)\rho(g)\right].
\end{align*}
Thus, we can always set $m=e$ by shifting the labeling function by $m^{-1}$.
The same shifting must be performed on the labels of the kernel matrix.
Indeed, if we look at the $m$-th row of the kernel matrix and shift all group elements by $m^{-1}$, we get
\begin{align*}
    K_{mh} = k(m.x, h.x) \to k(x, (m^{-1}h).x),
\end{align*}
which means that, after shifting, what used to be the $m$-th row is now the first row, and that the elements in the row are also shifted by $m^{-1}$.
We conclude that, without loss of generality, we can set up the problem so that the missing point is at group element $e$.
This is what we will do in the rest of the derivation, denoting
\begin{align}
    \hat{y}_e(\rho) = \hat{y}_{true}(\rho)+\varepsilon\sqrt{\frac{d_\rho}{|G|}}\rho(e).\label{eq:perturbed}
\end{align}

We now prove that we can block-diagonalize the kernel matrix by using a matrix composed of the unitary irreps.
Let us define the unitary matrix
\begin{align*}
    U_{g, (\rho,a,b)} = \sqrt{\frac{d_\rho}{|G|}}\rho_{ab}(g)
\end{align*}
to be the matrix whose columns are the values taken by the (matrix elements of the) irreps of the group.
We then compute $\Tilde{K} = U^*KU$, making use of the Schur orthogonality relations:
\begin{align*}
    &\Tilde{K}_{(\sigma, a, b), (\pi, c, d)} = 
    \sum_{g,h} U^*_{(\sigma, a, b), g}K_{gh}U_{h, (\pi, c, d)} \\
    &= \sum_{g,h} \sqrt{\frac{d_\sigma}{|G|}}\overline{\sigma}_{ab}(g) \left(\sum_{\rho\in\hat{G}} \sqrt{\frac{d_\rho}{|G|}}\mathrm{Tr}[\hat{\kappa}(\rho)\rho(h^{-1})\rho(g)]\right) \sqrt{\frac{d_\pi}{|G|}}\pi_{cd}(h) \\
    &= \sum_{g,h} \sqrt{\frac{d_\sigma}{|G|}}\overline{\sigma}_{ab}(g) \left(\sum_{\rho\in\hat{G}} \sqrt{\frac{d_\rho}{|G|}}\sum_{ijk} \hat{\kappa}(\rho)_{ij}\rho(h^{-1})_{jk}\rho(g)_{ki}\right) \sqrt{\frac{d_\pi}{|G|}}\pi_{cd}(h) \\
    &= \sum_{\rho\in\hat{G}} \sum_{ijk} \sqrt{\frac{d_\rho}{|G|}}\left(\sum_{g} \sqrt{\frac{d_\sigma}{|G|}}\overline{\sigma}_{ab}(g) \rho(g)_{ki}\right) \hat{\kappa}(\rho)_{ij}\left(\sum_h \sqrt{\frac{d_\pi}{|G|}}\rho(h^{-1})_{jk} \pi_{cd}(h) \right)\\
    &= \sum_{\rho\in\hat{G}} \sqrt{\frac{d_\rho d_\pi d_\sigma}{|G|^3}}\sum_{ijk} \frac{|G|}{d_\rho}\delta_{\sigma\rho}\delta_{bk}\delta_{ai} \hat{\kappa}(\rho)_{ij} \frac{|G|}{d_\rho} \delta_{\rho\pi}\delta_{cj}\delta_{kd}\\
    &= \sqrt{\frac{|G|}{d_\sigma}}\delta_{\sigma\pi}\delta_{bd}\hat{\kappa}(\sigma)_{ac}.
\end{align*}
The two Kronecker deltas highlight the doubly block-diagonal structure of the matrix.
Rewriting this equation in its matrix form makes this fact immediately clear:
\begin{align*}
   \Tilde{K} = \bigoplus_{\sigma\in\hat{G}} \sqrt{\frac{|G|}{d_\sigma}}\bigoplus_{i=1}^{d_\sigma} \hat{\kappa}(\sigma).
\end{align*}
This fact is useful because the inverse of a block-diagonal matrix is the block-diagonal matrix of the inverses.
Then,
\begin{align*}
    \Tilde{K}^{-1}_{(\sigma, a, b), (\pi, c, d)} = \sqrt{\frac{d_\sigma}{|G|}}\delta_{\sigma\pi}\delta_{bd}\hat{\kappa}^{-1}(\sigma)_{ac}.
\end{align*}

We can now compute the inner product in Eq.~\ref{eq:jointprob}.
For notational convenience, we temporarily write $\bullet_\star$ instead of $\bullet(\star)$ for quantity $\bullet$ indexed by irrep $\star$.
\begin{align*}
    &\langle y,K^{-1}y\rangle = \sum_{g, h} \overline{y(g)} K^{-1}_{gh}y(h)=\\
    &
    =\sum_{g,h} 
    \sum_{\rho\in\hat{G}} \sqrt{\frac{d_\rho}{|G|}} \mathrm{Tr}[\hat{y}_{e,\rho}^*\rho^*(g)]
    \sum_{\sigma\in\hat{G}}
    \sqrt{\frac{d_\sigma^3}{|G|^3}}\mathrm{Tr}[\hat{\kappa}^{-1}_\sigma \sigma(h^{-1})\sigma(g)] 
    \sum_{\pi\in\hat{G}} \sqrt{\frac{d_\pi}{|G|}} \mathrm{Tr}[\hat{y}_{e,\pi}\pi(h)] 
    \\
    &
    =\sum_{\rho,\sigma,\pi}
    \frac{d_\sigma}{|G|}\sqrt{\frac{d_\rho d_\sigma d_\pi}{|G|^3}}
    \sum_{g,h}
    \mathrm{Tr}[\hat{y}_{e,\rho}^*\rho^*(g)] 
    \mathrm{Tr}[\sigma(g)\hat{\kappa}^{-1}_\sigma\sigma(h^{-1})] 
    \mathrm{Tr}[\hat{y}_{e,\pi}\pi(h)]  \\
    &
    =\sum_{\rho,\sigma,\pi} \frac{d_\sigma}{|G|}\sqrt{\frac{d_\rho d_\sigma d_\pi}{|G|^3}} 
    \sum_{g,h}
    \sum_{a,b}[\hat{y}_{e,\rho}^*]_{ab}\rho^*(g)_{ba}
    \sum_{c,d,f}\sigma(g)_{cd}[\hat{\kappa}^{-1}_\sigma]_{df}\sigma(h^{-1})_{fc}
    \sum_{i,j}[\hat{y}_{e,\pi}]_{ij}\pi(h)_{ji}
    \\
    &
    =\sum_{\rho,\sigma,\pi} \frac{d_\sigma}{|G|}\sqrt{\frac{d_\rho d_\sigma d_\pi}{|G|^3}} 
    \sum_{a,b,c,d,f,i,j}
    \left(\sum_g \rho^*(g)_{ba}\sigma(g)_{cd}
    \sum_h \sigma(h^{-1})_{fc}\pi(g)_{ji}\right)
    [\hat{y}_{e,\rho}^*]_{ab}
    [\hat{\kappa}^{-1}_\sigma]_{df}
    [\hat{y}_{e,\pi}]_{ij}
    \\
    &
    =\sum_{\rho,\sigma,\pi} \frac{d_\sigma}{|G|}\sqrt{\frac{d_\rho d_\sigma d_\pi}{|G|^3}} 
    \sum_{a,b,c,d,f,i,j}
    \left(\frac{|G|}{d_\rho}\delta_{\rho\sigma}\delta_{bc}\delta_{ad}
    \frac{|G|}{d_\sigma}\delta_{\pi\sigma}\delta_{fj}\delta_{ci}\right)
    [\hat{y}_{e,\rho}^*]_{ab}
    [\hat{\kappa}^{-1}_\sigma]_{df}
    [\hat{y}_{e,\pi}]_{ij}
    \\
    &
    =\sum_{\sigma} \sqrt{\frac{d_\sigma}{|G|}} 
    \sum_{a,b,c,d,f,i,j}
    \delta_{bc}\delta_{ad}
    \delta_{fj}\delta_{ci}
    [\hat{y}_{e,\sigma}^*]_{ab}
    [\hat{\kappa}^{-1}_\sigma]_{df}
    [\hat{y}_{e,\sigma}]_{ij}
    \\&
    =\sum_{\sigma} \sqrt{\frac{d_\sigma}{|G|}} 
    \sum_{a,b,j}
    [\hat{y}_{e,\sigma}^*]_{ab}
    [\hat{\kappa}^{-1}_\sigma]_{aj}
    [\hat{y}_{e,\sigma}]_{bj}
    \\&
    =\sum_{\sigma} \sqrt{\frac{d_\sigma}{|G|}} 
    \mathrm{Tr}[
    \hat{y}_{e,\sigma}^*
    \hat{\kappa}^{-1}_\sigma
    \hat{y}_{e,\sigma}
    ].
    \end{align*}

We can now compute Eq.~\ref{eq:maxlikelihood_general}:
\begin{align*}
    \sum_\sigma \sqrt{\frac{d_\sigma}{|G|}} \left(\frac{d_\sigma}{|G|}\varepsilon \mathrm{Tr}[\hat{\kappa}^{-1}_\sigma] + \sqrt{\frac{d_\sigma}{|G|}}\mathrm{Tr}[\sigma^*(e)\hat{\kappa}^{-1}_\sigma\hat{y}_{true,\sigma}]\right) = 0
\end{align*}
Now, remembering that $\sigma(e)=\mathrm{Id}_{d_\sigma}$, we get the formula for the spectral error for a general finite group $G$.
We state the result as the following theorem:
\newpage

\begin{theorem}\label{thm:nonabelian}
    Let $G$ be a finite group, $y: G\to\C$ a labeling function on elements of a $G$-orbit, $k: G\times G\to\R$ a $G$-stationary kernel function that operates on pairs of points belonging to a $G$-orbit.
    The prediction error over element $e\in G$ by kernel regression is given by
    \begin{align}
        \varepsilon = -\frac{\sqrt{|G|}\sum_{\sigma\in\hat{G}} d_\sigma \mathrm{Tr}[\hat{\kappa}^{-1}_\sigma\hat{y}_{true, \sigma}^*] }{\sum_{\sigma\in\hat{G}} d_\sigma^{3/2}\mathrm{Tr}[\hat{\kappa}^{-1}_\sigma]}\label{eq:general_spectral_error},
    \end{align}
    where $\hat{G}$ denotes the set of all equivalent unitary irreps of $G$, $\sigma$ denotes an irrep and $d_\sigma$ its dimension, $\hat{\kappa}^{-1}_\sigma$ are the Fourier coefficients of the first row of the kernel matrix, and $\hat{y}_{true,\sigma}$ are the Fourier coefficients of the ground-truth labeling function.
\end{theorem}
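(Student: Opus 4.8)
The plan is to mirror the algebraic re-derivation of the cyclic-group spectral error given in App.~\ref{app:spectral_nice}, replacing the scalar DFT by the generalized (non-commutative) Fourier transform over the irreducible representations $\hat{G}$. The three structural facts I would establish are: (1) a $G$-stationary kernel makes each row of $K$ a \emph{group function} $\kappa(g^{-1}h)$; (2) such a matrix is block-diagonalized by the unitary matrix of irrep matrix elements via the Schur orthogonality relations; and (3) the maximum-likelihood error under the Gaussian-process conditioning is recovered by differentiating the quadratic form $\langle y, K^{-1}y\rangle$ in $\varepsilon$ and setting the result to zero.

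First I would write the conditioned prediction as $y(g) = y_{true}(g) + \varepsilon\,\delta_m(g)$, where $\delta_m$ is the $G$-Kronecker delta at the missing point $m$, and expand both terms in the generalized Fourier basis $f(g) = \sum_{\rho\in\hat{G}} \sqrt{d_\rho/|G|}\,\mathrm{Tr}[\hat{f}(\rho)\rho(g)]$. Projecting onto the irreps gives $\hat{\delta}_m(\rho) = \sqrt{d_\rho/|G|}\,\rho(m^{-1})$, so the delta carries equal weight on every irrep—the non-abelian analogue of the uniform coefficient $\hat\delta_0$ in the cyclic case. I would then introduce the left-shift operator $L_m[f](g) = f(m^{-1}g)$ and show, by comparing Fourier expressions, that shifting the labeling (and correspondingly relabeling the rows and columns of $K$) lets me set $m = e$ without loss of generality. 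This reduces the perturbed labels to the compact form $\hat{y}_e(\rho) = \hat{y}_{true}(\rho) + \varepsilon\sqrt{d_\rho/|G|}\,\rho(e)$.

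Next I would block-diagonalize $K$. Stationarity gives $K_{gh} = \kappa(g^{-1}h)$, and the symmetry $K_{gh}=K_{hg}$ forces each Fourier coefficient $\hat\kappa(\rho)$ to be real symmetric. Defining $U_{g,(\rho,a,b)} = \sqrt{d_\rho/|G|}\,\rho_{ab}(g)$ and computing $\tilde K = U^*KU$, the Schur orthogonality relations collapse the sums over group elements to yield $\tilde K = \bigoplus_{\sigma} \sqrt{|G|/d_\sigma}\,\bigoplus_{i=1}^{d_\sigma}\hat\kappa(\sigma)$, a doubly block-diagonal matrix whose inverse is the block-diagonal of the inverse blocks. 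Substituting these expansions into $\langle y, K^{-1}y\rangle$ and invoking Schur orthogonality a second time to eliminate the element sums produces the clean quadratic form $\sum_\sigma \sqrt{d_\sigma/|G|}\,\mathrm{Tr}[\hat y_{e,\sigma}^*\,\hat\kappa_\sigma^{-1}\,\hat y_{e,\sigma}]$. Finally, differentiating this with respect to $\varepsilon$ (using the $\varepsilon$-dependence in $\hat y_e$ and $\sigma(e)=\mathrm{Id}$) gives a linear equation in $\varepsilon$ whose solution is exactly Eq.~\ref{eq:general_spectral_error}.

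The main obstacle I anticipate is the index bookkeeping in the two Schur-orthogonality collapses: both the block-diagonalization of $K$ and the evaluation of $\langle y, K^{-1}y\rangle$ involve triple sums over irreps together with sums over group elements and over the matrix indices $a,b,c,d,\dots$, and one must apply $\sum_g \overline{\sigma}_{ab}(g)\rho_{cd}(g) = (|G|/d_\rho)\delta_{\sigma\rho}\delta_{ac}\delta_{bd}$ in precisely the right slots. A secondary subtlety is justifying that $\hat\kappa(\rho)$ is real symmetric—so that $\hat\kappa_\sigma^{-1}$ is well-defined and the quadratic form real—which follows from $K_{gh}=K_{hg}$ but needs care in the non-abelian setting where $\rho(g^{-1})\neq\overline{\rho(g)}$ in general.
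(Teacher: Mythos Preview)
Your proposal is correct and follows essentially the same approach as the paper's proof: the same Fourier expansion of $y=y_{true}+\varepsilon\delta_m$, the same reduction to $m=e$ via the left-shift operator, the same block-diagonalization $\tilde K=\bigoplus_\sigma\sqrt{|G|/d_\sigma}\bigoplus^{d_\sigma}\hat\kappa(\sigma)$ via Schur orthogonality, and the same differentiation of the resulting quadratic form $\sum_\sigma\sqrt{d_\sigma/|G|}\,\mathrm{Tr}[\hat y_{e,\sigma}^*\hat\kappa_\sigma^{-1}\hat y_{e,\sigma}]$. The only minor quibble is your remark that $\rho(g^{-1})\neq\overline{\rho(g)}$: since the paper works with unitary irreps one has $\rho(g^{-1})=\rho(g)^\dagger$, so the real-symmetry of $\hat\kappa(\rho)$ follows directly from $K_{gh}=K_{hg}$ together with reality of $K$, with no extra difficulty.
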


\paragraph{Remark} The generalized spectral error formula differs from the cyclic one in that the alignment between labeling function and inverse kernel is measured as the Frobenius inner product of their generalized Fourier representations (which are stored in matrix coefficients) instead of a simple scalar product in the cyclic case between the Fourier coefficients of the labeling function and of the Gram matrix.

\begin{figure}[h]
    \centering
    \includegraphics{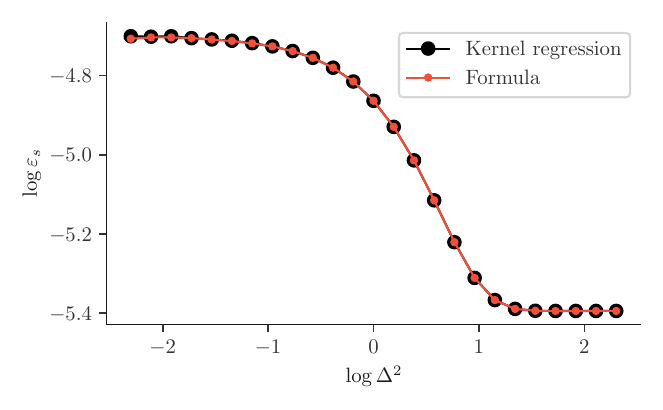}
    \caption{Comparison between the errors as given by Eq.~\ref{eq:general_spectral_error} and by standard numerical kernel regression. Used kernel: RBF. Slight discrepancies in the curves are induced by numerical precision errors in the matrix inversion operation required for standard kernel regression.}
    \label{fig:D4C2}
\end{figure}

\paragraph{Experiment} In Fig.~\ref{fig:D4C2}, we include an example of application of the generalized spectral error formula.
In this example, the group is the direct product $D_4\times C_2$, and it is acting on a 2x2 pixel random image to generate the dataset. The action of $D_4$ is composed of 90 degree rotations and flips of the image, while the $C_2$ action is a sign flip of all pixel values. The labeling (+1 and -1) follows the sign of the action of the $C_2$ subgroup.
We use the RBF kernel to predict the label on a missing point, as we vary the distance between the positive and the negative portions of the orbit, corresponding to the positive and negative actions of the $C_2$ subgroup.
We compute the prediction error using both our formula and standard kernel regression, and verify that they are equivalent.

\section{Architecture Details}\label{app:arch}
All the relevant code can be found at \href{https://github.com/Andrea-Perin/gpsymm}{\texttt{https://github.com/Andrea-Perin/gpsymm}}.
\subsection{Architectures Used in Fig. \ref{fig:symm_learning}}
For Fig.~\ref{fig:symm_learning}, we used the following architectures. We used the default initialization scheme provided by PyTorch (a version of Kaiming uniform). 
\begin{itemize}
    \item a MLP (the \texttt{Rearrange} layer is provided by the \href{https://einops.rocks}{\texttt{einops}} library): 
    \begin{lstlisting}[language=python]
    m = nn.Sequential(
        Rearrange('b 1 h w -> b (h w)'),
        nn.Linear(1*28*28, 512, bias=True),
        nn.ReLU(),
        nn.Dropout(),
        nn.Linear(512, 128, bias=True),
        nn.ReLU(),
        nn.Dropout(),
        nn.Linear(128, 10, bias=False),
        nn.LogSoftmax(dim=1)
    )
    \end{lstlisting}
    
    \item a ConvNet:
    \begin{lstlisting}[language=python]
    c = nn.Sequential(
        nn.Conv2d(1, 24, 5, 1),
        nn.MaxPool2d(kernel_size=2),
        nn.ReLU(),
        nn.Conv2d(24, 32, kernel_size=3),
        nn.MaxPool2d(kernel_size=2),
        nn.ReLU(),
        Rearrange('b c h w -> b (c h w)'),
        nn.Linear(800, 256),
        nn.ReLU(),
        nn.Linear(256, 10),
        nn.LogSoftmax(dim=1)
    )
    \end{lstlisting}
    
    \item a ViT-Simple (using the implementation provided by the library \href{https://github.com/lucidrains/vit-pytorch}{\texttt{vit-pytorch}}):
    \begin{lstlisting}[language=python]
    v = SimpleViT(
        image_size = 28,
        patch_size = 4,
        num_classes = 10,
        dim = 256,
        depth = 2,
        heads = 4,
        mlp_dim = 256,
        channels=1
    )
    \end{lstlisting}
\end{itemize}
The models were trained for 20 epochs using the Adam optimizer, using a learning rate of 1e-3 and $(\beta_1, \beta_2) = (0.7, 0.9).$
Further details on the implementation can be found in the provided code.

\subsection{Architectures Trained on Pairs of Orbits from MNIST}

For all tests involving two orbits from MNIST, we used the following MLP architecture, where the parameters \texttt{args.n\_hidden} defines the depth of the MLP (either 1 or 5).
\begin{lstlisting}[language=python]
W_std, b_std = 1., 1.
layer = nt.stax.serial(
    nt.stax.Dense(512, W_std=W_std, b_std=b_std),
    nt.stax.Relu(),
)
init_fn, apply_fn, kernel_fn = nt.stax.serial(
    nt.stax.serial(*([layer] * args.n_hidden)),
    nt.stax.Dense(1, W_std=W_std, b_std=b_std)
)
\end{lstlisting}

For all tests involving two orbits from MNIST, we used the following Convolutional architecture, where the parameter \texttt{args.kernel\_size} defines the size of the conv kernel, and \texttt{IS\_GAP} defines whether or not to include the \texttt{GlobalAvgPool} layer.
\begin{lstlisting}[language=python]
W_std, b_std = 1., 1.
conv = nt.stax.serial(
    nt.stax.Conv(
        out_chan=64,
        filter_shape=(args.kernel_size, args.kernel_size),
        padding='CIRCULAR',
        W_std=W_std, b_std=None),
    nt.stax.Relu()
)
pool = nt.stax.GlobalAvgPool() if IS_GAP else nt.stax.Identity()
init_fn, apply_fn, kernel_fn = nt.stax.serial(
    conv,
    pool,
    nt.stax.Flatten(),
    nt.stax.Dense(1, W_std=W_std, b_std=None)
)
\end{lstlisting}

\subsection{Architecture Trained on Multiple Seeds and Multiple Classes of Rotated-MNIST}\label{app:multiclass_arch}
For Section~\ref{sec:multiclass}, we use the following network architecture and optimizer.
Note how this architecture is not in principle trainable with a cross-entropy loss, as it outputs a scalar value.
This is needed for the computation of our kernel function.
The actual training, which indeed is based on the cross-entropy loss, involves \emph{all but the last two layers}, i.e., the last \texttt{Relu} and \texttt{Dense} layers.
This is done by basic model surgery techniques, made possible by the simple sequential structure of the model.
\begin{lstlisting}[language=python]
def net_maker(
    W_std: float = 1.,
    b_std: float = 1.,
    dropout_rate: float = 0.5,
    mode: str = 'train'
):
    return nt.stax.serial(
        nt.stax.Dense(512, W_std=W_std, b_std=b_std),
        nt.stax.Relu(),
        nt.stax.Dropout(rate=dropout_rate, mode=mode),
        nt.stax.Dense(128, W_std=W_std, b_std=b_std),
        nt.stax.Relu(),
        nt.stax.Dropout(rate=dropout_rate, mode=mode),
        nt.stax.Dense(10, W_std=W_std, b_std=None),
        nt.stax.Relu(),
        nt.stax.Dense(1, W_std=W_std, b_std=b_std)
    )

optim = optax.adam(learning_rate=1e-3, b1=0.7, b2=0.9)
\end{lstlisting}

\section{Extension of the Spectral Theory to Equivariant Architectures}\label{app:equi}

We study how the symmetries of the dataset interact with equivariant architectures.  We limit our analysis to the Neural Network Gaussian Process (NNGP), as the Neural Tangent Kernel (NTK) adds extra terms that would cloud the derivations. Essentially, the same proofs should hold for the NTK.

We distinguish two cases: (1) when the neural network is equivariant to the symmetry of interest; (2) when the neural network is equivariant, but not to the symmetry of interest. The number of possible symmetries to consider is vast, as well as the number of possible equivariant architectures. Here, we study the interplay between equivariance and symmetries in a limited setup: we consider spatially convolutional neural networks, and we study their interplay with a dataset with translation symmetry (the symmetry to which the network is equivariant) and with a dataset with rotational symmetry. These examples should give the reader an intuition as to how these two things interact in general.

 \paragraph{Dataset with translational symmetry.}
We consider a dataset composed of seed images and all their translations.
For the purpose of the proofs to follow, we will focus on a single orbit of this dataset, which consists of a single seed point $x_s \in \R^{n \times n} $ and all of its translations:
\begin{align*}
    \mathcal{O}_T &= \{ g_T^0.x_s, g_T^1.x_s, \cdots g_T^{n-1}.x_s\},
\end{align*}
where the translation operator $g_T$ acts on images by circularly shifting them along one of the dimensions.
For pixel coordinates $(i_x,i_y)$, and the corresponding value of the pixel $x(i_x, i_y)$, we write:
\begin{align*}
    g_T.x\left(\begin{bmatrix}
            i_x \\
            i_y
    \end{bmatrix}\right) = x\left(\begin{bmatrix}
            (i_x + 1) \bmod n\\
            i_y
    \end{bmatrix}\right).
\end{align*}

\paragraph{Dataset with rotational symmetry.}
We consider a dataset composed of seed images and all their rotations in $C_4$ (we limit ourselves to 4 cardinal rotations to avoid definitional problems of image rotation on discrete pixel grids).
We will focus on a single orbit of this dataset, which consists in a single seed point $x_s \in \R^{n \times n} $ and all its rotations: 
\begin{align*}
    \mathcal{O}_R &= \{ g_R^0.x_s, g_R^1.x_s,  g_R^{2}.x_s, g_R^{3}.x_s\}
\end{align*}
where the rotation operator $g_R$ permutes pixel coordinates $(i_x,i_y)$ as follows:

\begin{align*}
    g_R. x\left(\begin{bmatrix}
            i_x \\
            i_y
    \end{bmatrix}\right) = x\left(\begin{bmatrix}
            i_y\\
            n-i_x
    \end{bmatrix}\right).
\end{align*}

We also distinguish between two types of convolutional architectures: (1) convolutional architectures where the last layer is fully connected: these architectures do not ensure full invariance to translation; (2) convolutional architectures where the last layer does global average pooling, ensuring invariance to translation.

\paragraph{Fully connected convolutional network (FC).} We consider a fully connected convolutional network with one hidden layer, filters of size 3x3, circular padding and stride of 1.
The network \( f : \R^{n \times n} \to \R \) is parameterized by:
\begin{align*}
f_\text{FC}(x) = A \frac{1}{\sqrt{k}} \phi(B \circledast x)_v,
\end{align*}
where \( A \in \R^{1 \times n^2k} \), \( B \in \R^{k \times 1 \times 3 \times 3} \), \(\circledast\) denotes the spatial convolution operation, and for any matrix \( u \in \R^{n \times n} \), \( u_v \in \R^{n^2} \) denotes the vectorization (i.e., flattening) of \( u \).
This network first applies a convolutional layer to the data, then flattens the resulting representation into a vector, and passes it through a fully connected layer.

Let \( K \in \mathbb{R}^{n^2 \times n^2} \) denote the Conv-NNGP for a 1-layer fully convolutional network operating on a pair of images \( x, x' \) of size \( n \times n \). This kernel contains an entry for every coordinate quadruplet \( (i_x, i_y, i_x', i_y') \) across the images \( x, x' \). On the other hand, the Conv-NNGP of the network above \(f_\text{FC}\) reduces this representation to a kernel of size \( K_\text{FC} \in \mathbb{R} \), by letting \( K_\text{FC} = \operatorname{Tr}(K) \) \citep{arora2019exact}: 

\begin{align*}
    K_\text{FC}(x,x') &= \sum_{i_x, i_y} K_{i_x, i_y, i_x, i_y}(x,x')\\
    &= \sum_{i_x, i_y} \sum_k \Big\langle
    \phi(B_{k,i_x,i_y} \cdot x), \phi(B_{k,i_x,i_y} \cdot x')
    \Big\rangle_\Theta
\end{align*}
where $k$ denotes the filter index, and $\langle.,.\rangle_\Theta$ the average dot product of the embeddings over the randomly sampled weights of the models.

\paragraph{Global Average Pooling convolutional network (GAP).} We consider a convolutional network with global average pooling at the last layer.
This network is invariant to discrete translations.
We consider the network \( f_\text{GAP} : \R^{n \times n} \to \R \) be parameterized by:
\begin{align*}
f_\text{GAP}(x) = \frac{1}{\sqrt{k}n^2} \sum_{k} \sum_{i_x} \sum_{i_y} A_{1 k} \phi(B_{k,i_x,i_y} \cdot x)
\end{align*}
where \( A \in \R^{1 \times k} \), and \( B \in \R^{k \times 1 \times 3 \times 3} \) with \( B_k \in \R^{1 \times 1 \times 3 \times 3} \) indexing filter \( k \) of \( B \).
$B_{k, i_x, i_y} \in \R^{1\times 1 \times n\times n}$ is obtained by centering $B_k$ at coordinates $(i_x, i_y)$ of an $n\times n$ grid with periodic boundary conditions, and filling the remaining entries with zeros.
Then, the dot product is understood as the sum of the elementwise multiplications of all entries of $x$ and $B_{k, i_x, i_y}$.
We remark that this operation is effectively an alternative way of describing a convolution with filter bank $B$, but this indexing choice proves useful in the proofs.
After applying a convolutional layer to the data, this network averages the resulting representation across each of the \( k \) output channels, and then takes a linear combination of these averages using a fully connected layer.

The Conv-NNGP of \( f_\text{GAP} \) reduces \( K \) to \( K_\text{GAP} \) by letting \( K_\text{GAP} = \frac{1}{n^4} \sum_{i_x,i_y,i_x',i_y'} K_{i_x,i_y,i_x',i_y'} \) (i.e., averaging over all elements of \( K \)). 
In the following, we omit this prefactor $1/n^4$ for notational convenience.
\begin{align*}
    K_\text{GAP}(x,x') 
    &= \sum_{\substack{i_x, i_y \\ i_x', i_y'}} K_{i_x,i_y,i_x',i_y'}(x,x')  \\
    &= \sum_{\substack{i_x, i_y \\ i_x', i_y'}} \sum_k \Big\langle
    \phi(B_{k,i_x,i_y} \cdot x), \phi(B_{k,i_x',i_y'} \cdot x')
    \Big\rangle_\Theta
\end{align*}

\setcounter{theorem}{10}

\begin{proposition}
The kernel matrix of a fully connected convolutional network $K_\text{FC}$ over a translation orbit $O_{T}$ is circulant. Moreover, this kernel matrix is in general not constant or rank-deficient.
\end{proposition}

\begin{proof}

We first show that the kernel of a fully connected network is circulant over an orbit of image rotations. 
Consider a pair of images from this orbit \( (x, x') \in O_T \). We show below that applying the same image rotation $g_T$ to both these images leaves the kernel unaffected, which is equivalent to showing that the kernel matrix is circulant over the orbit $O_T$. First, we write:

\begin{align*}
    K_\text{FC}(g_T.x,g_T.x') 
    &= \sum_{i_x, i_y} \sum_k \Big\langle
    \phi(B_{k,i_x,i_y} \cdot g_T.x), \phi(B_{k,i_x,i_y} \cdot g_T.x)
    \Big\rangle_\Theta \\
    &= \sum_{i_x, i_y} \sum_k \Big\langle
    \phi(g_T^{-1}.B_{k,i_x,i_y} \cdot x), \phi(g_T^{-1}.B_{k,i_x,i_y} \cdot x)
    \Big\rangle_\Theta
\end{align*}

Applying a translation to image $x$ is equivalent to applying a change of spatial index to the convolutional filters, $g_T^{-1}.B_{k,i_x,i_y}$ =  $B_{k,i_x-1 \bmod n,i_y}$, such that:
\begin{align*}
    K_\text{FC}(g_T\cdot x,g_T\cdot x') 
    &= \sum_{i_x, i_y} \sum_k \Big\langle
    \phi(B_{k,i_x -1 \bmod n} \cdot x), \phi(B_{k,i_x-1 \bmod n,i_y}  \cdot x)
    \Big\rangle_\Theta
\end{align*}

We operate the changes of variable with renaming $i_x \gets i_x -1 \bmod n$: 
\begin{align*}
    K_\text{FC}(g_T.x,g_T.x') &= \sum_{i_x, i_y} \sum_k \Big\langle
    \phi(B_{k,i_x,i_y} \cdot x), \phi(B_{k,i_x,i_y} \cdot x)
    \Big\rangle_\Theta\\
    &= K_\text{FC}(x,x')
\end{align*}

This concludes our proof that the kernel of a fully connected convolutional network is circulant over an orbit of image translations.

Another question of interest is whether the kernel is constant over pairs of images belonging to the same orbit. Consider an image $x$ and its transformation $g_T.x$.
\begin{align*}
    K_\text{FC}(x,g_T.x) 
    &= \sum_{i_x, i_y} \sum_k \Big\langle
    \phi(B_{k,i_x,i_y} \cdot x), \phi(B_{k,i_x,i_y} \cdot g_T.x)
    \Big\rangle_\Theta\\
    &= \sum_{i_x, i_y} \sum_k \Big\langle
    \phi(B_{k,i_x,i_y} \cdot x), \phi(g_T^{-1}.B_{k,i_x,i_y} \cdot x)
    \Big\rangle_\Theta\\
    &= \sum_{i_x, i_y} \sum_k \Big\langle
    \phi(B_{k,i_x,i_y} \cdot x), \phi(B_{k,i_x-1 \bmod n,i_y} \cdot x)
    \Big\rangle_\Theta
\end{align*}

It is clear that there is no change of variable that could make this kernel equal to $K_\text{FC}(x,x)$. In general, 
\begin{align*}
    K_\text{FC}(x,g_T.x) \neq K_\text{FC}(x,x)
\end{align*}

In other words, the fully connected convolutional network does not in general produce invariance to the translation transformation. Its kernel over an orbit is in general not constant.

To prove that $K_\text{FC}$ is not, in general, rank-deficient, one can consider a seed image where only one pixel is active, and consider shifts in the images of size larger than the filter's size.
Doing so results in a kernel matrix over the orbit that is a multiple of the identity, which is full rank.

\end{proof}

\begin{proposition}
 The kernel matrix of a global average pooling convolutional network $K_\text{GAP}$ over a translation orbit $O_{T}$ is constant.
\end{proposition}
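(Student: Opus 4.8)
The plan is to show something stronger than in the fully connected case: whereas the proof of the preceding proposition established that $K_\text{FC}$ is invariant only under \emph{simultaneous} translation of both arguments (yielding a circulant matrix), I would prove that $K_\text{GAP}$ is invariant under translation of \emph{each argument independently}. This immediately forces every entry of the Gram matrix over $O_T$ to equal the single value $K_\text{GAP}(x_s,x_s)$, hence the matrix is constant.

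First I would start from the definition
\begin{align*}
    K_\text{GAP}(x,x') = \sum_{\substack{i_x, i_y \\ i_x', i_y'}} \sum_k \Big\langle \phi(B_{k,i_x,i_y} \cdot x),\, \phi(B_{k,i_x',i_y'} \cdot x') \Big\rangle_\Theta,
\end{align*}
and reuse the filter-shift identity already exploited in the FC proof, namely $B_{k,i_x,i_y} \cdot g_T.x = g_T^{-1}.B_{k,i_x,i_y} \cdot x = B_{k,\,i_x-1 \bmod n,\,i_y} \cdot x$. Substituting $g_T.x$ for $x$ in the first argument gives
\begin{align*}
    K_\text{GAP}(g_T.x,x') = \sum_{\substack{i_x, i_y \\ i_x', i_y'}} \sum_k \Big\langle \phi(B_{k,\,i_x-1 \bmod n,\,i_y} \cdot x),\, \phi(B_{k,i_x',i_y'} \cdot x') \Big\rangle_\Theta.
\end{align*}
The crucial next step is the change of variable $i_x \gets i_x - 1 \bmod n$: because the outer sum runs over \emph{all} spatial positions, the index $i_x$ ranges over a complete residue system modulo $n$, so this shift is a bijection on the summation set and leaves the total sum unchanged. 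Thus $K_\text{GAP}(g_T.x,x') = K_\text{GAP}(x,x')$. By the symmetry of the kernel (or by repeating the argument on the primed indices), the same holds for the second argument, $K_\text{GAP}(x,g_T.x') = K_\text{GAP}(x,x')$. Iterating both invariances, for any $a,b$ one obtains $K_\text{GAP}(g_T^a.x_s,\, g_T^b.x_s) = K_\text{GAP}(x_s,x_s)$, so all entries coincide and the matrix is constant.

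The main obstacle—really the single conceptual point—is recognizing \emph{why} GAP gives independent-argument invariance while FC does not. In the FC kernel only the diagonal coordinate quadruplets $(i_x,i_y,i_x,i_y)$ survive, so a shift of one argument cannot be absorbed by relabeling; one must shift both arguments in tandem, which is exactly what produces a circulant (but non-constant) matrix. In GAP the full double sum over $(i_x,i_y)$ and $(i_x',i_y')$ decouples the two images' spatial indices, and it is precisely this decoupling that lets a one-sided translation be reabsorbed as a cyclic relabeling. Everything after this observation is the routine index bookkeeping sketched above, so the proof is short once the structural distinction between the two pooling schemes is made explicit.
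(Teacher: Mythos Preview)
Your proposal is correct and takes essentially the same approach as the paper: both use the filter-shift identity $B_{k,i_x,i_y}\cdot g_T.x = B_{k,\,i_x-1 \bmod n,\,i_y}\cdot x$ and then absorb the shift via a cyclic change of variable in the independent spatial sum, the key point being that GAP sums over $(i_x,i_y)$ and $(i_x',i_y')$ separately. The only cosmetic differences are that the paper translates the second argument rather than the first, and leaves the iteration to all pairs implicit, whereas you spell it out and add the (correct and useful) contrast with the FC case.
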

\begin{proof}

We now study how a global average pooling layer affects the kernel. We consider two successive images from the translation orbit $(x, g_T.x) \in O_T$: 
\begin{align*}
    K_\text{GAP}(x,g_T.x) &= \sum_{\substack{i_x, i_y \\ i_x', i_y'}} \sum_k \Big\langle
    \phi(B_{k,i_x,i_y} \cdot x), \phi(B_{k,i_x',i_y'} \cdot (g_T.x))
    \Big\rangle_\Theta \\
    &= \sum_{\substack{i_x, i_y \\ i_x', i_y'}} \sum_k \Big\langle
    \phi(B_{k,i_x,i_y} \cdot x), \phi(g_T^{-1}.B_{k,i_x',i_y'} \cdot (x))
    \Big\rangle_\Theta \\
    &= \sum_{\substack{i_x, i_y \\ i_x', i_y'}} \sum_k \Big\langle
    \phi(B_{k,i_x,i_y} \cdot x), \phi(B_{k,i_x' -1 \bmod n,i_y'} \cdot (x))
    \Big\rangle_\Theta
\end{align*}

By the change of variable with renaming $i_x' \gets i_x'-i \mod n$, we recover: 
\begin{align*}
    K_\text{GAP}(x,g_T.x) &= \sum_{\substack{i_x, i_y \\ i_x', i_y'}} \sum_k \Big\langle
    \phi(B_{k,i_x,i_y} \cdot x), \phi(B_{k,i_x',i_y'} \cdot x)
    \Big\rangle_\Theta \\
    &= K_\text{GAP}(x,x)
\end{align*}

The kernel of a global average pooling convolutional network computed over a translation orbit is thus constant.

In our spectral error formula, the kernel over a pair of orbits will thus be rank-deficient (each orbit-wise block being constant), such that some inverse eigenvalues in the denominator will diverge. The spectral error will thus go to 0 (perfect generalization).
We recover the well-known fact that a global average pooling convolutional network is invariant to translations. 

\end{proof}

\begin{proposition}
The kernel matrix of a fully connected convolutional network $K_\text{FC}$ over a rotation orbit $O_{R}$ is circulant, but in general not rank-deficient or constant. 
\end{proposition}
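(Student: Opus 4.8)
The plan is to follow the template used for the translation case (the preceding proposition for $K_\text{FC}$ over $O_T$), replacing the translation operator $g_T$ by the rotation operator $g_R$ and paying close attention to the single place where the two arguments genuinely diverge. I would first establish circulancy, then non-constancy, and finally exhibit one explicit seed for which the $4\times 4$ Gram matrix is full rank.

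\textbf{Circulancy.} As in the translation proof, I would show that applying the same rotation to both arguments leaves the kernel invariant, $K_\text{FC}(g_R.x, g_R.x') = K_\text{FC}(x,x')$. Applying $g_R^{-i}$ to both entries of $K_{ij} = K_\text{FC}(g_R^i.x_s, g_R^j.x_s)$ then forces $K_{ij}$ to depend only on $(j-i)\bmod 4$, which is exactly circulancy. Starting from
\begin{align*}
  K_\text{FC}(g_R.x, g_R.x') = \sum_{i_x,i_y}\sum_k \Big\langle \phi(B_{k,i_x,i_y}\cdot g_R.x),\, \phi(B_{k,i_x,i_y}\cdot g_R.x')\Big\rangle_\Theta,
\end{align*}
I would transfer the group action onto the filter via $B_{k,i_x,i_y}\cdot g_R.x = (g_R^{-1}.B_{k,i_x,i_y})\cdot x$, so that each filter becomes a $90^\circ$-rotated copy of $B_k$ recentered at the rotated grid location $g_R^{-1}.(i_x,i_y)$.

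This is the one step that differs from the translation argument, and the part I expect to be the main obstacle. For translations, $g_T^{-1}.B_{k,i_x,i_y}$ is merely another member of the same filter family, $B_{k,\,i_x-1,\,i_y}$, so the outer sum over locations trivially reindexes. For rotations, $g_R^{-1}$ both permutes the location index (a bijection of the grid, so the location sum can still be reindexed) \emph{and rotates the $3\times 3$ filter pattern itself}. To close the argument I would exploit that the kernel is an expectation over the random weights $\Theta$ with the filter entries drawn i.i.d.: a $90^\circ$ rotation merely permutes the nine filter entries, so the law of the filter bank is invariant under this rotation, and renaming the rotated filter as the integration variable absorbs $g_R^{-1}$. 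After this change of variables in the weight integral and the reindexing of locations, the expression collapses to $K_\text{FC}(x,x')$, proving circulancy. Note this invariance argument is exactly what makes the result hold for a \emph{mismatched} symmetry.

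\textbf{Non-constancy and non-rank-deficiency.} For non-constancy I would compute $K_\text{FC}(x, g_R.x)$ and, after moving $g_R$ onto the filter, observe a mismatch inside each inner product between the unrotated filter acting on $x$ and the rotated filter acting on $x$, with no change of variables restoring the diagonal form $K_\text{FC}(x,x)$; hence the kernel is in general not constant over the orbit. For non-rank-deficiency it suffices to display a single witness seed: taking $x_s$ to be a near-Dirac image with one active pixel, placed so that its four $C_4$-rotations lie farther apart than the $3\times 3$ filter support, the receptive-field patches at distinct orbit points never overlap, so the embeddings are orthogonal in expectation and the $4\times 4$ Gram matrix is a positive multiple of the identity, which is full rank. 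Since circulancy holds independently of the seed, this one example establishes that the matrix is in general neither constant nor rank-deficient.
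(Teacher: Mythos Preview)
Your proposal is correct and follows essentially the same approach as the paper: you establish circulancy by showing $K_\text{FC}(g_R.x,g_R.x')=K_\text{FC}(x,x')$ via reindexing the spatial sum and absorbing the filter rotation into the i.i.d.\ weight distribution (the paper makes the identical observation that $B_k'=g_R^{-1}.B_k$ has the same law as $B_k$), you argue non-constancy from the irreducible mismatch in $K_\text{FC}(x,g_R.x)$, and you exhibit non-rank-deficiency through a Dirac-like seed whose rotated copies have disjoint receptive fields, which is precisely the witness the paper invokes.
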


\begin{proof}
Now we show that the kernel of a fully connected convolutional network is circulant over an orbit of image rotations. Consider a pair of images from this orbit \( (x, x') \in O_R \). We show next that applying the same image rotation $g_R$ to both these images leaves the kernel unaffected: 
\begin{align*}
    K_\text{FC}(g_R.x,g_R.x') 
    &= \sum_{i_x, i_y} \sum_k \Big\langle
    \phi(B_{k,i_x,i_y} \cdot g_R.x), \phi(B_{k,i_x,i_y} \cdot g_R.x)
    \Big\rangle_\Theta \\
    &= \sum_{i_x, i_y} \sum_k \Big\langle
    \phi(g_R^{-1}.B_{k,i_x,i_y} \cdot x), \phi(g_R^{-1}.B_{k,i_x,i_y} \cdot x)
    \Big\rangle_\Theta
\end{align*}

Applying a rotation to image $x$ is equivalent to applying a change of index to the convolutional filters $B_{k,i_x,i_y}$ jointly with rotating the filters. We denote $B'_{k,n-i_y,i_x}$ = $g_R^{-1}.B_{k,i_x,i_y}$.
\begin{align*}
    K_\text{FC}(g_R\cdot x,g_R\cdot x') 
    &= \sum_{i_x, i_y} \sum_k \Big\langle
    \phi(B'_{k,n-i_y,i_x} \cdot x), \phi(B'_{k,n-i_y,i_x}  \cdot x)
    \Big\rangle_\Theta
\end{align*}

We operate the changes of variable with renaming $i_x \gets n-i_y$, $i_y \gets i_x$. 
\begin{align*}
    K_\text{FC}(g_R.x,g_R.x') &= \sum_{i_x, i_y} \sum_k \Big\langle
    \phi(B_{k,i_x,i_y}' \cdot x), \phi(B_{k,i_x,i_y}' \cdot x)
    \Big\rangle_\Theta
\end{align*}

Assuming that the weights $\Theta$ of the filters $B_k$ are drawn from a random normal distribution \( B_k \sim \mathcal{N}(\mathbf{0}, \mathbf{I}) \), then $B_k'=g_R^{-1}.B_k$ will have the same distribution of weights \( B_k' \sim \mathcal{N}(\mathbf{0}, \mathbf{I}) \). Thus:
\begin{align*}
    K_\text{FC}(g_R.x,g_R.x') &= K_\text{FC}(x,x')
\end{align*}

This concludes our proof that the kernel of a fully connected convolutional network is circulant over an orbit of image rotations.

Another question of interest is whether the kernel is constant or rank-deficient over pairs of images belonging to the same orbit. Consider an image $x$ and its transformation $g_R.x$.
\begin{align*}
    K_\text{FC}(x,g_R.x) 
    &= \sum_{i_x, i_y} \sum_k \Big\langle
    \phi(B_{k,i_x,i_y} \cdot x), \phi(B_{k,i_x,i_y} \cdot g_R.x)
    \Big\rangle_\Theta\\
    &= \sum_{i_x, i_y} \sum_k \Big\langle
    \phi(B_{k,i_x,i_y} \cdot x), \phi(g_R^{-1}.B_{k,i_x,i_y} \cdot x)
    \Big\rangle_\Theta\\
    &= \sum_{i_x, i_y} \sum_k \Big\langle
    \phi(B_{k,i_x,i_y} \cdot x), \phi(B'_{k,n-i_y,i_x} \cdot x)
    \Big\rangle_\Theta
\end{align*}

There is no change of variable that could make this kernel equal to $K_\text{FC}(x,x)$. In general, 
\begin{align*}
    K_\text{FC}(x,g_R.x) \neq K_\text{FC}(x,x)
\end{align*}

In other words, the fully connected network does not produce invariance to the rotation transformation. Its kernel over an orbit is in general not constant.

A similar argument as in the case of shifts can be made to prove that the kernel is not, in general, rank-deficient.
This can be proven by choosing a ``Dirac-like" image whose only active pixel is not at the center of the image.
\end{proof}

\begin{proposition}
The kernel matrix of a global average pooling convolutional network $K_\text{GAP}$ over a rotation orbit $O_{R}$ is circulant, but in general not rank-deficient or constant.
\end{proposition}

\begin{proof}
We study how changing the last layer from a fully connected to a global average pooling layer affects the results of the previous proposition.  
\begin{align*}
    K_\text{GAP}(g_R.x,g_R.x') 
    &= \sum_{\substack{i_x, i_y \\ i_x', i_y'}} \sum_k \Big\langle
    \phi(B_{k,i_x,i_y} \cdot g_R.x), \phi(B_{k,i_x',i_y'} \cdot g_R.x')
    \Big\rangle_\Theta \\
    &= \sum_{\substack{i_x, i_y \\ i_x', i_y'}} \sum_k \Big\langle
    \phi(g_R^{-1}.B_{k,i_x,i_y} \cdot x), \phi(g_R^{-1}.B_{k,i_x',i_y'} \cdot x')
    \Big\rangle_\Theta \\
    &= \sum_{\substack{i_x, i_y \\ i_x', i_y'}} \sum_k \Big\langle
    \phi(B'_{k,n-i_y,i_x} \cdot x), \phi(B'_{k,n-i_y',i_x'} \cdot x')
    \Big\rangle_\Theta
\end{align*}
where we denote $B'_{k,n-i_y,i_x}$ = $g_R^{-1}.B_{k,i_x,i_y}$ the filter rotated and displaced by the rotation. 
Changing variables as $i_x \gets n-i_y$, $i_y \gets i_x$, $i_x' \gets n-i_y'$, $i_y' \gets i_x'$, we get
\begin{align*}
    K_\text{GAP}(g_R.x,g_R.x') 
    &= \sum_{\substack{i_x, i_y \\ i_x', i_y'}} \sum_k \Big\langle
    \phi(B'_{k,i_x,i_y} \cdot x), \phi(B'_{k,i_x',i_y'} \cdot x')
    \Big\rangle_\Theta \\
    &= K_\text{GAP}(x,x') 
\end{align*}
The kernel of this network is thus circulant on a single orbit, as before with the fully connected convolutional network. 

We now compute the kernel value for pairs of images belonging to the same orbit, in order to check whether the kernel is constant over an orbit:

\begin{align*}
    K_\text{GAP}(x,g_R.x) 
    &= \sum_{\substack{i_x, i_y \\ i_x', i_y'}} \sum_k \Big\langle
    \phi(B_{k,i_x,i_y} \cdot x), \phi(B_{k,i_x',i_y'} \cdot g_R.x)
    \Big\rangle_\Theta \\
    &= \sum_{\substack{i_x, i_y \\ i_x', i_y'}} \sum_k \Big\langle
    \phi(B_{k,i_x,i_y} \cdot x), \phi(g_R^{-1}.B_{k,i_x',i_y'} \cdot x)
    \Big\rangle_\Theta \\
    &= \sum_{\substack{i_x, i_y \\ i_x', i_y'}} \sum_k \Big\langle
    \phi(B_{k,i_x,i_y} \cdot x), \phi(B'_{k,n-i_y',i_x'} \cdot x)
    \Big\rangle_\Theta
\end{align*}
Changing variables as $i_x' \gets n-i_y'$, $i_y' \gets i_x'$, we get:
\begin{align*}
    K_\text{GAP}(x,g_R.x) 
    &= \sum_{\substack{i_x, i_y \\ i_x', i_y'}} \sum_k \Big\langle
    \phi(B_{k,i_x,i_y} \cdot x), \phi(B'_{k,i_x',i_y'} \cdot x)
    \Big\rangle_\Theta \\
    &\neq K_\text{GAP}(x,x) 
\end{align*}

In general, the kernel is not constant over an orbit.

To prove that this kernel over an orbit is in general not rank deficient, one can resort to using a seed image that is the sum of two ``Dirac-like'' images, each with a single nonzero pixel, and both of them in an area contained within the filter size.
\end{proof}

\section{Varying the Number of Angles in an Orbit}\label{app:many_angles}
We report here plots comparing spectral and exact NTK errors for varying number of rotations for a 1 hidden layer MLP. 

\begin{figure}[H]
    \centering
    \includegraphics[width=0.8\linewidth]{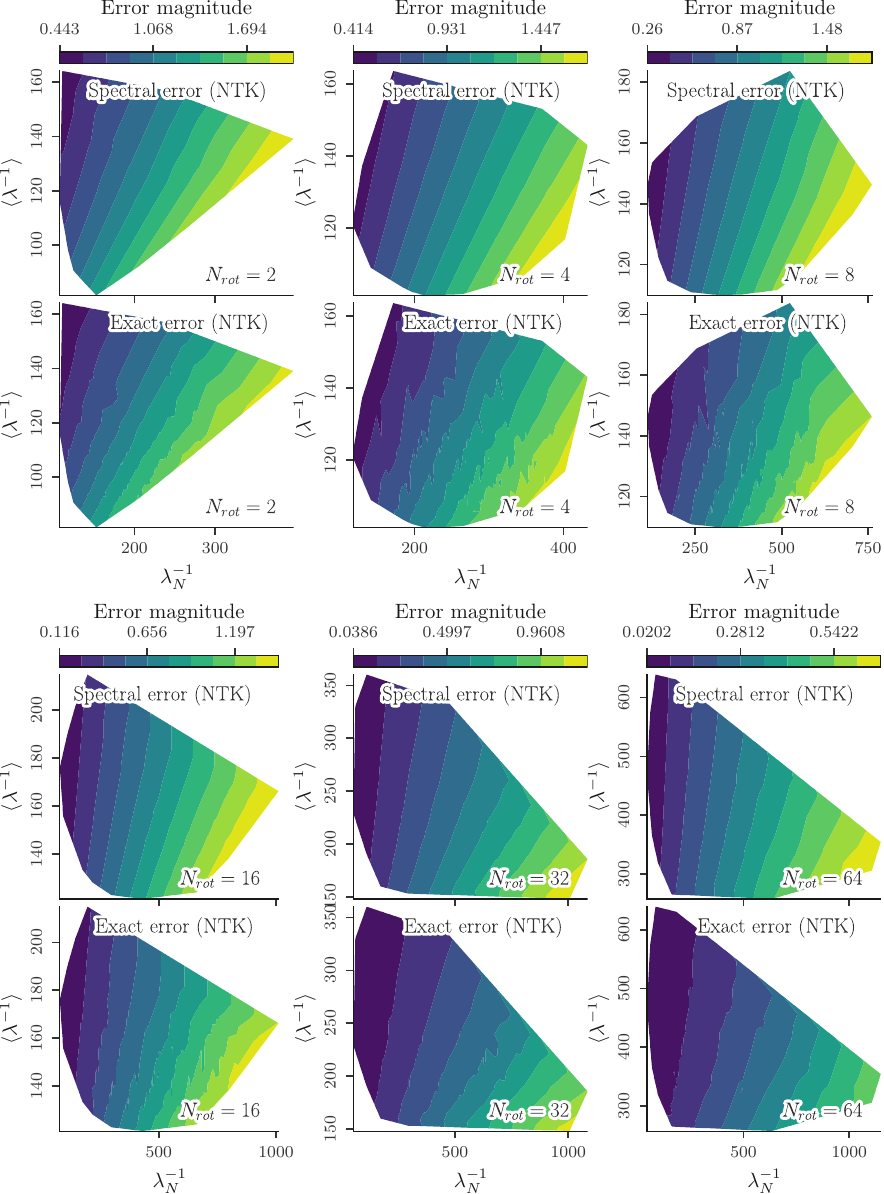}
    \caption{Comparison of spectral and exact NTK errors across a range of values for $\lambda_N$ and $\langle \lambda^{-1}\rangle$.}
    \label{fig:all_errors}
\end{figure}

\section{Further MLP Analyses}\label{app:more_mlps}
We report here plots of the NTK analysis for a deeper MLP (5 hidden layers, Fig.~\ref{fig:mlp_5}), and a MLP that is trained via Adam (Fig.~\ref{fig:mlp_train}).
\begin{figure}[H]
    \centering
    \includegraphics[width=0.85\linewidth]{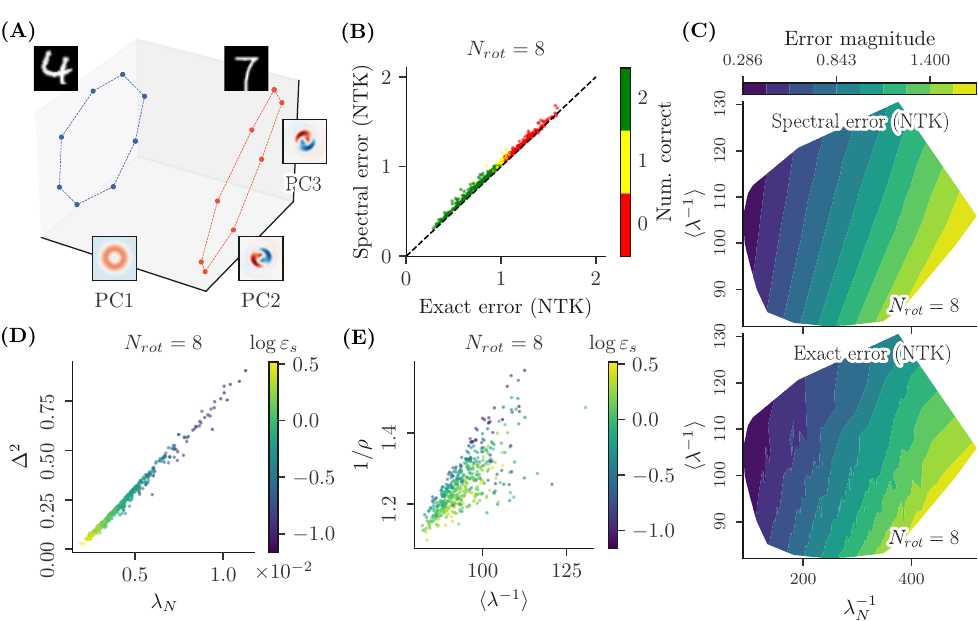}
    \caption{MLP, 5 hidden layers.}
    \label{fig:mlp_5}
\end{figure}

\begin{figure}[H]
    \centering
    \includegraphics[width=0.85\linewidth]{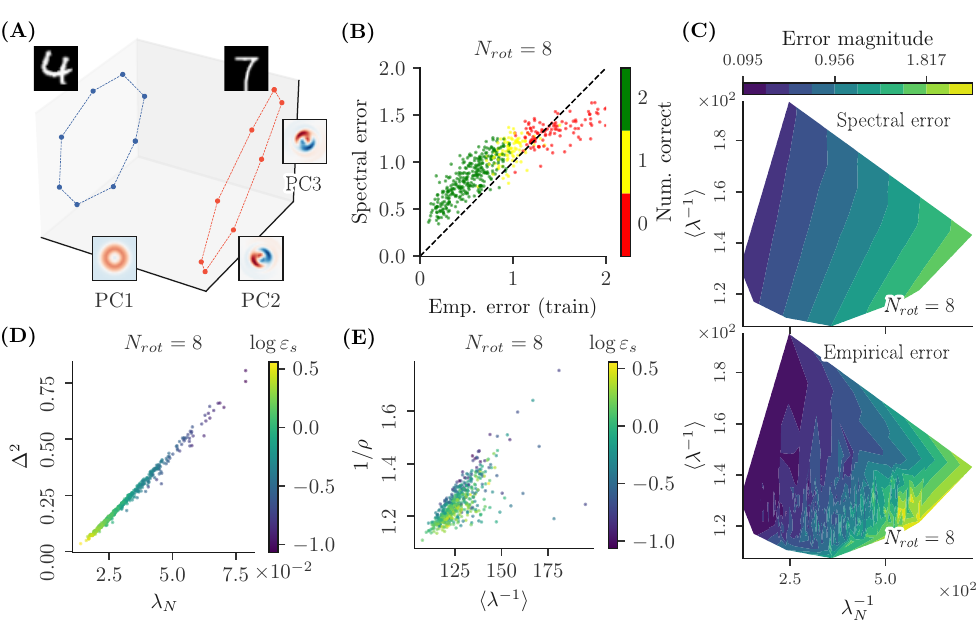}
    \caption{Trained MLP.}
    \label{fig:mlp_train}
\end{figure}

\section{Multiple Seeds, Multiple Classes - Additional Figures}\label{app:multi_plots}
\begin{figure}[H]
    \centering
    \includegraphics[width=\linewidth]{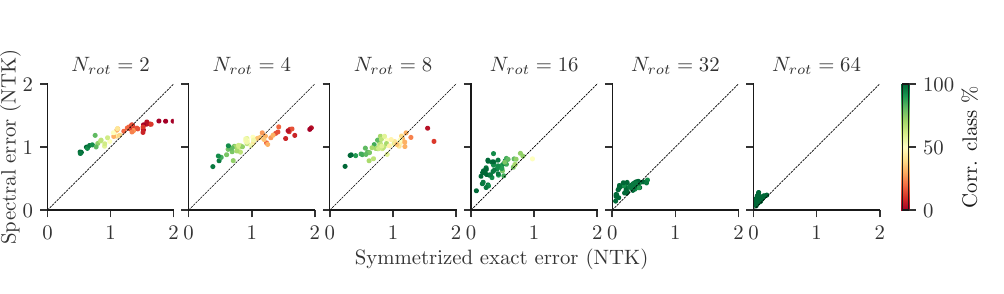}
    \caption{\textbf{Comparison of spectral and exact NTK errors for the multiple seeds case.}
    We compare the spectral error, averaged over all the pairings of orbits for a given number of pairs, and the symmetrized NTK prediction error, over a range of number of rotations in the orbits, $N_{rot}$.
    We superimpose the bisector as a visual reference.
    The color coding reflects the percentage of seeds (of both classes) for which the NTK regression gives a correct prediction, understood as agreeing with the +1 label of the missing points.
    As the number of points in the orbits increases, both errors decrease as expected, and the percentage of correct NTK predictions increases.
    }
    \label{fig:multiseed}
\end{figure}

\begin{figure}[H]
    \centering
    \includegraphics[width=\linewidth]{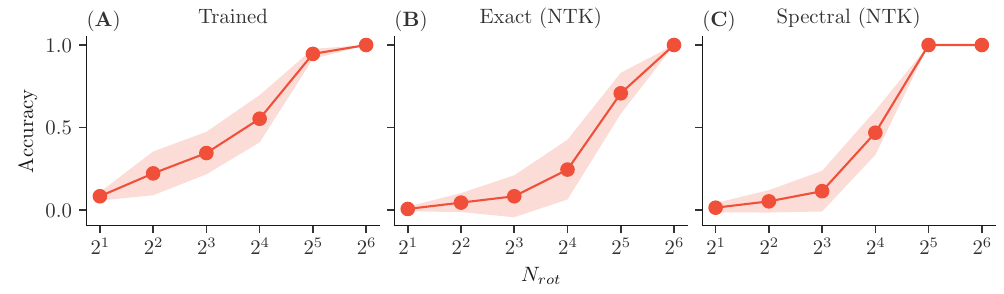}
    \caption{
    \textbf{Our spectral method correctly reproduces the generalization behavior of a trained finite-width MLP on rotated-MNIST, as we vary the number of sampled angles.}
    On a version of rotated-MNIST comprising all 10 classes and 13 seed images per class, we compare (A) a cross-entropy trained MLP (2 hidden layers) (B) a one-versus-all NTK regression strategy for this same architecture (we exclude one angle from one class and use NTK regression on the missing points, assuming label +1 for this class and label -1 for all other classes, repeat over all leave-out classes and average classification accuracy over all missing points), and (C) our multi-class adapted spectral error (see text).
    As the number of sampled angles in the orbits increases, the accuracies of all methods increase similarly and gradually on the missing points, showing that no mechanism for symmetry learning is present for the finite-width network that would be missing from our spectral theory, or from simple NTK regression.
    }
    \label{fig:multiclass}
\end{figure}

\section{Further Convolutional NTK (CNTK) Analyses}\label{app:more_cntks}
\subsection{Rotation Orbits}\label{app:more_cntks_rots}

\begin{figure}[H]
    \centering
    \includegraphics[width=0.85\linewidth]{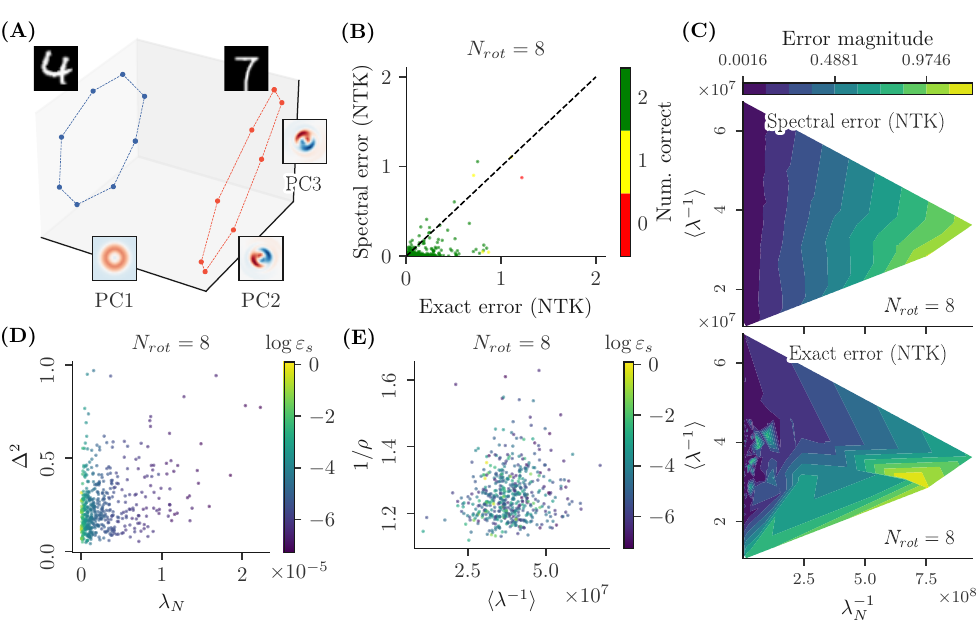}
    \caption{CNTK, Global Average Pooling, on rotation orbit.}
    \label{fig:cntk_gap}
\end{figure}

\begin{figure}[H]
    \centering
    \includegraphics[width=0.85\linewidth]{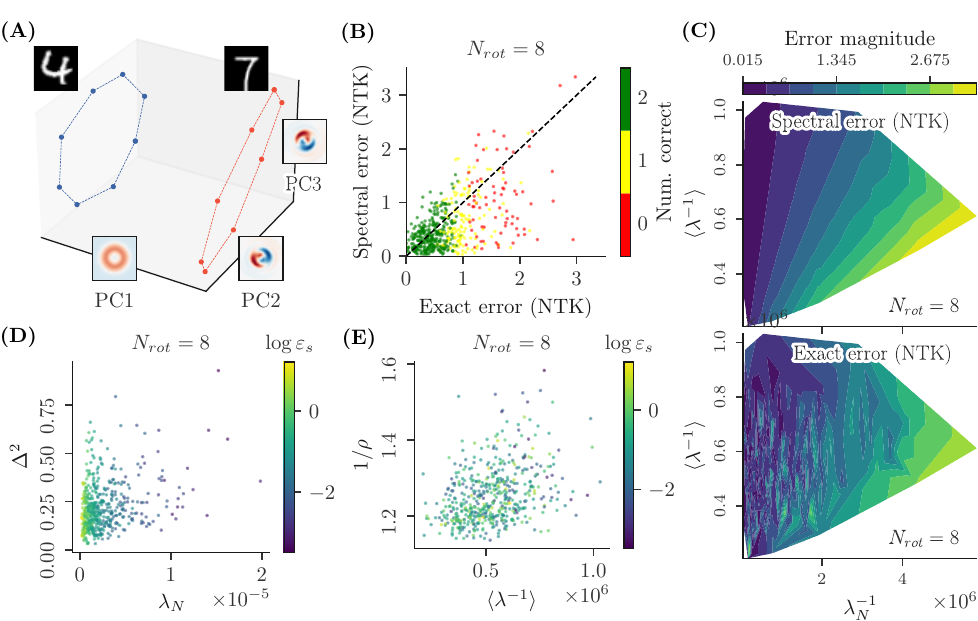}
    \caption{CNTK, Global Average Pooling, on rotation orbit. Kernel size (4, 4), strides (4, 4).}
    \label{fig:cntk_gap_fail}
\end{figure}

\subsection{Translation Orbits}\label{app:more_cntks_shifts}
\begin{figure}[H]
    \centering
    \includegraphics[width=0.85\linewidth]{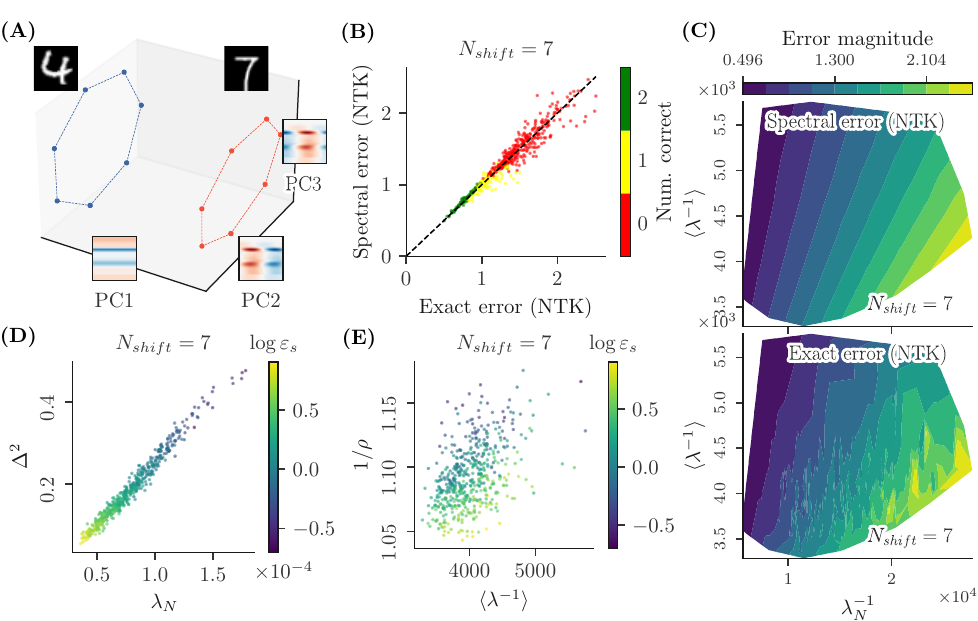}
    \caption{CNTK, Fully Connected, on translation orbit.}
    \label{fig:cntk_fc_shift}
\end{figure}

\begin{figure}[H]
    \centering
    \includegraphics[width=0.85\linewidth]{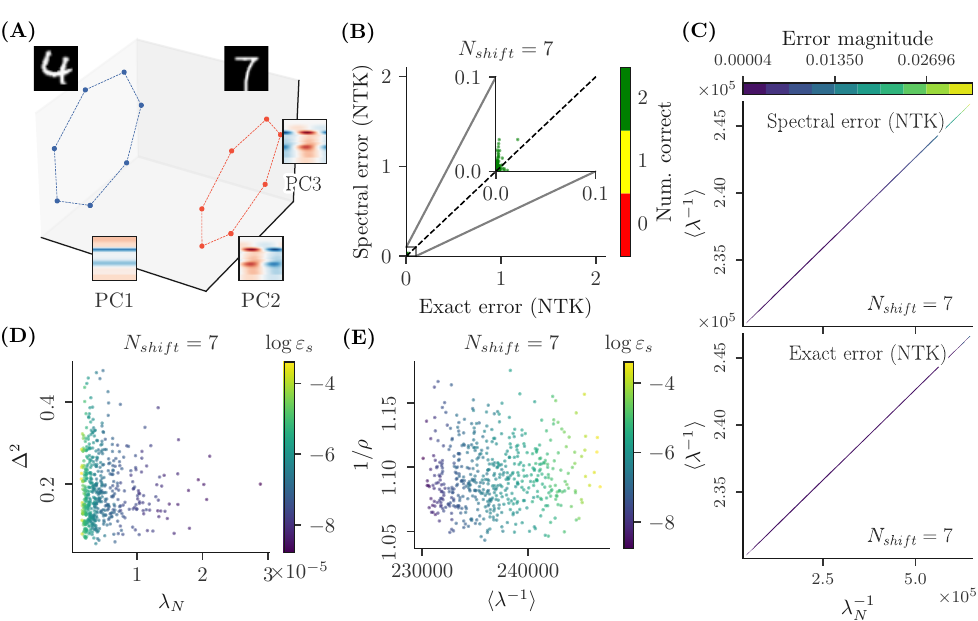}
    \caption{CNTK, Global Average Pooling, on translation orbit.}
    \label{fig:cntk_gap_shift}
\end{figure}

\end{document}